\newacronym{RKHS}{RKHS}{reproducing kernel Hilbert space}
\newacronym{GP}{GP}{Gaussian process}
\newacronym{BvM}{BvM}{\textit{Bernstein-von Mises}}
\newacronym{CRB}{CRB}{Cram\'{e}r-Rao bound}
\newacronym{FDT}{\textbf{FDT}s}{featurized decision trees}
\newacronym{DNN}{\textbf{DNN}}{deep neural network}
\newacronym{AUROC}{AUROC}{\textit{area under the receiver operating characteristic}}
\newacronym{RF}{RF}{Random Forests}
\newacronym{impurity}{\textbf{RF-impurity}}{\textit{impurity}}
\newacronym{knockoff}{\textbf{RF-knockoff}}{\textit{RF-based kernel knockoff}}
\newacronym{BART}{\textbf{BART}}{\textit{Bayesian additive regression trees}}
\newacronym{RFNN}{\textbf{RFNN}}{\textit{Random-feature Neural Networks}}
\newacronym{BKMR}{\textbf{BKMR}}{\textit{Bayesian Kernel Machine Regression}}
\newacronym{BAKR}{\textbf{BAKR}}{\textit{Bayesian Approximate Kernel Regression}}
\newacronym{GAM}{\textbf{GAM}}{\textit{Generalized Additive Model}}
\newacronym{BRR}{\textbf{BRR}}{\textit{Bayesian Ridge Regression}}
\newacronym{BL}{\textbf{BL}}{\textit{Bayesian LASSO}}
\theoremstyle{plain}
\newtheorem{theorem}{Theorem}
\newtheorem{proposition}[theorem]{Proposition}
\newtheorem{lemma}[theorem]{Lemma}
\theoremstyle{definition}
\newtheorem{definition}{Definition}
\theoremstyle{remark}
\newtheorem{remark}{Remark}
\newcommand{\wdeng}[1]{{\color{blue} (\textbf{wdeng:} #1)}}
\title{Towards a Unified Framework for Uncertainty-aware Nonlinear Variable Selection with Theoretical Guarantees
}
\author{%
  Wenying Deng \\
  Harvard University\\
  \texttt{wdeng@g.harvard.edu} \\
 \And
  Beau Coker \\
  Harvard University\\
 \texttt{beaucoker@g.harvard.edu} \\
 \AND
  Rajarshi Mukherjee\\
 Harvard University \\
 \texttt{ram521@mail.harvard.edu} \\
 \And
 Jeremiah Zhe Liu\thanks{Co-senior author. Work done at Harvard University.}\\
 Harvard University \& Google Research\\
 \texttt{jereliu@google.com} \\
 \AND
 Brent A. Coull$^*$ \\
 Harvard University\\
 \texttt{bcoull@hsph.harvard.edu} \\
}
\begin{document}

\maketitle

\begin{abstract}
 We develop a simple and unified framework for nonlinear variable selection that incorporates uncertainty in the prediction function and is compatible with a wide range of machine learning models (e.g., tree ensembles, kernel methods, neural networks, etc). In particular, for a learned nonlinear model $f(\mathbf{x})$, we consider quantifying the importance of an input variable $\mathbf{x}^j$ using the integrated partial derivative $\Psi_j = \Vert \frac{\partial}{\partial \mathbf{x}^j} f(\mathbf{x})\Vert^2_{P_\mathcal{X}}$. We then (1) provide a principled approach for quantifying variable selection uncertainty by deriving its posterior distribution, and (2) show that the approach is generalizable even to non-differentiable models such as tree ensembles. Rigorous Bayesian nonparametric theorems are derived to guarantee the posterior consistency and asymptotic uncertainty of the proposed approach. Extensive  simulations and experiments on healthcare benchmark datasets confirm that the proposed algorithm outperforms existing classic and recent variable selection methods.
\end{abstract}

\vspace{-1em}
\section{Introduction}
Variable selection is often of fundamental interest in many data science applications, providing benefits in prediction error, interpretability, and computation by excluding unnecessary variables. 
As datasets grow in complexity and size, it is crucial that variable selection methods can account for complex dependencies among variables while remaining computationally feasible. Furthermore, as the number of approaches to model such datasets have increased, it is crucial that the importance of each variable can be compared across model classes and extended to new ones as they are developed.

While there are established approaches for variable selection in linear models (e.g., LASSO regression \cite{hastie_statistical_2015}), there is little consensus 
in methodology or theory for variable selection in nonlinear models.
Generalized additive models \cite{hastie_generalized_1990} use similar variable selection methods as their linear counterparts \cite{wang_gams_2014}, but the additivity assumption for nonlinear functions of the variables is too restrictive in many applications. Random Forests (RF) \cite{breiman_random_2001} measure variable importance using an impurity measure, which is based on the average reduction of the loss function were a given variable be removed from the model. \cite{friedman_greedy_2001} extended this method to boosting, where the variable importance is generalized by considering the average over all of the decision trees. Deep neural networks (DNNs) are widely-used for many artificial intelligence applications, and a substantial effort has been invested into developing DNNs with variable selection capabilities. Typically, this class of models involves manipulating the input layer, for example by imposing an $L_1$ penalty \cite{castellano_variable_2000, feng_sparse-input_2019}, using backward selection \cite{castellano_variable_2000}, or knockoffs \cite{lu_deeppink_2018}.
Unfortunately, each model class based on DNNs requires a tailored variable selection procedure, which limits comparability across different model formulations. 

Bayesian variable selection methods provide principled uncertainty quantification in variable importance estimates as well as a complete characterization of their dependency structure. These methods allow the variable selection procedure to tailor its decision rule with respect to the correlation structure \cite{liu_variable_2021}. Yet, as in frequentist models, each method has a different definition of a variable's importance. For example, in Bayesian additive regression trees (BART), a variable's importance can be measured by the proportion of trees that use it \cite{chipman_bart_2010}, while in \gls{GP} models, a variable's importance can be measured by the frequency of the fluctuations of the resulting function (e.g., the length-scale parameter as controlled by the automatic relevance determination) in the direction of the variable \cite{neal_bayesian_1996, wipf2007new}. Furthermore, the traditional Bayesian modeling procedures tend to be computationally burdensome, making them less feasible for large-scale applications
\citep{andrieu2003introduction}.

Our work starts with the observation that many machine learning models can be written as kernel methods by constructing a corresponding feature map. For example, random forests can be written as kernel methods by partitions \cite{davies2014random}, and deep neural networks can be written as kernel methods by using the last hidden layer as the feature map \cite{snoek_bayesopt_2015, hinton2007using, calandra2016manifold}. Each of these feature maps can be constructed before the Bayesian learning of the Gaussian process (e.g., by pre-training on the same or a separate dataset), providing additional modeling expressiveness and representational capacity. The ability of a \gls{GP} model to incorporate these adaptive feature maps becomes especially important in high-dimension applications, where effective dimension reduction is necessary to circumvent the curse of dimensionality and to ensure good finite-sample performance \citep{bach_breaking_2016}.


\textbf{Contributions}. We propose a unified variable selection framework that is compatible with a wide range of machine learning models that can be defined by, or be closely approximated by, a differentiable feature map. Notable members include neural networks and random forests (\cref{sec:feature_gp_examples}). Our approach defines variable importance as the norm of the function's partial derivative, as was previously studied in the context of frequentist nonparametric regression \cite{rosasco_nonparametric_2013}. We extend it to apply to a much wider class of models than previously considered (\cref{sec:prelim}), propose a principled Bayesian approach to quantify the variable selection uncertainty in finite data (\cref{sec:varimp}), and derive rigorous Bayesian nonparametric theorems to guarantee the method's consistency and asymptotic optimality (Section \ref{sec:theory}). 
To incorporate powerful non-differentiable models into our framework, we also show how to apply this approach to partition-based methods (e.g., decision trees) by leveraging its (soft) feature representation (\cref{sec:fdt}). 
This leads to the first derivative-based Bayesian variable selection approach for tree-type models that is both theoretically grounded and empirically powerful, strongly outperforming other variable selection approaches tailor-designed for random forest (e.g., impurity or random-forest knockoff \citep{breiman_classification_1984, candes_panning_2017}).
We conduct extensive empirical validation of our approach and compare its performance to that of many existing methods across a wide range of data generation scenarios. Results show a clear advantage of the proposed approach, especially in complex scenarios or when the input is a mixture of discrete and continuous features (Section \ref{sec:exp}).

\vspace{-1.em}
\section{Preliminaries}
\label{sec:prelim}
\vspace{-.5em}
\textbf{Problem Setup}. We consider the classic nonparametric regression setting with $d$-dimensional features $\bx=(\bx^1, \dots, \bx^d) \in \Xsc = \real^d$ and a continuous response $y \in \real$. The features $\bx$ are allowed to have a flexible nonlinear effect on $y$, such that:
\begin{align}
y = f_0(\bx) + e_i, \quad \text{where } e_i \overset{i.i.d.}{\sim} \mathcal{N}(0, \sigma^2), 
\label{data}
\end{align}
with homoscedastic noise level $\sigma^2$. The data dimension $d$ is allowed to be large but assumed to be constant and does not grow with the sample size $n$. 
Here the data-generating function $f_0$ is a flexible nonlinear function that resides in a \gls{RKHS} $\Hsc_0:\Xsc_0 \rightarrow \real$ induced by a certain positive definite kernel function $k_0$, and the input space of the true function $\Xsc_0$ spans only a small subset of the input features $(\bx^1, \dots, \bx^d)$, i.e., $\Xsc_0 \subset \Xsc$.

To this end, the goal of \textit{global} variable selection is to produce a variable importance score $\psi_j$ for each of the input features $(\bx^1, \dots, \bx^d)$ such that it can be used as a classification signal for whether $\bx^j \in \Xsc_0$. As a result, the variable selection decision can be made by threhsolding $\psi_j >s$ with a pre-defined threshold $s$. The quality of a variable selection signal $\psi_j$ can be evaluated comprehensively using a standard metric such as the \gls{AUROC}, which measures the Type-I and Type-II errors of variable selection decision $I(\psi_j > s)$ over a range of thresholds $s$.

\vspace{-.5em}
\subsection{Quantifying Model Uncertainty via Featurized \gls{GP}}
\label{sec:gp}
\vspace{-.5em}
In the nonlinear regression scenario given by Equation (\ref{data}), a classic approach to uncertainty-aware model learning is the Gaussian process (GP). Specifically, assuming that $f_0$ can be described by a flexible \gls{RKHS} $\Hsc_k$ governed by the kernel function $k$, the \gls{GP} model imposes a Gaussian process prior $f \sim \Gsc \Psc(0, k)$, such that the function evaluated at any collection of examples follows a multivariate normal ($\mathcal{MVN}$) distribution
\begin{align*}
\mathbf{f} \equiv (f(\bx_1), \ldots, f(\bx_n))^\top \sim \mathcal{MVN}(\bm_{n \times 1}, \bK_{n \times n}),
\end{align*}
with mean $\bm_i = m(\bx_i)$ and covariance matrix $\bK_{i, j}=k(\bx_i, \bx_j)$. The choice of the prior mean $\bm$ and kernel $k$ enable prior specification directly in function space. For example, the Mat\'{e}rn kernel places a prior over $\lceil\nu\rceil-1$ times differentiable functions, with length-scale $l^2$ and amplitude variance $\sigma^2$. As $\nu\to\infty$, this reduces to the common radial basis function (RBF) kernel $k(\bx_i, \bx_j) = \sigma^2 \exp(\|\bx_i - \bx_j \|^2_2 / l^2)$. 


Under the above construction, the posterior predictive distribution of $f$ evaluated at new observations $\bx^*_1,\dotsc,\bx^*_{n^*}$ is also a multivariate normal,
\begin{align}
\label{eq:gp_dual}
\mathbf{f}^*  |\{\bx_i\, y_i\}_{i=1}^n &\sim \mathcal{MVN}(\mathbb{E}[\mathbf{f}^*], \text{Cov}[\mathbf{f}^*]),
\quad \mbox{where}
\\ 
\nonumber
\mathbb{E}[\mathbf{f}^*] = \bm^* + \bK^* (\bK + \sigma^2\bI_n)^{-1}&(\by-\bm);
\quad 
\text{Cov}[\mathbf{f}^*] = \bK^{**} - \bK^*(\bK + \sigma^2 \bI_n)^{-1}\bK^{*\top},
\end{align}
with $\bm^*_i = m(\bx_i^*)$, $\bK^*_{ij}=k(\bx^*_i, \bx_j)$, and $\bK^{**}_{ij}=k(\bx^*_i, \bx^*_j)$.
\cref{eq:gp_dual} is known as the kernel-based representation (or dual representation) of $f$ \cite{rasmussen2006gaussian}. Although mathematically elegant, the posterior (\ref{eq:gp_dual}) is expensive to compute due to the need to invert the $n \times n$ matrix $(\bK + \sigma^2\bI)^{-1}$.

\textbf{Feature-based Representation of GP}.
Alternatively, Mercer's theorem \cite{cristianini_introduction_2000} states that as long as the kernel function $k(\cdot, \cdot)$ can be written as the inner product of a set of basis functions $\phi(\bx)=\{\phi_k(\bx)\}_{k=1}^D$, such that $k(\bx, \bx')=\phi(\bx)^\top \phi(\bx')$, then elements of the \gls{RKHS} $f \in \Hsc_k$ can be written in terms of a linear expansion of basis functions \cite{rasmussen2006gaussian}:
\begin{align}
    f(\bx) = \sum_{k=1}^D \bbeta_k \phi_k(\bx)=\phi(\bx)^\top \bbeta, \text{   where } \bbeta \sim \mathcal{MVN}(\bmu, \bI_{D}).
    \label{eq:gp_primal}
\end{align}
This is known as the feature-based representation (or primal representation) of a Gaussian process. Notice that (\ref{eq:gp_primal}) is not an approximation method but an \textit{exact} reparametrization of the Gaussian process model whose kernel function is induced by feature representation $\phi(\bx)$. 

\textbf{Scalable Posterior Computation via Minibatch Updates}.
The above feature-based representation is powerful in that it reduces the \gls{GP} posterior inference into a Bayesian linear regression problem for $\bbeta$. This brings two concrete benefits. First, the posterior of $\bbeta$ in \cref{eq:gp_primal} adopts a closed-form:
\begin{align}
\label{eq:primal_coef_post}
&\bbeta  \sim \mathcal{MVN}(\mathbb{E}[\bbeta], \text{Cov}[\bbeta]),
\quad \mbox{where} 
\\ 
\nonumber
\mathbb{E}[\bbeta] 
&= \bmu + \Sigma_{\bbeta}\Phi^\top(\by - \Phi\bmu)/\sigma^2;
\quad 
\text{Cov}[\bbeta] = \Sigma_{\bbeta} = (\Phi^\top\Phi/\sigma^2 + \bI)^{-1},
\end{align}

where $\Phi = (\phi(\bx_1)^\top,\dotsc,\phi(\bx_n)^\top)^\top  \in \mathbb{R}^{n\times D}$ is the feature matrix evaluated on the training data \cite{rasmussen2006gaussian}. For large-scale applications, \cref{eq:primal_coef_post} enables us to compute the exact posterior of $\bbeta$ in a mini-batch fashion. For example, the posterior matrix $\text{Cov}[\bbeta] = \Sigma_{\bbeta}$ can be updated using the Woodbury identity:
\begin{align}
&\Sigma_{\bbeta, t+1}
= \Sigma_{\bbeta, t}-\Sigma_{\bbeta, t}\Phi_m^\top 
(\sigma^2 \bI+\Phi_m\Sigma_{\bbeta, t}\Phi_m^\top)^{-1}\Phi_m\Sigma_{\bbeta, t}.
\label{eq:primal_cov_post}
\end{align}
where $\Phi_m$ is the $D$-dimension batch-specific feature matrix evaluated on the mini-batch. Similarly, the posterior mean $\mathbb{E}[\bbeta]$ can be computed by accumulating the $D \times 1$ vector $\Phi^\top(\by - \Phi\bmu)=\sum_m \Phi_m^\top(\by_m - \Phi_m\bmu)$, and compute the posterior mean according to \cref{eq:primal_coef_post} at the end.


The posterior distribution of $\bbeta$ induces a Gaussian process posterior for the prediction function $\mathbf{f}^* = \Phi^* \bbeta$, where $\Phi^*$ is the feature map evaluated on the test data, with mean $\mathbb{E}[\mathbf{f}^*]=\Phi^*\bmu + \Phi^*\Sigma_{\bbeta}\Phi^\top(\by - \Phi\bmu)/\sigma^2$ and covariance $\text{Cov}[\mathbf{f}^*] = \Phi^*\Sigma_{\bbeta}\Phi^{* \top}$. This distribution is equivalent to the kernel-based representation (\ref{eq:gp_dual}) but reduces the computational complexity from cubic time $O(n^3)$ to a linear time $O(n)$ and is minibatch compatible (i.e., \cref{eq:primal_cov_post}).

\textbf{Incorporating Modern ML Model Classes}. The second key advantage of the feature-based representation (\ref{eq:gp_primal}) is its generality: a wide range of machine learning models can be written in term of the feature-based form $f(\bx)=\phi(\bx)^\top\bbeta$ \cite{rahimi_random_2007, davies2014random, lee2017deep}, making the Gaussian process a unified framework for quantifying model uncertainty for a wide array of modern ML models. \cref{sec:feature_gp_examples} summarizes important examples including GAMs, decision trees, random-feature models, deep neural networks and their ensembles. Furthermore, when a deterministically-trained $\hat{\bbeta}$ is available (e.g., via a sophisticated adaptive shrinkage procedure that is not available in Bayesian context), we can incorporate this as prior knowledge into \gls{GP} modeling by setting $\bmu = \hat{\bbeta}$ (\cref{eq:gp_primal}).

\subsection{Bayesian Nonparametric Guarantees for Probabilistic Learning}

The quality of a Bayesian learning procedure is commonly measured by the learning rate of its posterior distribution $\Pi_n=\Pi(\cdot \mid \{\bx_i, y_i\}_{i=1}^n)$. Intuitively, the rate of this convergence is measured by the size of the smallest shrinking balls around $f_0$ that contain most of the posterior probability. Specifically, we consider the size of the set $A_n = \{g \mid \Vert g-f_0\Vert_n^2\leq M \epsilon_n\}$ 
such that $\Pi_n(A_n) \to 1$ \citep{ghosal_convergence_2007, polson_posterior_2018}. The concentration rate $\epsilon_n$ here indicates how fast the small ball $A_n$ concentrates towards $f_0$ as the sample size increases. Below we state the formal definition of posterior convergence \cite{ghosal_convergence_2007}.
\begin{definition}[Posterior Convergence]
\label{def:inj}
For $f_0: \Xsc \to \mathbb{R}$ where $\Xsc=\real^d$, denote $\Hsc_0$ the true \gls{RKHS} induced by a kernel function $k_0$, and denote $\Hsc_{\phi}$ induced by the feature function $\phi: \Xsc \rightarrow \real^D$. Let $f_0 \in \Hsc$ be the true function, and
let $\mathbb{E}_0$ denote the expectation with respect to the true data-generation distribution. Assuming $\Hsc_\phi$ is dense in $\Hsc$, then, the posterior distribution $\Pi_n(f)$ concentrates around $f_0$ at the rate $\epsilon_n$ if there exists an $\epsilon_n \to 0$ such that, for any $M_n \to \infty$,
\begin{align}
    \mathbb{E}_0\Pi_n(f: \Vert f - f_0\Vert_n^2 \geq M_n \epsilon_n) \to 0.
    \label{eq:post_converg}
\end{align}
\end{definition}
Notice that we allow the model space $\Hsc_\phi$ and the true function space $\Hsc$ to be different, but the $\Hsc_\phi$ must be \textit{dense} in the $\Hsc$ for the convergence to happen. Fortunately, this condition is shown to hold for a wide variety of ML models, including random features, random forests, and neural networks \citep{biau2012analysis, hornik1989multilayer, rahimi2008uniform, schmidt-hieber_nonparametric_2020, rovckova2020posterior}. The notion of posterior convergence can also be used to discuss the learning quality of other probabilistic estimates (e.g., variable importance $\psi_j$). In that case, we can simply replace $(f, f_0)$ in (\ref{eq:post_converg}) by their variable importance counterparts. This is the focus of \cref{sec:theory}.


\vspace{-.5em}
\section{Methods}
\label{sec:methods}
\vspace{-0.5em}

\subsection{Quantifying Variable Importance under Uncertainty}
\label{sec:varimp}
In this work, we consider quantifying the \textit{global} importance of a variable based on the norm of the corresponding partial derivative. This is motivated by the observation that, if a function $f$ is differentiable, the relative importance of a variable $\bx^j$ at a point $\bx$ can be captured by the magnitude of the partial derivative function, $|\frac{\partial}{\partial \bx^j} f(\bx)|$ \cite{rosasco_nonparametric_2013}.
This proposed quantity requires the consideration of two issues. First, instead of quantifying the relevance of a variable on a single input point, we need to define a proper global notion of variable importance. Therefore, it is natural to integrate the magnitude of the partial derivative over the input space $\bx \in \Xsc: \Psi_{j}(f)=\Vert\frac{\partial}{\partial \bx^j} f\Vert_{P_\Xsc}^{2}=\int_{\bx \in \Xsc}|\frac{\partial}{\partial \bx^j} f(\bx)|^{2}~dP_\Xsc(\bx)$. Second, since $P_\Xsc(\bx)$ is not known from the training observations, $\Psi_{j}(f)$ can be approximated by its empirical counterpart,
\vspace{-1em}
\begin{align}
\psi_{j}(f)=\Vert\frac{\partial}{\partial \bx^j} f\Vert_{n}^{2}=\frac{1}{n} \sum_{i=1}^{n}|\frac{\partial}{\partial \bx^j} f(\bx_{i})|^{2}. 
\label{eq:varimp}
\end{align}
Notice that $\psi_j(f)$ is an estimator that is derived from the prediction function $f$ estimated using finite data. Consequently, to make a proper decision regarding the importance of an input variable $\bx^j$, it is important to take into account uncertainty in $f$. To this end, by leveraging the featurized \gls{GP} representation introduced in Section \ref{sec:varimp}, we show that this can be done easily for a wide range of ML models $f(\bx)=\phi(\bx)^\top\bbeta$ by studying the posterior distribution of $\psi_j$.

\textbf{Posterior Distribution of Variable Importance}. 
After we obtain the posterior distribution of $\bbeta$ (\ref{eq:primal_coef_post}), the posterior distribution of variable importance can be derived according to \cref{eq:varimp}:
\begin{align}
\psi_{j}(f)=\frac{1}{n}|\frac{\partial}{\partial \bx^j} f(\bX)|^\top |\frac{\partial}{\partial \bx^j} f(\bX)|=\frac{1}{n}\bbeta^\top \frac{\partial \Phi}{\partial \bx^j}  \frac{\partial \Phi^\top}{\partial \bx^j} \bbeta,
\label{eq:varimp_post}
\end{align}
where $\frac{\partial \Phi}{\partial \bx^j} \in \mathbb{R}^{D\times n}$ is the derivative of the feature map with respect to $\bx^j$, across $n$ training samples. The posterior distribution of $\psi_{j}(f)$ adopts a closed form as a generalized chi-squared distribution (see  \cref{sec:posterior_app} for derivation).
In practice, we can sample $\psi_j$ conveniently from its posterior distribution by computing $\frac{\partial}{\partial \bx^j}f(\bX)=\big(\frac{\partial \Phi}{\partial \bx^j} \big)^\top \bbeta^{(s)}$, where $\bbeta^{(s)}$ are Monte Carlo samples from the closed-form posterior (\ref{eq:primal_coef_post}).

There are two ways in which uncertainty aids the model selection process. First, the posterior survival function $P(\psi_j(f) > s)$ of the variable importance utilizes the full posterior distribution of $\psi_j(f)$ to identify the probability that the variable $\bx^j$ exceeds a given threshold $s$. By increasing $s \in (0, \infty)$, $P(\psi_j > s)$ provides a intuitive sense of how model's belief about the importance of variable $\bx^j$ changes as the criteria $s$ becomes more stringent, similar to the regularization path in LASSO methods \citep{friedman2010regularization} but with the incorporation of posterior uncertainty about the variable importance. See Appendix \ref{sec:exp_bang} for an application to a Bangladesh birth cohort study. Second, by integrating the survival function over the threshold, i.e., $\int_{s>0} P(\psi_j(f) > s)~ds$, we obtain the posterior mean of $\psi_j(f)$, and this too incorporates uncertainty in $f$. To see this, notice by using the ``trace trick'' we can write
\begin{align}
    \mathbb{E}[\psi_j(f)] 
    &= \mathbb{E}\left[ \tr\left(
    \bbeta^\top \frac{\partial \Phi}{\partial \bx^j} \frac{\partial \Phi^\top}{\partial \bx^j} \bbeta \right)
    \right] 
    = \mathbb{E}[\bbeta]^T \frac{\partial \Phi}{\partial \bx^j} \frac{\partial \Phi^\top}{\partial \bx^j} \mathbb{E}[\bbeta]
    + \tr\left( \frac{\partial \Phi}{\partial \bx^j} \frac{\partial \Phi^\top}{\partial \bx^j} \text{Cov}[\bbeta] \right),
\end{align}
where all expectations are taken with respect to the posterior.
Therefore, the posterior mean of $\psi_j(f)$ depends on the covariance structure of $\bbeta$, and how it interacts with the eigenspace of the partial derivative functions (encoded by $\frac{\partial \Phi}{\partial \bx^j} \frac{\partial \Phi^\top}{\partial \bx^j}$). In Section \ref{sec:exp} we provide an extensive investigation of \gls{AUROC} scores using the posterior mean of $\psi_j(f)$ for variable selection.

In \cref{sec:alg_summary}, we summarize the algorithms for computing the posterior distributions of the featurized Gaussian process (\cref{eq:primal_coef_post}) and for the posterior distributions of variable importance  (\cref{eq:varimp_post}), and discuss their space and time complexity.

\subsection{Theoretical Guarantees}
\label{sec:theory}

From a theoretical perspective, the variable importance measure $\psi_j$ introduced in (\ref{eq:varimp}) can be understood as a quadratic functional of the Gaussian process model $f$ \cite{efromovich1996optimal}. To this end, rigorous Bayesian nonparametric guarantees can be obtained for $\psi_j$'s ability in learning the true variable importance in finite samples (i.e., posterior convergence, \cref{thm:1}) and its statistical optimality from a frequentist perspective, in providing a low-variance estimator that attains the Cram\'{e}r-Rao bound
(i.e., Bernstein von-Mises phenomenon, \cref{thm:2}). 

\textbf{Posterior Convergence}. We first show that, for a ML model $f$ that can learn the true function $f_0$ with rate $\epsilon_n$  (in the sense of \cref{def:inj}), 
the entire posterior distribution of the variable importance measure $\psi_j(f)$ converges consistently to a point mass at the true $\Psi_j(f_0)$ at a speed that is equal or faster than $\epsilon_n$. 
\begin{theorem}[Posterior Convergence of Variable Importance $\psi_j$]
\label{thm:1}
Suppose $y_i = f_0(\bx_i) + e_i, \; e_i \overset{i.i.d.}{\sim} \mathcal{N}(0, \sigma^2)$, and denote as $\mathbb{E}_0$ the expectation with respect to the true data-generation distribution centered around $f_0$. For the \gls{RKHS} $\Hsc_{\phi}$ induced by the feature function $\phi: \Xsc \rightarrow \real^D$ and $f \in \Hsc_{\phi}$, if:
\begin{enumerate}[(1)]
    \item The posterior distribution $\Pi_n(f)$ converges toward $f_0$ at a rate of $\epsilon_n$;
    \item The differentiation operator $D_j: f \to \frac{\partial}{\partial \bx^j} f$  is bounded: $\Vert D_j \Vert_{op}^2 = \inf \{C\geq 0: \Vert D_j f\Vert_2^2 \leq C \Vert f \Vert_2^2, \text{ for all } f \in \Hsc_{\phi}\}$;
\end{enumerate}
 Then the posterior distribution for $\psi_j(f)=\Vert\frac{\partial }{\partial \bx^j}f\Vert_n^2$ contracts toward $\Psi_j(f_0)=\Vert\frac{\partial }{\partial \bx^j}f_0\Vert_{P_\Xsc}^{2}$ at a rate not slower than $\epsilon_n$. That is, for any $M_n \to \infty$, 
\[\mathbb{E}_0 \Pi_n\left[ \underset{j\in \{1, \ldots, d\}}{\sup}| \psi_j(f) - \Psi_j(f_0)| \geq M_n \epsilon_n \right] \to 0. \]
\end{theorem}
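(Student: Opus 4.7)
The plan is to bound the discrepancy $|\psi_j(f) - \Psi_j(f_0)|$ by transferring the posterior contraction of $f$ around $f_0$ (Assumption 1) through the bounded differentiation operator $D_j$ (Assumption 2). Because $d$ is fixed, the supremum over $j \in \{1,\ldots,d\}$ is controlled by a union bound that is absorbed into the freely chosen $M_n \to \infty$, so I would focus the argument on one coordinate $j$.

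First I would insert the auxiliary empirical quantity $\Vert D_j f_0\Vert_n^2$ and apply the triangle inequality,
\[
|\psi_j(f) - \Psi_j(f_0)| \;\leq\; \underbrace{\bigl|\Vert D_j f\Vert_n^2 - \Vert D_j f_0\Vert_n^2\bigr|}_{\mathrm{(A)}} \;+\; \underbrace{\bigl|\Vert D_j f_0\Vert_n^2 - \Vert D_j f_0\Vert_{P_\Xsc}^2\bigr|}_{\mathrm{(B)}}.
\]
Term $(\mathrm{B})$ does not involve the posterior: it is the deviation of the sample average $n^{-1}\sum_i (D_j f_0)^2(\bx_i)$ from its expectation, and since $f_0$ is fixed with $D_j f_0 \in L^2(P_\Xsc)$ by Assumption 2, a Chebyshev or Bernstein inequality gives $(\mathrm{B}) = O_{P_0}(n^{-1/2})$, which is of smaller order than any nonparametric rate $\epsilon_n$. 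For term $(\mathrm{A})$ I would use the algebraic identity $\Vert u\Vert_n^2 - \Vert v\Vert_n^2 = 2\langle v, u-v\rangle_n + \Vert u - v\Vert_n^2$ with $u = D_j f$, $v = D_j f_0$, followed by Cauchy--Schwarz and the operator-norm bound, to arrive at
\[
\mathrm{(A)} \;\leq\; 2\,\Vert D_j f_0\Vert_n \, \Vert D_j\Vert_{op} \, \Vert f - f_0\Vert_n \;+\; \Vert D_j\Vert_{op}^2 \, \Vert f - f_0\Vert_n^2.
\]
On the contraction event $\{\Vert f - f_0\Vert_n^2 \leq M\epsilon_n\}$, which carries posterior mass tending to $1$ by Assumption 1, the prefactor $\Vert D_j f_0\Vert_n$ is $O_{P_0}(1)$ by the argument used for $(\mathrm{B})$, so $(\mathrm{A})$ is dominated by constant multiples of $\Vert f - f_0\Vert_n^2$ and $\Vert f - f_0\Vert_n$, both controlled by $\epsilon_n$ in the squared-norm rate convention adopted in \cref{def:inj}.

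The main obstacle is transferring the operator bound $\Vert D_j f\Vert_2^2 \leq \Vert D_j\Vert_{op}^2 \Vert f\Vert_2^2$ from the ambient $\Vert\cdot\Vert_2$ norm (under which Assumption 2 is stated) to the empirical norm $\Vert\cdot\Vert_n$ that appears in the contraction definition. The standard route is a uniform concentration argument over a bounded RKHS ball of $\Hsc_\phi$, so that $\Vert\cdot\Vert_n$ and $\Vert\cdot\Vert_{L^2(P_\Xsc)}$ are equivalent up to constants with high $P_0$-probability; this requires entropy or covering-number control on $\Hsc_\phi$, which is available for the model classes (random features, random forests, neural networks) cited after \cref{def:inj}. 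A secondary subtlety is the bookkeeping between the linear-norm and squared-norm rate conventions, so that the cross term in $(\mathrm{A})$ does not degrade the stated rate $M_n\epsilon_n$; once the uniform norm equivalence is in hand, the remainder is a clean triangle-inequality calculation plus the union bound over the $d$ coordinates.
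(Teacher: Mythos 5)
Your decomposition is the same one the paper uses: it splits $|\psi_j(f) - \Psi_j(f_0)|$ into $|\psi_j(f) - \psi_j(f_0)|$ (your term $(\mathrm{A})$, the paper's Fact~1) plus $|\psi_j(f_0) - \Psi_j(f_0)|$ (your term $(\mathrm{B})$, the paper's Fact~2), disposes of $(\mathrm{B})$ by exactly the Bernstein-type concentration you describe (rate $O(n^{-1/2})$, dominated by any nonparametric $\epsilon_n$), and controls $(\mathrm{A})$ through the operator bound on $D_j$ followed by the same $\Vert\cdot\Vert_2 \leftrightarrow \Vert\cdot\Vert_n$ transfer (the paper simply inserts an $O_p(n^{-1/2})$ correction where you propose a uniform entropy argument; the union bound over the fixed $d$ coordinates is handled the same way).

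The one substantive divergence is inside term $(\mathrm{A})$, and it is not the ``secondary bookkeeping'' you defer it as. Your (correct) expansion $\Vert D_jf\Vert_n^2 - \Vert D_jf_0\Vert_n^2 = 2\langle D_jf_0, D_j(f-f_0)\rangle_n + \Vert D_j(f-f_0)\Vert_n^2$ produces a cross term which, after Cauchy--Schwarz, is of order $\Vert f - f_0\Vert_n \asymp \sqrt{M_n\epsilon_n}$ on the contraction event; since $\epsilon_n \to 0$, this dominates $\epsilon_n$, so your bound as written only yields contraction of $\psi_j$ at rate $\sqrt{\epsilon_n}$, not the claimed $\epsilon_n$. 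The paper's proof never meets this term because its Fact~1 asserts $\bigl|\Vert u\Vert_n^2 - \Vert v\Vert_n^2\bigr| \leq \Vert u - v\Vert_n^2$, an inequality that is false for squared norms in general (take $u = 2v \neq 0$: the left side is $3\Vert v\Vert_n^2$, the right side $\Vert v\Vert_n^2$). So your more careful algebra has exposed a genuine gap in the argument rather than introduced a new one. Closing it requires either weakening the conclusion (rate $\sqrt{\epsilon_n}$, or $\epsilon_n$ only when $D_jf_0 = 0$, i.e., for null variables), or a separate, sharper control of the linear functional $\langle D_jf_0, D_j(f-f_0)\rangle_n$ beyond Cauchy--Schwarz --- an empirical-process or posterior-mean argument of the kind that underlies the quadratic-functional and Bernstein--von Mises literature invoked for \cref{thm:2}. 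Neither your proposal nor the paper's own proof supplies that step.
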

Proof is in \cref{sec:proof_convergence}. \cref{thm:1} is a generalization of the classic result of quadratic functional convergence under linear models and sparse neural networks to a much wider range of ML models in the context of Bayesian variable selection   \citep{efromovich1996optimal, liu_variable_2021, wang2020uncertainty}. It confirms the important fact that, for a ML model $f$ that can accurately learn the true function $f_0$ under finite data, we can  consistently recover the true variable importance at a fast rate by using the proposed variable importance estimate $\psi_j(f)$, despite the potential lack of identifiablity in the model parameters (e.g., weights in a neural network). 

From a practical point of view, \cref{thm:1} reveals that the finite-sample performance of variable importance $\psi_j(f)$ depends on two factors: (1) the finite-sample generalization performance of the prediction function $f$, and (2) the mathematical property of $f$ in terms of its Lipschitz condition. Therefore, to ensure effective variable selection in practice, the practitioner should take care to select a model class $f$ that has a theoretical guarantee in capturing the target function $f_0$, empirically delivers strong generalization performance under finite data, and is well-conditioned in terms of the behavior of its partial derivatives. To this end, we note that, under the featurized Gaussian process $f=\phi(\bx)^\top\bbeta$ discussed in this work, users are free to choose a performant model class (e.g., random forest, random-feature or DNN) whose feature representation spans an \gls{RKHS} $\Hsc_{\phi}$ that is \textit{dense} in the infinite-dimensional function space (therefore $f$ enjoys a convergence guanrantee) \citep{biau2012analysis, hornik1989multilayer, rahimi2008uniform, schmidt-hieber_nonparametric_2020, rovckova2020posterior}, and is empirically more effective than the \gls{GP} methods based on classic kernels such as RBF. (We discuss the Lipschitz condition of these models in \cref{sec:thm_discuss_lipschitz}) Indeed, as we will verify in experiments (\cref{sec:exp}), there does not exist an ``optimal" model class that performs universally well across all data settings (i.e., no free lunch theorem \citep{wolpert1997no}). 
This highlights the importance of having a general-purpose framework for variable selection that can flexibly incorporate the most effective model for the task at hand.

\textbf{Statistical Efficiency $\&$ Uncertainty Quantification.} Next, we verify the uncertainty quantification ability of the variable importance measure $\psi_j(f)$ under featurized \gls{GP}, by showing that it exhibits the \gls{BvM} phenomenon. That is, its posterior measure $\Pi_n(\psi_j(f))$ converges towards a Gaussian distribution that is centered around the truth $\Psi_j(f_0)$, so that its $(1 - \alpha)\%$ level credible intervals achieve the nominal coverage probability for the true variable importance. More importantly, the \gls{BvM} theorem verifies that the posterior distribution of $\psi_j(f)$ is \textit{statistically optimal}, in the sense that its asymptotic variance attains the \gls{CRB} that cannot be improved upon \citep{bickel2012semiparametric}.
\begin{theorem}[Bernstein-von Mises Theorem for Variable Importance $\psi_j$]
\label{thm:2}
Suppose $y_i = f_0(\bx_i) + e_i, \; e_i \overset{i.i.d.}{\sim} \mathcal{N}(0, \sigma^2), \; i=1, \dots, n$. 
Denote $D_j: f \to \frac{\partial}{\partial \bx^j} f$ the differentiation operator and $H_j=D_j^\top D_j$ the inner product of $D_j$, such that:
\begin{align}
\psi_j(f)=\Vert D_j (f) \Vert_n^2=  \frac{1}{n}\langle D_j f, D_j f \rangle = \frac{1}{n}f^\top H_j f. \label{eq:def_psi}
\end{align}
Assuming conditions (1)-(2) in \cref{thm:1} hold, and additionally:
\begin{enumerate}
\item[(3)] $f_0$ is square-integrable over the support $\Xsc$ and $\Vert f_0\Vert_2=1$;
\item[(4)] $\text{\texttt{rank}}(H_j) = o_p(\sqrt{n})$;
\end{enumerate}
Then
\[\sqrt{n}(\psi_j(f) - \psi_j(f_0)) \overset{d}{\rightarrow} \mathcal{N}(0, 4\sigma^2\Vert H_j f_0 \Vert_n^2). \]
\end{theorem}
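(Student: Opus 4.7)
Since $\psi_j(f) = \tfrac{1}{n} f^\top H_j f$ is exactly quadratic in $f$, the plan is to start from the algebraic identity
\begin{align*}
\psi_j(f) - \psi_j(f_0) \;=\; \tfrac{2}{n}\,(H_j f_0)^\top (f - f_0) \;+\; \tfrac{1}{n}\,(f-f_0)^\top H_j (f-f_0),
\end{align*}
then show that (i) the linear leading term is asymptotically $\mathcal{N}\!\bigl(0,\, 4\sigma^2\Vert H_j f_0\Vert_n^2 / n\bigr)$ under the posterior, and (ii) the quadratic remainder is $o_p(1/\sqrt n)$. Multiplying through by $\sqrt n$ then yields the theorem.

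For the linear term $L(f) := \tfrac{2}{n}(H_j f_0)^\top(f-f_0)$, I would exploit the exact Gaussian posterior of $\bbeta$ from \cref{eq:primal_coef_post}, so that $f = \Phi\bbeta$ is Gaussian conditional on the data and $L(f)$ is therefore Gaussian. Its posterior variance is $\tfrac{4}{n^2}(H_j f_0)^\top \Phi\,\Sigma_\bbeta\,\Phi^\top (H_j f_0)$. Since the eigenvalues of $\Phi^\top\Phi$ typically grow linearly in $n$, the ridge regularization becomes negligible and $\Phi\Sigma_\bbeta\Phi^\top \approx \sigma^2 P_\Phi$, where $P_\Phi$ is the projection onto the column span of $\Phi$. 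Combined with condition (3) and the density of $\Hsc_\phi$ (already used in \cref{thm:1}), $H_j f_0$ lies in this span to leading order, and the posterior variance collapses to $4\sigma^2 \Vert H_j f_0\Vert_n^2 / n$, matching the target.

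For the quadratic remainder, I would invoke condition (4), $\texttt{rank}(H_j) = o_p(\sqrt n)$. Factoring $H_j = V_r \Lambda_r V_r^\top$ with $V_r \in \mathbb{R}^{n\times r}$ and $r=\texttt{rank}(H_j)$, we have
\begin{align*}
\tfrac{1}{n}(f-f_0)^\top H_j (f-f_0) \;\leq\; \tfrac{\Vert H_j\Vert_{op}}{n}\,\Vert V_r^\top(f-f_0)\Vert_2^2,
\end{align*}
where $\Vert H_j\Vert_{op} \leq \Vert D_j\Vert_{op}^2 < \infty$ by condition (2). Under the Gaussian GP posterior, $V_r^\top(f-f_0)$ is an $r$-dimensional Gaussian whose posterior covariance has bounded operator norm per coordinate, so its squared norm is $O_p(r) = o_p(\sqrt n)$. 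Dividing by $n$ shows the remainder is $o_p(1/\sqrt n)$.

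\textbf{Main obstacle.} The delicate step is controlling the centering: the posterior mean of $L(f)$ equals $\tfrac{2}{n}(H_j f_0)^\top(\hat f - f_0)$, and a naive Cauchy--Schwarz bound yields only $O_p(\epsilon_n)$, which generally exceeds $1/\sqrt n$ since $\epsilon_n$ is a nonparametric rate. Overcoming this requires a double-robustness style argument that exploits the full quadratic structure: using $\mathbb{E}[\bbeta^\top A \bbeta] = \mathbb{E}[\bbeta]^\top A\, \mathbb{E}[\bbeta] + \tr(A\,\text{Cov}[\bbeta])$, the linear bias combines with $\tfrac{1}{n}\tr(H_j\,\text{Cov}[\bbeta])$, and condition (4) is precisely what makes the combined correction term $o_p(1/\sqrt n)$. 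Tracking this cancellation, together with the uniform approximation $\Phi\Sigma_\bbeta\Phi^\top \approx \sigma^2 P_\Phi$ on the subspace spanned by $H_j f_0$, is the technically demanding part and ties together conditions (3)--(4) with the posterior contraction rate from \cref{thm:1}.
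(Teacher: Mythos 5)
Your decomposition $\psi_j(f)-\psi_j(f_0)=\tfrac{2}{n}(H_jf_0)^\top(f-f_0)+\tfrac{1}{n}(f-f_0)^\top H_j(f-f_0)$ is exactly the one the paper uses, and your treatment of the quadratic remainder via $\texttt{rank}(H_j)=o_p(\sqrt n)$ times a bounded top eigenvalue mirrors the paper's eigendecomposition argument for $\hat\epsilon_n^\top H_j\hat\epsilon_n$. Where you genuinely diverge is in where the limiting Gaussian comes from. The paper runs a frequentist functional delta method (its Lemma on the Gateaux derivative / influence function): the centering term is $\tfrac{2}{n}(y-f_0)^\top H_jf_0=\tfrac{2}{n}\sum_i e_i\,[H_jf_0]_i$, a sum of independent noise contributions, and the variance $4\sigma^2\Vert H_jf_0\Vert_n^2$ is the variance of that empirical-process term. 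You instead derive Gaussianity from the exact posterior of $\bbeta$, so your variance is $\tfrac{4}{n^2}(H_jf_0)^\top\Phi\Sigma_\bbeta\Phi^\top(H_jf_0)$ and you need the heuristic $\Phi\Sigma_\bbeta\Phi^\top\approx\sigma^2 P_\Phi$ plus a span argument to match the target. These are two different random objects (posterior randomness of $f$ given data versus sampling randomness of the data), and identifying them is itself the content of a Bernstein--von Mises statement; your proposal does not supply that identification, only a plausibility argument for why the two variances agree.

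The more serious issue is the one you flag yourself and then defer: the centering of the linear term. A Cauchy--Schwarz bound gives $\tfrac{2}{n}(H_jf_0)^\top(\hat f-f_0)=O_p(\epsilon_n)$, and $\sqrt n\,\epsilon_n\not\to 0$ for any nonparametric rate, so without the cancellation you gesture at (``double-robustness style,'' combining the linear bias with $\tfrac{1}{n}\tr(H_j\mathrm{Cov}[\bbeta])$) the theorem simply does not follow from what you have written. Naming the needed cancellation is not the same as exhibiting it, and condition (4) controls the trace term but says nothing by itself about $(H_jf_0)^\top(\hat f-f_0)$. To your credit, you have located precisely the step the paper itself treats most lightly --- its proof disposes of the analogous cross term $2\hat\epsilon_n^\top H_jf_0$ with the one-line assertion that it is $o_p(\sqrt n)$ ``because $f$ is a consistent estimator of $f_0$'' --- but a proposal that explicitly leaves the crux unexecuted is a proof with a gap, not a proof. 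The same issue recurs in your quadratic remainder: the posterior covariance contributes $O_p(r)=o_p(\sqrt n)$ as you say, but the posterior \emph{mean} contributes $\Vert V_r^\top(\hat f-f_0)\Vert_2^2$, which is only bounded by $n\epsilon_n$ without further argument.
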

Proof is in \cref{sec:proof_bvm}. \cref{thm:2} provides a rigorous theoretical justification for $\psi_j(f)$'s ability to quantify its uncertainty about the variable importance. More importantly, it verifies that 
$\psi_j(f)$ has the good frequentist property that it quickly converges to a minimum-variance estimator at a fast speed, which is important for obtaining good variable selection performance in practice. Compared to the previous \gls{BvM} results that tend to focus on a specific Bayesian ML model, \cref{thm:2} is considerably more general (i.e., applicable to a much wider range of models) and comes with a simpler set of conditions \citep{rockova2020semi, wang2020uncertainty, liu_variable_2021}. Specifically, (3) is a standard assumption in nonparametric analysis. It ensures the true function $f_0$ does not diverge towards infinity and makes learning impossible \citep{castillo2015bernstein}. The unit norm assumption $\Vert f_0\Vert_2=1$ is only needed to simplify the exposition of the proof, and the theorem can be trivially extended to $\Vert f_0\Vert_2=C$ for any $C>0$. The most interesting condition is (4). Let's denote $\Hsc_j$ the space of partial derivatives functions $\deriv{\bx^j}f$ of the model functions $f \in \Hsc_\phi$.  Then intuitively, (4) says to attain the \gls{BvM} phenomenon, the effective dimensionality of the derivative function space $\Hsc_j$ 
(as measured by $\text{\texttt{rank}}(H_j)=\text{\texttt{rank}}(D_j)$) cannot be too large. Since effective dimensionality of the derivative space is bounded above by that of the original \gls{RKHS} $f \in \Hsc_\phi$, (4) essentially states that the effective dimensionality of the model space $\Hsc_\phi$ cannot grow too fast with data size (i.e., $o_p(\sqrt{n})$). Fortunately, this condition is satisfied by a wide range of ML models including trees and deep networks \citep{rockova2020semi, wang2020uncertainty}. See \cref{sec:thm_discuss_bvm} for further discussion.

\vspace{-.5em}
\section{Experiment Analysis}
\label{sec:exp}

In this section, we investigate the finite-sample performance of the derivative norm metric $\psi_j$ for variable selection (\ref{eq:varimp}) under a wide variety of ML methods. 
We illustrate the breath of our framework by applying it to tree ensembles (\cref{sec:fdt}), where a principled and gradient-based uncertainty-aware variable selection approach has been previously unavailable. We also apply it to linear models and (approximation) kernel machines, which are standard approaches to variable selection in  data science practice \citep{tibshirani_regression_1996, bobb_bayesian_2015}.
Over a wide range of complex and realistic data scenarios (e.g., discrete features, interactions, between-feature correlations) derived from socioeconomic and healthcare datasets, we investigate the method's statistical performance in accurately recovering the ground-truth features (in terms of the Type I and Type II errors), and compare it to other well-established approaches in each of the model classes (\cref{tb:methods}). Our main observations are:
\vspace{-0.5em}
\begin{enumerate}[wide, labelwidth=!, labelindent=0.5pt]
\item[\textbf{O1}:] \textbf{Importance of generality}. There does not exist a model class that performs universally well across all data scenarios (i.e., no free lunch theorem \citep{wolpert1997no}, Figure \ref{fig:auc_main}, \ref{fig:cat_all}-\ref{fig:tst_mi}). This highlights the importance of an unified variable selection framework that incorporates a wide range of models, so that practitioners have the freedom of choosing the most suitable model class for the task at hand.
\item[\textbf{O2}:] \textbf{Good prediction translates to effective variable selection}. Comparing between different model classes, the ranking of models' predictive accuracy is generally consistent with the ranking of their variable selection performance under $\psi_j$ (i.e., better prediction translates to better variable selection, as suggested in \cref{thm:1}).
\item[\textbf{O3}:] \textbf{Statistical efficiency of $\psi_j$}. Comparing within each model class, the derivative norm metric $\psi_j$ generally outperforms other measures of variable importance. The advantage is especially pronounced in small samples and for correlated features. This empirically verifies that $\psi_j$ has good finite-sample statistical efficiency even under complex data scenarios (as suggested in \cref{thm:2}).
\end{enumerate}
\begin{table}[ht]
    \centering
    \resizebox{0.75\columnwidth}{!}{%
    \begin{tabular}{|c|c|c|}
     \hline\hline
      \textbf{Model Class}  &  
      \textbf{(Ours)} & \textbf{Baselines} \\
      \midrule
      Tree Ensembles   & RF-FDT & RF-Impurity, RF-Knockoff, BART \\
      \hline
      (Appr.) Kernel Methods & RFNN & BKMR, BAKR\\
      \hline
      Linear Models & GAM & BRR, BL \\
     \hline\hline
    \end{tabular}
    }
    \caption{Summary of methods considered in the experiments.}
    \label{tb:methods}
\vskip -0.2in
\end{table}

\textbf{Models $\&$ Methods}. 
We consider three main classes of models (\cref{tb:methods}): (I) \textbf{\gls{RF}}. Given a trained forest, we quantify variable importance using $\psi_j$ by translating it to an ensemble of \gls{FDT} 
(\cref{sec:fdt}), and compare it to three baselines: \gls{impurity} \citep{breiman_classification_1984}, \gls{knockoff} \citep{candes_panning_2017}, and \gls{BART}. (II) \textbf{(Approximate) Kernel Methods}. We apply $\psi_j$ to a random-feature model that approximates a Gaussian process with a RBF kernel \cite{rahimi_random_2007}, and set the number of features to $\sqrt{n} \, log(n)$ to ensure proper approximation of the exact RBF-GP \cite{rudi_generalization_2018}. We term this approach \gls{RFNN}, and compared it to both  \gls{BKMR}  \cite{bobb_bayesian_2015} based on a \gls{GP} with exact RBF kernel and spike-and-slab prior and \gls{BAKR} based on random-feature model with a projection-based feature importance measure and an adaptive shrinkage prior \citep{crawford_bayesian_2018}. (III) \textbf{Linear Models}. We apply $\psi_j$ to a featurized \gls{GP} representation of the \gls{GAM}, with the prior center $\bmu$ set at the frequentist estimate of the original GAM model obtained from a sophiscated REML procedure \citep{wood2006generalized}. We compare it to two baselines: \gls{BRR} \cite{hoerl_ridge_1970} and \gls{BL} \cite{park_bayesian_2008}. \cref{sec:exp_detail} provides further detail.
Previously, \citep{liu_variable_2021} studied the specialization of our framework to the deep neural networks (DNNs), we don't repeat that work here as DNN is not yet a standard data science model for tabular data.

To quantify variable importance while accounting for posterior uncertainty the variable importance $\psi_j(f)$, we examine its posterior survival function $\int_{s>0} P(\psi_j(f) > s)~ds$ (i.e., the posterior likelihood of $\psi_j(f)$ greater than the threshold $s$) integrated over the full range of thresholds $s$. 
For other methods, we use their default metrics to quantify variable importance
(e.g., variable inclusion probabilities in \gls{BART}, \gls{BKMR}. \cref{sec:exp_detail}).

\textbf{Datasets and Tasks.}
We consider two synthetic benchmarks and three real-world socio-economic and healthcare datasets, encapsulating challenging phenomena such as between-feature correlations and interaction effects. For the synthetic benchmark, we generate data under the Gaussian noise model $y \sim \mathcal{N}(f_0, 0.01)$ for four types of outcome-generation functions $f_0$ (\texttt{linear}, \texttt{rbf}, \texttt{matern32} and \texttt{complex}, see Appendix \ref{sec:exp_data} for a full description) with number of causal variables $d^\star = 5$. Two types of feature distribution are considered: (1) \textbf{synthetic-continuous}: all features follow $\bx^j \sim Unif(-2, 2)$; (2) \textbf{synthetic-mixture}: two of the causal features and two of the non-causal features are distributed as $Bern(0.5)$ and the rest are distributed as $Unif(-2, 2)$; We vary sample size $n \in \{100, 200, 500, 1000\}$ and data dimension $d \in \{25, 50, 100\}$, leading to 96 total scenarios.

For real-world data, we consider (1) \textbf{adult}: 1994 U.S. census data of 48842 adults with 8 categorical and 6 continuous features \cite{kohavi_scaling_nodate}; (2) \textbf{heart}: a coronary artery disease dataset of 303 patients from Cleveland clinic database with 7 categorical and 6 continuous features \cite{detrano_international_1989}; and (3) \textbf{mi}: disease records of myocardial infarction (MI) of 1700 patients from Krasnoyarsk
interdistrict clinical hospital during 1992-1995, with 113 categorical and 11 continuous features \cite{golovenkin_trajectories_2020}. 
All datasets exhibit non-trival correlation structure among features (Appendix \cref{fig:corr_adult}-\ref{fig:corr_mi}).
Since the ground-truth causal features on these datasets are not known, in order to rigorously evaluate variable selection performance, we follow the standard practice in causal ML to simulate the outcome based on causal features selected from data \citep{yao2021survey}. We use the four outcome-generating functions as described previously and evaluate over the same data size $\times$ dimension combinations, leading to 144 total scenarios\footnote{In the setting where required data dimension is higher than that of the real data, we generate additional synthetic features from $Unif(-2, 2)$. We use $n\in\{50,100,150,257\}$ for \textbf{heart} due to data size restriction.}. We repeat the simulation $20$ times for each scenario, and use \gls{AUROC} to measure the variable selection performance (in terms of Type I and II errors) of each method.

In the \cref{sec:exp_bang}, we further evaluate the method on a well-studied environmental health dataset (Bangladesh birth cohort study \citep{kile2014prospective}) with respect to the real outcome (infant development scores). We visualize the "Bayesian" regularization path as introduced earlier. The selected variables correspond well with the established toxicology pathways in the literature \citep{gleason2014contaminated}.
\begin{figure}[ht]
\begin{center}
\centerline{\includegraphics[width=1.0\columnwidth]{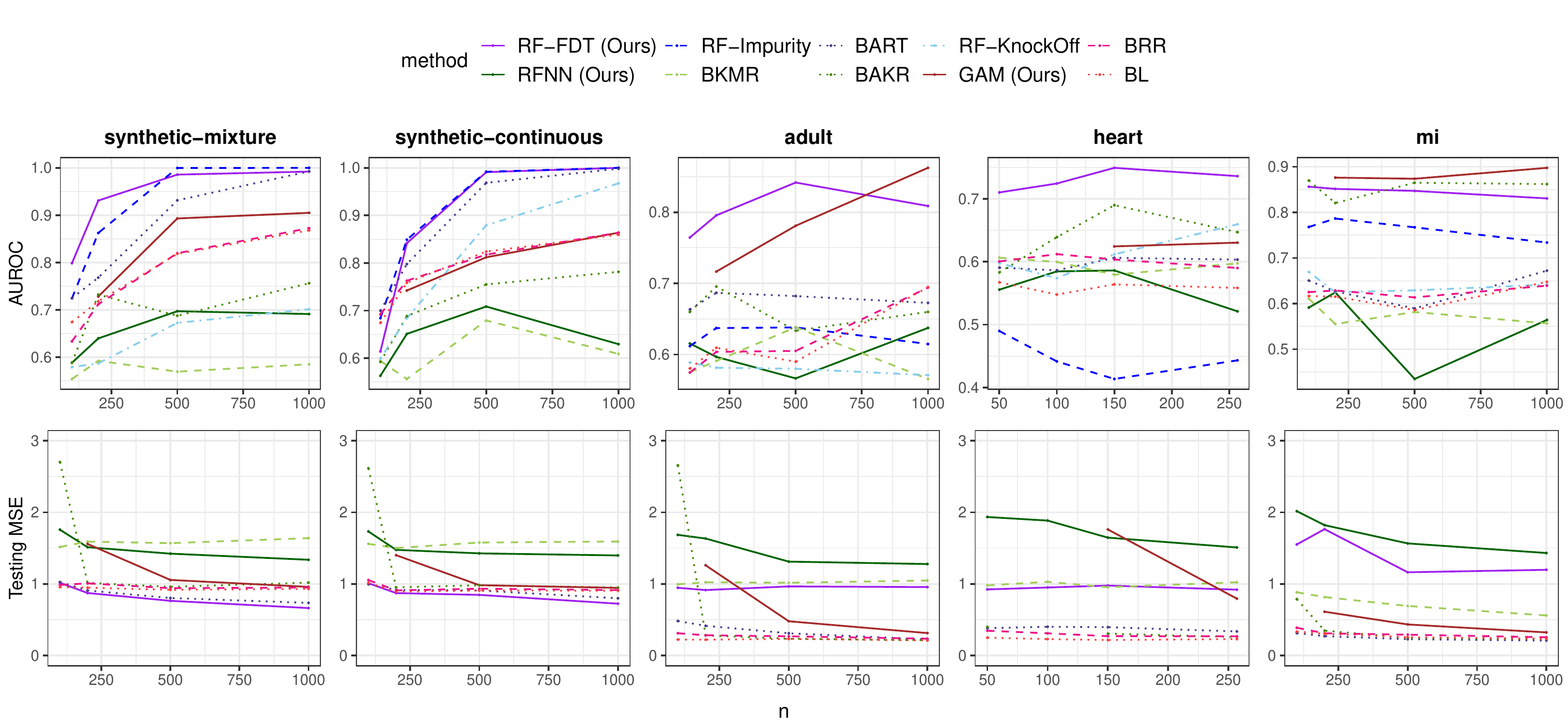}}
\caption{
Method performance in variable selection (measured by \gls{AUROC}, row 1) and prediction (measured by test MSE, row 2) under \texttt{matern32} data-generation function and with input dimension 100 (5 causal features). 
The ranking of the variable \gls{FDT} (solid purple) outperforms other methods in most of the data settings, and \gls{GAM} outperforms in the setting of large data size and high percentage of categorical features (\textbf{adult} and \textbf{mi}). The rankings of performance are roughly consistent between prediction and variable selection.)
}
\label{fig:auc_main}
\end{center}
\vskip -0.4in
\end{figure}

\subsection{Results}
\label{sec:exp_result}
\vspace{-0.5em}
\cref{fig:auc_main} shows the methods' performance in variable selection \textbf{(Row 1)} and prediction \textbf{(Row 2)}\footnote{For the prediction plots, a method will not be visualized if they share the model fit with another method (\gls{impurity} and \gls{knockoff}), or if the method does not produce valid result due to small sample size (\gls{GAM}).} in an exemplary setting, where the true function $f_0$ is \texttt{matern32} with an input dimension of  100. It represents the tabular data setting that we are the most interested in: nonlinear feature-response relationship with  interaction effects and  high input dimension. This is because $f_0$ is sampled from a \gls{RKHS} induced by M\'{a}tern $\frac{3}{2}$ kernel, which contains a large space of continuous and at least once differentiable functions \citep{rasmussen2006gaussian}. 
We delay complete visualizations for all 240 scenarios to \cref{sec:exp_result_app}.

Recalling the three observations introduced earlier: 

\textbf{O1 ("No free lunch")}: No method performs universally well. For example, \gls{BAKR} performs robustly in correlated datasets (\textbf{heart} and \textbf{mi}), but poorly otherwise. Kernel approaches (\textbf{RFNN} and \gls{BKMR}) performs competitively in low dimension, but their performance deteriorates quickly as dimension increases (\cref{fig:cat_all}-\ref{fig:cont_all}). \gls{FDT} generally is the strongest method in small samples and high dimensions, but can be outperformed by \gls{GAM} in large samples and data with high percentage of categorical features (\textbf{adult} and \textbf{mi}). This highlights the importance of an unified framework that allows users to select the most appropriate model for variable selection depending on the data setting.

\textbf{O2 ("Good prediction implies effective variable selection")}: Comparing among the gradient-based methods under each model class (i.e., \gls{FDT}, \gls{GAM} and \gls{RFNN}. Solid lines in \cref{fig:auc_main}), we see that their rankings in prediction (row 2) is largely consistent with rankings in variable selection. It's worth noting that this pattern is occasionally violated (e.g., \gls{GAM} in \textbf{adult}, n=500 and \textbf{heart}, n=250), but that does not contradict our conclusion (\cref{thm:1}) since the convergence rate of the  prediction function only forms an \textit{upper bound} for the convergence rate of $\psi_j$.

\textbf{O3 (Statistical efficiency of $\psi_j$)}: Comparing between variable selection methods from the same class (especially for tree models. i.e., \gls{FDT} v.s. \gls{impurity} / \gls{knockoff} / \gls{BART}), we see that \gls{FDT} is competitive or strongly outperforms its baselines in variable selection, despite being based on exactly the same fitted model (\gls{impurity} / \gls{knockoff}), or not accounting for the uncertainty in the tree growing process (\gls{BART}). This pattern is consistent in most data settings, and the advantage is especially pronounced in high dimensions, small data sizes, and correlated datasets (\cref{sec:exp_result_app}, \cref{fig:cat_all}-\ref{fig:mi}). This provides strong empirical evidence for the fact that $\psi_j$ is a statistically efficient estimator for variable selection with good finite-sample behavior (as suggested in \cref{thm:2}), and can deliver strong variable selection performance for tabular data when combined with a performant ML model like random forest.

\cref{sec:exp_result_app} contains further discussion.

\vspace{-1em}
\section{Discussion and Future Directions}
\label{sec:discussion}
\vspace{-.5em}

The modern data analysis pipeline typically involves fitting multiple models, comparing their performance, and iterating as necessary. When variable selection is involved, the practitioner may ask \textit{are the variable importances across models measuring the same behavior?} And, \textit{what if the most suitable model does not have a satisfactory variable selection procedure?} By framing model choice as kernel choice --- which we emphasize includes kernels corresponding machine learning methods like neural networks and random forests in addition to the long list of traditional kernels --- we propose a unified variable selection procedure that is compatible across models and we prove strong guarantees for this procedure. 

\textbf{Limitations}. Our proposed framework provides principled uncertainty quantification by performing exact Bayesian inference on the weights $\bbeta$ of a feature map $\phi(\bx)$. We do not consider uncertainty in the feature map itself. This means, for example,  that if the feature map is given by the last hidden layer of a neural network trained by maximizing the posterior, then our model class corresponds to the \textit{neural linear model}. This model is different from a fully Bayesian neural network, which performs posterior inference also on the kernel hyperparameters (i.e., the hidden weights) \cite{ober_nlms_2019, snoek_bayesopt_2015, thakur_luna_2021}. Likewise, the kernel induced by the featurized decision tree studied here does not consider uncertainty in the tree's partitioning process. Yet, this does not seem to be a significant limitation in our experiments (e.g., \gls{FDT} outperforms \gls{BART}), although this point still merits further investigation in the future.

In our experiments, we focused on kernels based on tree ensembles, kernel methods and linear models. In the future, it would be worth expanding this framework to other model classes (e.g., MARS \cite{friedman_mars_1991} or neural network) and  estimating the importance of interaction effects and higher-order terms.
We would also like to apply this method to large-scale scientific studies (e.g., epidemiology study based on extremely large EHR datasets) 
where an uncertainty-aware nonlinear variable selection method is typically impossible due to challenges with scalability.

\textbf{Societal Impacts}. 
The method proposed in this paper provides a theoretically-grounded approach for quantifying variable importance that is applicable to a wide range of ML models. We expect it to provide a set of powerful tools for practitioners to understand the importance of input variables in their ML models with limited data, which is especially important for scientific investigations such as epidemiology and computational biology. However, we recognize that this approach can potentially be utilized by bad actors to probe the input-variable uncertainty of an existing ML system, and use it to engineer more targeted white-box adversarial attacks. To this end, we recommend system developers to incorporate this approach into the formal verification procedure of a ML system, so as to monitor and understand the model uncertainty with respect to input variables, and devise proper improvement and prevention strategies (e.g., data augmentation or randomized smoothing targeted at specific variables) accordingly.

\clearpage
\bibliography{ref}

\clearpage
\appendix

\section{Additional Background and Technical Derivations}
\subsection{Neural Network Representation of Decision Tree}
\label{sec:nn_dt}
For each node in a learned decision tree, we know the feature the node is splitting on and its corresponding threshold. \cite{karthikeyan_learning_2021} provides a neural network representation of a decision tree:
\begin{align}
\label{nn:dt}
f(\bx| \bW, \bb, \bbeta) &= \sum_{l=0}^{D} \phi_l(\bx|\bW, \bb) \bbeta_l, \ \text{where}\nonumber \\
\phi_l(\bx|\bW, \bb) 
&= \sigma_{\mbox{\texttt{\tiny{step}}}} \big(\sum_{i=0}^{h-1} \sigma_{\mbox{\texttt{\tiny{step}}}}\big((\bx^\top \bw_{i, I(i, l)}+b_{i, I(i, l)})S(i, l) \big)-h\big).
\end{align}
In the above equations, $\bbeta_l \in \mathbb{R}$ is the prediction given by the $l^{th}$ leaf node, $h$ is the height of the tree and $D$ is the number of leaf nodes. 
$I(i, l)$ denotes the index of the $l^{th}$ leaf's predecessor in the $i^{th}$ level of the tree. $\bw_{ij} \in \mathbb{R}^d$ indicates the feature the node is splitting on using one hot encoding, with only one element being $1$ or $-1$ and the rest being $0$. $b_{ij} \in \mathbb{R}$ is the corresponding threshold (or the threshold multiplied by $-1$). The $-1$ is to guarantee that $\bx^\top \bw_{i, j}+b_{i, j}>0$ so that when multiplied by
\begin{align*}
S(i, l)=\begin{cases}
-1 & \text{ if } l^{th} \text{ leaf } \in \text{ left subtree of node } I(i, l),   \\
+1 &  \text{ otherwise, } 
\end{cases}
\end{align*}
the direction of $(\bx^\top \bw_{i, I(i, l)}+b_{i, I(i, l)})S(i, l)$ can be kept. $\sigma_{\mbox{\texttt{\tiny{step}}}}(\cdot)$ is the step function,
\begin{align*}
\sigma_{\mbox{\texttt{\tiny{step}}}}(a)=1, \text{ if } a \geq 0, \text{ and } \sigma_{\mbox{\texttt{\tiny{step}}}}(a)=0, \text{ if } a<0.
\end{align*}

Therefore, the model space can be regarded as a three-layer neural network with $\sigma_{\mbox{\texttt{\tiny{step}}}}$ as activation function, with $\bW$ as hidden weights and $\bb$ as hidden bias.

\subsection{Derivation of Posterior Distribution of Variable Importance}
\label{sec:posterior_app}

Recall from Equation \ref{eq:primal_coef_post} that the posterior distribution of $\bbeta$ is $\mathcal{MVN}(\mathbb{E}[\bbeta], \text{Cov}[\bbeta])$, which can be computed in closed form. This induces a distribution over the variable importance $\psi_j(f)$:
\begin{align*}
\psi_{j}(f)&=\frac{1}{n}|\frac{\partial}{\partial \bx^j} f(\bX)|^\top |\frac{\partial}{\partial \bx^j} f(\bX)|\\
&=\frac{1}{n}\bbeta^\top \big(\frac{\partial}{\partial \bx^j} \phi(\bX) \big) \big(\frac{\partial}{\partial \bx^j} \phi(\bX) \big)^\top \bbeta \\
&=\frac{1}{n}\bbeta^\top \bQ\bLambda\bQ^\top \bbeta \tag{Eigen-decomposition on $\big(\frac{\partial}{\partial \bx^j} \phi(\bX) \big) \big(\frac{\partial}{\partial \bx^j} \phi(\bX) \big)^\top$}\\
&= \frac{1}{n}\sum_{i=1}^D \lambda_i(\bq_i^\top \bbeta)^2 \tag{$\lambda_i$ is eigenvalue, $\bq_i$ is eigenvector}\\
&= \frac{1}{n}\sum_{i=1}^D (\lambda_i  V_i) \cdot Z_i, \tag{$V_i=\bq_i^\top \text{Cov}(\bbeta) \bq_i$}
\end{align*}

where $Z_i := (\bq_i^\top\bbeta)^2 /V_i \sim \chi_1^{2}(\mu_i)$ are independent random variables that follows a noncentral $\chi^2$ distribution with 1 degree of freedom and parameter $\mu_i=(\bq_i^\top \mathbb{E}[\bbeta])^2$. The values $\{\lambda_i\cdot V_i\}_{i=1}^D$ are scalar constants weighting each noncentral $\chi^2$ random variable $Z_i$. As a result, the full distribution is a well-known distribution of a linear combination of non-central $\chi^2$ distributions \citep{harville1971distribution}. This distribution has mean $\sum_{i=1}^D (\lambda_i  V_i) \cdot (1+\mu_i)$, variance $\frac{2}{n}\sum_{i=1}^D (\lambda_i  V_i)^2 \cdot (1+2\mu_i)$, and it can be sampled efficiently from by using the linear combination representation as introduced above.

\newpage
\subsection{Algorithm Summary}
\label{sec:alg_summary}

Given a fixed\footnote{Namely, the feature function $\phi(\bx)$ is either fixed by construction like random feature models or kernel machine using classic kernels (RBF, Mat\'{e}rn, etc). Or $\phi(\bx)$ is already learned elsewhere (i.e., pre-trained on the same or a separate dataset) like random forests or neural networks.} feature function $\phi: \Xsc \rightarrow \real^D$, we present algorithm summaries for (1) Computing the posterior distribution of $\bbeta$ in the feature-based representation of a Gaussian process,
and (2) Computing the posterior distribution of the integrated partial derivative metric. 

First consider (1), it involves computing two closed-form updates (for posterior mean and variance) over the training data in mini-batches for 1 epoch. The algorithm has a linear complexity with respect to data size. 

\begin{algorithm}[htb]
   \caption{Posterior Computation, Feature-based Representation of Gaussian Process}
   \label{alg:gp_primal}
\begin{algorithmic}[1]
   \State {\bfseries Input:} Training data mini-batches $\{(\bX_m, \by_m)\}_{m=1}^M$. Fixed feature function $\phi:\Xsc \rightarrow \real^D$.
   \State {\bfseries Output:} Posterior mean and variance $\mathbb{E}[\bbeta]_{D \times 1}$, $\text{Cov}[\bbeta]_{D \times D}$.
   \State {\bfseries Initialize:} Feature-label product matrix $\bP = \bzero_{D \times 1}$, covariance matrix $\bSigma = \bzero_{D\times D}$
   \For{$m=1$ {\bfseries to} $M$}
   \State Compute minibatch feature representation $\bPhi_m = [\phi(\bx_1), \dots, \phi(\bx_{n_m})]_{n_m \times D}$
   \State Update $\bP = \bP +  \Phi_m^\top(\by_m - \Phi_m\bmu)/ \sigma^2$ 
   \State Update $\bSigma = \bSigma - \bSigma\Phi_m^\top(\sigma^2\bI + \bPhi_m \bSigma \bPhi_m^\top)^{-1}\Phi_m\bSigma$ 
   $\qquad\qquad\qquad\qquad  \triangleright$ \cref{eq:primal_cov_post}
   \EndFor
  \State Compute $\text{Cov}[\bbeta] = \bSigma_{\bbeta}=\bSigma$
  $\qquad\qquad\qquad\qquad\;\;\;  \triangleright$ \cref{eq:primal_coef_post}
  \State Compute $\mathbb{E}[\bbeta] = \bmu + \bSigma_{\bbeta}\bP$
  $\qquad\qquad\qquad\qquad\quad\;  \triangleright$ \cref{eq:primal_coef_post}
\end{algorithmic}
\end{algorithm}

As shown, during mini-batch computation, the algorithm computes the posterior mean and precision matrix by linearly accumulating the statistic $\Phi_m^\top(\by_m - \Phi_m\bmu)$, and performs one computation in the end to obtain the $\mathbb{E}[\bbeta]$. As a result, the space complexity of the algorithm is $O(D^2)$ (for the covariance matrix) and time complexity of the algorithm is $O(nD^3)$ for the matrix inversion. In large-scale applications, the model dimension $D$ is usually fixed and is significantly smaller than the data size $n$, leading to a linear-time algorithm. Notice that in actual implementation, this algorithm can be made much more efficient (i.e., $O(nD^2)$) by changing how covariance matrix is computed. We introduce this improved algorithm at the end of this section in \cref{alg:gp_primal_fast}.

Now consider (2). Given the posterior of $\bbeta$ from Algorithm \ref{alg:gp_primal}, the posterior distribution of the integrated partial derivative metric
$\psi_{j}(f)=\Vert\frac{\partial}{\partial \bx^j} f\Vert_{n}^{2}=\frac{1}{n}\bbeta^\top \frac{\partial \Phi}{\partial \bx_i^j}  \frac{\partial \Phi^\top}{\partial \bx_i^j} \bbeta$ can be computed conveniently by sampling $\bbeta$ from its posterior. 

\begin{algorithm}[htb]
   \caption{Posterior Computation, Integrated Partial Derivative Metric}
   \label{alg:ipd_primal}
\begin{algorithmic}[1]
   \State {\bfseries Input:} Data $\bX^*$ with size $n^*$. Posterior distribution $\mathcal{M}\mathcal{V}\mathcal{N}(\mathbb{E}[\bbeta]_{D \times 1},\text{Cov}[\bbeta]_{D \times D})$.
   \State {\bfseries Output:} Posterior samples of $\psi_{j}(f)$ of size $K$: $\{\psi_j(f)_k \}_{k=1}^K$
   \State Sample $\{\bbeta_k\}_{k=1}^K \sim \mathcal{M}\mathcal{V}\mathcal{N}(\mathbb{E}[\bbeta],\text{Cov}[\bbeta])$
   \State Compute partial derivative feature matrix $[\frac{\partial \Phi}{\partial \bx^j}]_{D \times N^*} = [\partial \phi(\bx_1)^\top, \dots, \partial \phi(\bx_{N^*})^\top]^\top$
   \State Compute $\bG_{j, D\times D} = \frac{\partial \Phi}{\partial \bx^j}  \frac{\partial \Phi^\top}{\partial \bx^j}$
   \State Compute $\psi_{j}(f)_k = \frac{1}{N^*} \bbeta_k^\top \bG_j \bbeta_k$ for $k=1,\dots,K$
   $\qquad\qquad\qquad\qquad\quad\;  \triangleright$ \cref{eq:varimp_post}
\end{algorithmic}
\end{algorithm}

When the data size is large, the $\bG_j$ matrices can usually be computed as part of \cref{alg:gp_primal} by accumulating gradient partial derivative matrices $\bG_j = \bG_j + \frac{\partial \Phi_m}{\partial \bx^j}  \frac{\partial \Phi_m^\top}{\partial \bx^j}$. The time complexity of the algorithm is $O(D^2 n^*)$ which is again a linear-time algorithm with respect to data size $n^*$. When the data size is extremely large, one can consider reduce computational burden by subsampling from $\bX^*$, which is equivalent to performing a Monte Carlo approximation to the integration over the empirical measure (\cref{eq:varimp}).

Finally, we present a more efficient implementation of \cref{alg:gp_primal}, which improved the run time from $O(nD^3)$ to $O(nD^2)$ by changing how covariance matrix is computed during minibatch accumulation:
\begin{algorithm}[htb]
   \caption{Posterior Computation, Feature-based Representation of Gaussian Process  (Version 2)}
   \label{alg:gp_primal_fast}
\begin{algorithmic}[1]
   \State {\bfseries Input:} Training data mini-batches $\{(\bX_m, \by_m)\}_{m=1}^M$. Fixed feature function $\phi:\Xsc \rightarrow \real^D$.
   \State {\bfseries Output:} Posterior mean and variance $\mathbb{E}[\bbeta]_{D \times 1}$, $\text{Cov}[\bbeta]_{D \times D}$.
   \State {\bfseries Initialize:} Feature-label product matrix $\bP = \bzero_{D \times 1}$, precision matrix $\bS = \bI_{D\times D}$
   \For{$m=1$ {\bfseries to} $M$}
   \State Compute minibatch feature representation $\bPhi_m = [\phi(\bx_1), \dots, \phi(\bx_{n_m})]_{n_m \times D}$
   \State Update $\bP = \bP +  \Phi_m^\top(\by_m - \Phi_m\bmu)/ \sigma^2$ 
   \State Update $\bS = \bS + \Phi_m^\top\Phi_m / \sigma^2$ 
   \EndFor
  \State Compute $\text{Cov}[\bbeta] = \bSigma_{\bbeta} = \bS^{-1}$
  $\qquad\qquad\qquad\qquad  \triangleright$ \cref{eq:primal_coef_post}
  \State Compute $\mathbb{E}[\bbeta] = \bmu + \bSigma_{\bbeta}\bP$
  $\qquad\qquad\qquad\qquad\quad\;  \triangleright$ \cref{eq:primal_coef_post}
\end{algorithmic}
\end{algorithm}

As shown, during mini-batch computation, the algorithm computes the posterior mean and precision matrix by linearly accumulating two statistics $\Phi_m^\top(\by_m - \Phi_m\bmu)$ and $\Phi_m^\top\Phi_m / \sigma^2$, and performs one matrix inversion in the end to obtain the covariance matrix $\bSigma_{\bbeta}$ (hence even more efficient than the Woodbury update formula introduced in \cref{alg:gp_primal}, which requires an inversion for every single update step). As a result, the space complexity of the algorithm is $O(D^2)$ (for the covariance matrix) and time complexity of the algorithm is $O(nD^2 + D^3)$. Since in practice, the model dimension $D$ is usually fixed and much smaller than $n$, the time complexity is in fact $O(n D^2)$,

\newpage
\section{Featurized Representation of ML Models}
\label{sec:feature_gp_examples}
The second key advantage of the feature-based representation (\ref{eq:gp_primal}) is its generality: a wide range of machine learning models can be written in term of the feature-based form $f(\bx)=\phi(\bx)^\top\bbeta$ \cite{rahimi_random_2007, davies2014random, lee2017deep}, making the Gaussian process a unified framework for quantifying model uncertainty with a wide array of modern machine learning models. 
This section enumerates a few important examples:

\textbf{Generalized Additive Models (GAM)}. For a regression task with $d$ input features, a generalized additive model (GAM) has the form
$f(\bx)=\beta_0 + \sum_{j=1}^d \beta_j h_j(\bx^j)$, where $h_j's$ are flexible functions (e.g., splines) with bounded norm \cite{hastie2009elements}. GAM induces a $d$-dimensional feature representation (\citep{hastie2009elements}, Chapter 9):
$$\phi(\bx)_{d \times 1} = [1, h_1(\bx^1), \dots, h_d(\bx^d)],
$$ 
where $h_j's$ are usually spline functions that are differentiable. In the special case where all $h_j's$ are identity functions, GAM reduces to a linear model, and the corresponding $f=\phi(\bx)^\top\bbeta$ becomes a \gls{GP} with linear kernel.

\textbf{Decision Trees}.
By partitioning the whole feature space into $D$ cells $\Xsc= \cup_{j=1}^D \Xsc_j$, a decision tree model essentially induces a one-hot feature map, e.g., 
$$\phi(\bx)_{D \times 1}=[0, \dots, 1, \dots, 0],$$
where each element is a indicator function $\mathbbm{1}(\bx \in \Xsc_j)$ for whether the data point $\bx$ falls into the $j^{th}$ cell (\cref{feature-exp}). This connection is crucial for extending Gaussian process treatment to tree models. \cref{sec:fdt} introduce this formulation in more detail. Following the same construction, the features learned by the majority of partition-based learning methods (e.g., CART, PRIM, etc.) can be used to construct Gaussian process kernels.

\textbf{Random Feature Models}. 
The random-feature model takes the form: 
$$\phi(\bx)_{D \times 1} = \sqrt{2} \sigma(\bW^\top \bx + \bb),$$
where $\bW_{d \times D}$ and $\bb_{D \times 1}$ are frozen weights initialized from i.i.d. samples from certain fixed distributions, and $\sigma$ is an activation function. For example, in the case of classic random Fourier features whose inner product approximates the RBF kernel, we have $\sigma(\cdot)=\cos(\cdot), \bW \stackrel{iid}{\sim} N(0, 1), \bb \stackrel{iid}{\sim} Unif(0, 2\pi)$ \cite{liu2021random}. Although first introduced as a scalable approximation to \gls{GP} models equipped with certain kernels (e.g., radial basis function (RBF)), the modern literature treats it as a standalone class of models with its own unique set of theoretic guarantees \citep{Mei2019TheGE, pmlr-v119-jacot20a, NIPS2008_0efe3284}. 

\textbf{(Deep) Neural Networks}.
For a trained $L$-layer neural network of the from $f(\bx)=\bbeta^\top g_L \cdot g_{L-1} \dots \cdot g_0(\bx)$ with $g_l(\bx)=\sigma_l(\bW_l^\top\bx + \bb_l)$, the last-layer representation function 
$$\phi(\bx)=g_L \cdot g_{L-1} \dots \cdot g_1(\bx)$$ 
can be understood as the feature map. Then, the feature map can be used to construct the Gaussian process kernel $k(\bx, \bx')=\phi(\bx)^\top \phi(\bx')$. This approach was studied extensively in prior literature, due to a neural network's appealing ability in learning an effective representation for the task at hand \citep{hinton2007using, calandra2016manifold}. Works like \citep{wilson2016deep, wilson2016stochastic, liu2020simple} further extended this in the context of modern deep learning.

\textbf{Ensembles.} An ensemble model of linear models, trees, or neural networks can be written as a mixture of Gaussian processes. Specifically, an ensemble model can be written as $f(\bx)=\sum_{m=1}^M \alpha_m h_m(\bx)$, where $h_m's$ are weak learners such as linear models, trees, or neural networks, and $\alpha_m$ are model weights that are either learned or set to uniform $\frac{1}{M}$. This formulation covers well-known examples such as AdaBoost, boosted trees, and random forests 
\cite{
hastie2009elements}. 
As introduced above, since many classic weak learners $h_m=\phi_m(\bx)^\top\bbeta_m$ induces a Gaussian process with kernel $k_m$ via their feature representation $k_m(\bx, \bx') = \phi_m(\bx')^\top\phi_m(\bx)$, the full ensemble model 
induces a mixture of Gaussian processes with fixed mixing weights dictated by the ensemble weights $\{\alpha\}_{m=1}^M$. That is, the ensemble induces a Bayesian model $f'(\bx)=\sum_{m=1}^M \alpha_m h'(\bx)$ where $\alpha_m$'s are fixed constants and $h_m'(\bx)$'s are Gaussian process models with prior $\Gsc\Psc(0, k_m)$. In the actual implementation, we fit each of the individual \gls{GP} model $h_m'(\bx)$ following exactly how it is done in the original ensemble model. For example, for random forest models, we fit each $h_m'(\bx)$ models independently with respect to the original label $y$. While not a focus of this work, for gradient boosting models, we fit $h'_m$'s recursively with respect to the residual $y - \sum_{l<m}\alpha_l h'_l(\bx)$ \citep{sigrist2020gaussian}.

\clearpage
\section{Proof for Posterior Convergence}
\label{sec:proof_convergence}

\textbf{Proof for \cref{thm:1}}\\
Recall the list of technical conditions: 
\begin{enumerate}[i)]
\item \textbf{(Convergence of Prediction Function $f$)} The posterior distribution $\Pi_n(f)$ converges toward $f_0$ at a rate of $\epsilon_n$. (Note that in nonparametric learning setting, this rate is not faster than $O_p(n^{-\frac{1}{2}})$ which is the optimal parametric rate);
\item \textbf{(Well-conditioned Derivative Functions)} $D_j: f \to \frac{\partial}{\partial \bx^j} f$ the differentiation operator is bounded: $\Vert D_j \Vert_{op}^2 = \inf \{C\geq 0: \Vert D_j f\Vert_2^2 \leq C \Vert f \Vert_2^2, \text{ for all } f \in \Hsc_{\phi}\}$; 
\end{enumerate}

\begin{proof}
Denote $A_n = \{f : \Vert f - f_0\Vert^2_n > M_n \epsilon_n \}$ and $B_n = \{f : |\psi_j(f) - \Psi_j(f_0)| > M_n \epsilon_n \}$, then showing the statement in \cref{thm:1} is equivalent to showing $\Pi_n(B_n) \rightarrow 0$.

Specifically, we assume below two facts hold:
\begin{enumerate}
\item[\textbf{Fact 1}.] $|\psi_j(f) - \psi_j(f_0)|\leq \Vert D_jf - D_jf_0\Vert_n^2$
\item[\textbf{Fact 2}.]  
$\sup_{j \in \{1, \dots, d\}}|\psi_j(f_0) - \Psi_j(f_0)| \lesssim \Vert f - f_0\Vert^2_n$
\end{enumerate} 
Because if the above facts hold, we then have
\begin{align*}
\sup_{j\in \{1, \dots, d\}}|\psi_j(f) - \Psi_j(f_0)| &\leq 
\sup_{j\in \{1, \dots, d\}}|\psi_j(f) - \psi_j(f_0)| + \sup_{j\in \{1, \dots, d\}}|\psi_j(f_0) - \Psi_j(f_0)| \\
&\leq \sup_{j\in \{1, \dots, d\}}\Vert D_jf - D_jf_0\Vert^2_n + \sup_{j\in \{1, \dots, d\}}|\psi_j(f_0) - \Psi_j(f_0)|\\
& \leq \sup_{j\in \{1, \dots, d\}}\Vert D_jf - D_jf_0\Vert^2_2 + O_p(n^{-\frac{1}{2}}) + 
\sup_{j\in \{1, \dots, d\}}|\psi_j(f_0) - \Psi_j(f_0)| \\
& \leq C  \Vert f - f_0\Vert^2_2 + O_p(n^{-\frac{1}{2}}) +
\sup_{j\in \{1, \dots, d\}}|\psi_j(f_0) - \Psi_j(f_0)| \tag{$D_j$ is bounded}\\
& \leq C  \Vert f - f_0\Vert^2_n + O_p(n^{-\frac{1}{2}}) +
\sup_{j\in \{1, \dots, d\}}|\psi_j(f_0) - \Psi_j(f_0)|\\
& \lesssim \Vert f - f_0\Vert^2_n.
\end{align*}
It then follows that:
\begin{align*}
\mathbb{E}_0\Pi_n \Big(\sup_{j\in \{1, \dots, d\}}|\psi_j(f) - \Psi_j(f_0)| \geq M_n \epsilon_n \Big) 
& \lesssim
\mathbb{E}_0\Pi_n \Big( \Vert f - f_0\Vert^2_n \geq M'_n \epsilon_n \Big)
\rightarrow 0.
\end{align*}

We now show Facts 1 and 2 are true. 
\begin{itemize}
\item \textbf{Fact 1} follows simply from the triangular inequality:
\begin{align*}
|\psi_j(f) - \psi_j(f_0)| 
&= 
\Big| \Vert D_j f \Vert_n^2 - \Vert D_j f_0 \Vert_n^2 \Big|\\
&= 
\max\Big\{
\Vert D_j f \Vert_n^2 - \Vert D_j f_0 \Vert_n^2,\;
\Vert D_j f_0 \Vert_n^2 - \Vert D_j f \Vert_n^2
\Big\}
\leq \Vert D_j f -  D_j f_0 \Vert_n^2.
\end{align*}

\item \textbf{Fact 2} follows from standard Bernstein-type concentration inequality (see, e.g., Lemma 18 of \cite{rosasco_nonparametric_2013}). Specifically, for $| D_jf_0(\bx)|^2$ a random variable with respect to probability measure $P(\bx)$ that is bounded by $L$. Given $n$ iid samples $\{| D_jf_0(\bx_i)|^2\}_{i=1}^n$, recall that $\psi_j(f_0) = \frac{1}{n}\sum_{i=1}^n | D_jf_0(\bx_i)|^2$ and $\Psi(f_0) = \mathbb{E}(| D_jf_0|^2)$, then with probability $1 - \eta$:
\begin{align*}
|\psi_j(f_0) - \Psi(f_0)| \leq 
n^{-\frac{1}{2}} * \big(2\sqrt{2} * L * \log(2/\eta)\big),
\end{align*}
that is, $|\psi_j(f_0) - \Psi(f_0)| \rightarrow 0$ at the rate of $O(n^{-\frac{1}{2}})$. Notice that $O(n^{-\frac{1}{2}})$ is the optimal parametric rate that cannot be surpassed by the convergence speed of the ReLU networks (recall the typical convergence rate is $\epsilon_n \asymp n^{-\frac{\beta}{2\beta + \delta}}* \log(n)^\gamma$ for some $\delta > 0$ and $\gamma > 1$). Therefore we have:
$$
\sup_{j \in \{1, \dots, d\}}|\psi_j(f_0) - \Psi_j(f_0)| \lesssim \Vert f - f_0\Vert^2_n.
$$
\end{itemize}
\end{proof}

\begin{remark}
The sample $L_2$ norm and the expected $L_2$ norm are closed to each other at the rate of $O(n^{-\frac{1}{2}})$. This will happen when $\bx$'s are random, coming from a distribution.
\end{remark}

\begin{remark}
Although not listed explicitly in the main theorem, we also impose a weak technical condition (i.e., Non-trivial Gradient Function) on model function $f$ and true function $f_0$ to avoid certain pathological situations:
\begin{enumerate}
    \item[iii)] \textbf{(Non-trivial Derivative Functions)} Denote $j^* \in \{1, \dots, d^*\}$ the index of the causal variables, and recall $P_\Xsc(\bx)$ the distribution of the input features $\bx$. Then there exists $\epsilon > 0$ such that for all $j^* \in \{1, \dots, d^*\}$, $||D_{j^*} f_0(\bx)||_2^2 > \epsilon$ and $||D_{j^*} f(\bx)||_2^2 > \epsilon$ with non-zero probability.
\end{enumerate}
Note that this condition is weak in that it only requires the partial derivative under model function $f$ and $f_0$ are not zero almost everywhere. For differentiable functions under continuous features, this should be satisfied by definition. This basic technical condition is intended to remove two pathological situations. The first is non-differentiable models (e.g., tree models), whose gradient is zero almost everywhere in the feature space. The second case are the discrete features, where the traditional sense of partial derivative is not well defined. In \cref{sec:non_diff}, we discuss how to incorporate non-differentiable models and discrete features into our framework. Briefly, a non-differentiable model (e.g., partition-based models) can be made differentiable by employing a differentable approximation. For discrete features, we can compute the discrete version of the differentiable operator, e.g., $D_j f(\bx) = f(\bx^j=1, \bx^{-j}) - f(\bx^j=0, \bx^{-j})$ for binary feature where $\bx_{d \times 1}=[\bx^j, [\bx^{-j}]^\top_{(d-1) \times 1}]^\top$ (known as \textit{contrast} in statistics). Notice that this discrete differentiation operator $D_j f(\bx)$ is a linear function of the original prediction function $f$. As a result, the posterior convergence of $\psi_j$ with respect to this operator is again guaranteed by the convergence of the prediction $f$.
\end{remark}

\begin{remark}
Note that our result focuses on posterior concentration of variable importance $\psi_j$, not of prediction function $f$. In fact, the convergence of $\psi_j$ depends on the convergence of the prediction function $f$, as introduced in the assumptions of \ref{thm:1}. In practice, it is up to the practitioners to select a proper prediction model $f$ that has a convergence guarantee for the task at hand. Specifically, we showed that for any model, if its prediction function has a posterior concentration guarantee, its variable importance has a convergence guarantee as well. 
To this end, we notice that majority of popular machine learning methods (e.g., random features, neural networks, tree ensembles) has a posterior concentration guarantee for target functions in certain general function space (e.g., the space of $\alpha$-H\"{o}lder space), given the recent advances in the approximation and convergence guarantees of parametric (finite-dimensional) ML models in both frequentist and Bayesian settings \cite{rovckova2020posterior, wang2020uncertainty, liu_variable_2021, schmidt-hieber_nonparametric_2020}. 

Furthermore, we note that although the ML models covered in our work are not traditional universal kernels \cite{micchelli2006universal}, most of them (e.g., random features, neural networks, tree ensembles) do come with a universal approximation guarantee for an appropriately defined function class \cite{rahimi2008uniform, biau2012analysis, schmidt-hieber_nonparametric_2020}. As a result, the kernel functions defined by these models provide basis functions that span function spaces that are often dense in an infinite-dimensional RKHS, implying that the resulting model can approximate $f_0$ to arbitrary precision \cite{rahimi2008uniform}. Please see \cite{rahimi2008uniform, hornik1989multilayer, biau2012analysis} for specific results for random features, neural networks and random forests.
\end{remark}

\newpage
\section{Proofs for Asymptotic Normality}
\label{sec:proof_bvm}

\begin{lemma}{Functional Delta Method (univariate)} 
\label{lem:3}
Suppose $\Psc_n$ is the empirical distribution of a random sample $X_1, \dots, X_n$ from a distribution $P$, and $\phi$ is a function that maps the distribution of interest into some space. Define the Gateaux derivative 
\[\phi_P '(\delta_x - P) = \frac{\mathrm{d} }{\mathrm{d} t}\mid_{t = 0} \phi((1 - t)P + t\delta_x) = IF_{\phi, P}(x),\]
which is also the Influence Function, and $\gamma^2 = \int IF_{\phi, P}(x)^2 dP$. If integration and differentiation can be exchanged, then 
\[\int \phi_P '(\delta_x - P) dP = 0. \]
Further, if $\sqrt{n} R_n \overset{P}{\rightarrow}0$, where 
\[R_n = \phi(\Psc_n) - \phi(P) - \frac{1}{n}\sum_{i} \phi_P'(\delta_{x_i} - P), \]
then from the Central Limit Theory that
\[\sqrt{n}(\phi(\Psc_n) - \phi(P)) \overset{d}{\rightarrow} \mathcal{N}(0, \gamma^2).\]
\end{lemma}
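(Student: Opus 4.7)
The statement consists of two separate claims: (a) the influence function has mean zero under $P$, and (b) the empirical functional $\phi(\Psc_n)$ is asymptotically normal. My plan is to handle them sequentially, relying on the linearity of the Gateaux derivative for (a) and on the classical CLT plus Slutsky's theorem for (b).

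For claim (a), the plan is to exploit the fact that $\phi_P'(\cdot)$ is linear in its directional argument (this is the defining property of the Gateaux derivative when viewed as a bounded linear functional on the tangent space of signed measures). Using the hypothesis that integration and differentiation can be exchanged, I would compute
\begin{equation*}
\int \phi_P'(\delta_x - P)\, dP(x)
= \phi_P'\!\left( \int (\delta_x - P)\, dP(x) \right)
= \phi_P'(P - P) = \phi_P'(0) = 0,
\end{equation*}
where the identity $\int \delta_x\, dP(x) = P$ is just the statement that $P$ is recovered by integrating point masses against itself, and $\phi_P'(0)=0$ follows from the definition $\phi_P'(0) = \frac{d}{dt}|_{t=0}\phi(P) = 0$ since $\phi(P)$ does not depend on $t$.

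For claim (b), the plan is to use the decomposition supplied in the statement,
\begin{equation*}
\sqrt{n}\bigl(\phi(\Psc_n) - \phi(P)\bigr)
= \frac{1}{\sqrt{n}}\sum_{i=1}^n \phi_P'(\delta_{x_i} - P) + \sqrt{n}\, R_n.
\end{equation*}
Define $Y_i := \phi_P'(\delta_{x_i} - P) = IF_{\phi,P}(x_i)$. By independence of the $X_i$'s, the $Y_i$ are i.i.d.; by claim (a) they have mean zero; and by the definition of $\gamma^2$ they have variance $\gamma^2$. The classical Lindeberg-L\'evy CLT then yields $\frac{1}{\sqrt{n}}\sum_{i=1}^n Y_i \overset{d}{\to} \mathcal{N}(0, \gamma^2)$. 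Combining this with the assumption $\sqrt{n}R_n \overset{P}{\to} 0$ via Slutsky's theorem delivers the claimed weak convergence.

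The main (and essentially only) subtlety is justifying the exchange of integration and differentiation used in claim (a), which is not proved here but is granted by hypothesis; in applications this step typically requires a uniform integrability or dominated convergence argument on the family $\{t \mapsto \phi((1-t)P + t\delta_x)\}_x$ near $t=0$. Everything else is bookkeeping: verifying that $Y_i$ has finite second moment $\gamma^2$ (which is exactly the definition given), checking the i.i.d.\ structure, and invoking CLT and Slutsky. I do not anticipate needing any further machinery beyond these standard tools.
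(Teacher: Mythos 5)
Your proposal is correct, but note that the paper itself states this lemma without proof --- it is presented as a classical result (the von Mises / functional delta method) that is then invoked in the proof of Theorem 2, so there is no in-paper argument to compare against. Your two-step argument is the standard textbook one and it works: part (a) uses linearity of the Gateaux derivative in its directional argument together with the interchange $\int \phi_P'(\delta_x - P)\,dP = \phi_P'\bigl(\int \delta_x\,dP - P\bigr) = \phi_P'(0) = 0$, which is exactly what the lemma's ``integration and differentiation can be exchanged'' hypothesis is meant to license; part (b) is the asymptotic linearization $\sqrt{n}(\phi(\Psc_n)-\phi(P)) = n^{-1/2}\sum_i IF_{\phi,P}(x_i) + \sqrt{n}R_n$ followed by the Lindeberg--L\'evy CLT and Slutsky. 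Two small points worth making explicit if you write this up: (i) the CLT step requires $\gamma^2 = \int IF_{\phi,P}^2\,dP < \infty$, which the lemma implicitly assumes by defining $\gamma^2$ but never asserts is finite; and (ii) linearity of $\phi_P'$ in the direction is not automatic for a bare Gateaux derivative, so either state it as part of the regularity hypotheses or observe that it holds for the quadratic functionals $\psi_j$ to which the lemma is actually applied in the paper.
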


\vspace{2em}

\begin{lemma}{Functional Delta Method (multivariate)} 
\label{lem:4}
Suppose $\mathbf{\Psc}_n$ is the empirical distribution of a random sample $X_1, \dots, X_n$ from a distribution $P$, and $\bphi: \mathbb{R}^d \to \mathbb{R}^k$. Define the Gateaux derivative 
\[\bphi_P '(\delta_x - P) = \frac{\mathrm{d} }{\mathrm{d} t}\mid_{t = 0} \bphi((1 - t)P + t\delta_x) = IF_{\bphi, P}(x),\]
which is also the Influence Function, and $[\bV_0]_{i, j} = \int \langle [IF_{\bphi, P}(x)]_{i}, [IF_{\bphi, P}(x)]_{j} \rangle dP$. If integration and differentiation can be exchanged, then 
\[\int \bphi_P '(\delta_x - P) dP = 0. \]
Further, if $\sqrt{n} \bR_n \overset{P}{\rightarrow}0$, where 
\[\bR_n = \bphi(\mathbf{\Psc}_n) - \bphi(P) - \frac{1}{n}\sum_{i} \bphi_P'(\delta_{x_i} - P), \]
then from the Central Limit Theory that
\[\sqrt{n}(\bphi(\mathbf{\Psc}_n) - \bphi(P)) \overset{d}{\rightarrow} \mathcal{MVN}(0, \bV_0).\]
\end{lemma}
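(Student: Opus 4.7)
The plan is to follow the standard proof of the (multivariate) functional delta method, decomposing the target quantity into a sum of i.i.d. terms plus a negligible remainder, and then applying the multivariate Central Limit Theorem together with Slutsky's theorem.

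First, I would verify the mean-zero claim $\int \bphi_P'(\delta_x - P)\, dP(x) = 0$. Under the stated assumption that integration and differentiation can be exchanged, we have
\begin{align*}
\int \bphi_P'(\delta_x - P)\, dP(x)
 = \int \frac{d}{dt}\bigg|_{t=0} \bphi\bigl((1-t)P + t\delta_x\bigr)\, dP(x)
 = \frac{d}{dt}\bigg|_{t=0} \int \bphi\bigl((1-t)P + t\delta_x\bigr)\, dP(x).
\end{align*}
Since $\int \bigl[(1-t)P + t\delta_x\bigr]\, dP(x) = P$ regardless of $t$, the integrand of the outer integral reduces to a constant (in $t$) distribution $P$, and its derivative in $t$ vanishes. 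This yields the centering identity component-wise in $\bphi_P'$.

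Next, I would write the decomposition directly from the hypothesis on $\bR_n$:
\begin{align*}
\sqrt{n}\bigl(\bphi(\mathbf{\Psc}_n) - \bphi(P)\bigr)
 = \frac{1}{\sqrt{n}}\sum_{i=1}^n \bphi_P'(\delta_{X_i} - P) + \sqrt{n}\,\bR_n.
\end{align*}
The summands $\bZ_i := \bphi_P'(\delta_{X_i} - P)$ are i.i.d.\ $\mathbb{R}^k$-valued random vectors (they are deterministic functionals of the i.i.d.\ $X_i$). By the centering identity just established, $\mathbb{E}[\bZ_i] = 0$. The covariance structure is exactly the matrix $\bV_0$ in the statement, since
\begin{align*}
[\mathrm{Cov}(\bZ_1)]_{ij}
 = \mathbb{E}\bigl[[\bZ_1]_i [\bZ_1]_j\bigr]
 = \int \langle [IF_{\bphi,P}(x)]_i, [IF_{\bphi,P}(x)]_j\rangle\, dP(x)
 = [\bV_0]_{ij}.
\end{align*}
Assuming $\bV_0$ has finite entries (which is implicit in its definition and standard for this lemma), the multivariate Central Limit Theorem yields $n^{-1/2}\sum_{i=1}^n \bZ_i \overset{d}{\to} \mathcal{MVN}(0, \bV_0)$.

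Finally, I would combine the two pieces. Since the hypothesis gives $\sqrt{n}\,\bR_n \overset{P}{\to} 0$ (a vector convergence in probability to zero), an application of Slutsky's theorem for $\mathbb{R}^k$-valued random vectors gives
\begin{align*}
\sqrt{n}\bigl(\bphi(\mathbf{\Psc}_n) - \bphi(P)\bigr) \overset{d}{\to} \mathcal{MVN}(0, \bV_0),
\end{align*}
as claimed. The main conceptual point to be careful about is the exchange of integration and differentiation used to establish centering (this is exactly the hypothesis invoked in the statement), and the implicit requirement that each component of $IF_{\bphi,P}(x)$ be square-integrable under $P$ so that the multivariate CLT applies; no other obstacle is encountered, since the argument otherwise reduces to a vector-valued analogue of the univariate Lemma~\ref{lem:3}.
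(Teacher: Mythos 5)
The paper states Lemma~\ref{lem:4} without proof, treating it (like its univariate counterpart Lemma~\ref{lem:3}) as a standard form of the von Mises / functional delta method, so there is no in-paper argument to compare yours against. Your overall structure is the standard and correct one: the hypothesis on $\bR_n$ gives the exact decomposition $\sqrt{n}(\bphi(\mathbf{\Psc}_n)-\bphi(P)) = n^{-1/2}\sum_i \bphi_P'(\delta_{X_i}-P) + \sqrt{n}\,\bR_n$, the summands are i.i.d.\ mean-zero random vectors with covariance $\bV_0$, the multivariate CLT applies (granting square-integrability of the influence function, which you correctly flag as implicit), and Slutsky's theorem absorbs the remainder.

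The one step that does not hold as written is your justification of the centering identity. After exchanging differentiation and integration you assert that $\int \bphi((1-t)P+t\delta_x)\,dP(x)$ is constant in $t$ because $\int[(1-t)P+t\delta_x]\,dP(x)=P$; this implicitly pushes the integral inside $\bphi$, which is only valid for linear $\bphi$. For a nonlinear functional, e.g.\ $\bphi(Q)=(\int g\,dQ)^2$, one computes $\int \bphi((1-t)P+t\delta_x)\,dP(x)=(\int g\,dP)^2+t^2\,\mathrm{Var}_P(g)$, which is not constant in $t$ --- although its derivative at $t=0$ does happen to vanish. The correct route to $\int \bphi_P'(\delta_x-P)\,dP=0$ is to use the linearity of the Gateaux derivative in its \emph{direction} argument (which is what the ``integration and differentiation can be exchanged'' hypothesis is standing in for): $\int \bphi_P'(\delta_x-P)\,dP(x)=\bphi_P'\big(\int(\delta_x-P)\,dP(x)\big)=\bphi_P'(0)=0$. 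With that repair, the rest of your argument goes through unchanged.
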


\vspace{1em}
\textbf{Proof for \cref{thm:2}}\\
To make our assumptions explicit, we list out a collection of easily-satisfied technical conditions. 
\begin{enumerate}[(1)]
\item $f$ is a consistent estimator of $f_0$;
\item $ D_j$ is bounded: $\Vert D_j \Vert_{op}^2 = \inf \{C\geq 0: \Vert D_j f\Vert_2^2 \leq C \Vert f \Vert_2^2, \text{ for all } f \in \Hsc_{\phi}\}$.
\item $f_0$ is square-integrable over the support of $X$ and $\Vert f_0\Vert_2=1$;
\item $\text{\texttt{rank}}(H_j) = o_p(\sqrt{n})$;
\end{enumerate}
\begin{proof}
Since $H_j = D_j^\top D_j$, Condition (2) is equivalent to the largest eigenvalue of $H_j$ being bounded, i.e., $\lambda_{max}(H_j)=O_p(1)$. From the definition in \cref{eq:def_psi}, we have 
\[\psi_j'(f) = \frac{\partial}{\partial f} \psi_j(f)= \frac{2}{n}H_jf. \]
Define a mean functional $m: F \to E(F)$, where $F$ is the distribution. Then in our case, $f_0 = E(F) = m(F)$. According to \cref{lem:3}, we have 
\[\psi_j(f_0) = \psi_j(E(F)) = \psi_j(m(F)) = \phi(F),\]
i.e., $\phi(\cdot) = \psi_j(m(\cdot))$.
Therefore,
\begin{align*}
\phi_F '(\delta_y - F)  &= \psi_j'(m(\delta_y - F)) \\
&= \frac{\mathrm{d} }{\mathrm{d} t}\mid_{t = 0} \psi_j(m((1 - t)F + t\delta_y)) \\
&= \frac{\mathrm{d} }{\mathrm{d} t}\mid_{t = 0} \psi_j((1 - t)f_0 + ty) \\
&= \frac{\mathrm{d} }{\mathrm{d} t}\mid_{t = 0} \frac{1}{n}[(1 - t)f_0 + ty]^\top H_j [(1 - t)f_0 + ty] \\
&= \frac{2}{n}(y - f_0)^\top H_j f_0 \\
&= IF_{\phi, F}(y).
\end{align*}
On the other hand, 
\begin{align*}
\gamma^2 &= \int IF_{\phi, F}(y)^2 dF\\
&= 4\int \frac{1}{n}\cdot f_0^\top H_j (y - f_0) (y - f_0)^\top H_j f_0 \cdot \frac{1}{n}dF \\
&= 4\sigma^2 \Vert H_j f_0 \Vert_n^2.
\end{align*}
Moreover, we have 
\[\int \phi_F '(\delta_y - F) dF = \frac{2}{n}\int (y - f_0)^\top H_j f_0 dF = 0, \]
and 
\begin{align}
\sqrt{n}R_n &= \sqrt{n}[\phi(\Fsc_n) - \phi(F) - \frac{1}{n}\sum_{i} \phi_F'(\delta_{y_i} - F)] \nonumber \\
&= \sqrt{n}[\psi_j(f) - \psi_j(f_0)\nonumber  - \frac{1}{n}\cdot \frac{2}{n}\sum_{i}(y_i - f_{0, i})^\top [H_j f_0]_i]\nonumber \\
&= \sqrt{n}[\frac{1}{n}\cdot (f^\top H_j f - f_0^\top H_jf_0)\nonumber - \frac{1}{n}\cdot \frac{2}{n}(y - f_0)^\top H_j f_0]\nonumber \\
&= \frac{1}{\sqrt{n}}[f^\top H_j f - f^\top H_j f_0 + f^\top H_j f_0  - f_0^\top H_jf_0 - \frac{2}{n}(y - f_0)^\top H_j f_0]\nonumber \\
&= \frac{1}{\sqrt{n}}[(f - f_0)^\top H_j (f + f_0) - \frac{2}{n}(y - f_0)^\top H_j f_0]\nonumber \\
&= \frac{1}{\sqrt{n}}[(f - f_0)^\top H_j (f - f_0) + 2(f - f_0)^\top H_jf_0 - \frac{2}{n}(y - f_0)^\top H_j f_0]\nonumber \\
&= \frac{1}{\sqrt{n}}[\hat{\epsilon}_n^\top H_j \hat{\epsilon}_n + 2\hat{\epsilon}_n^\top H_jf_0 - \frac{2}{n}(y - f_0)^\top H_j f_0] \label{unit-2} \\
&= \frac{1}{\sqrt{n}}o_p(\sqrt{n})\label{unit-3} \\
&= o_p(1) \overset{P}{\rightarrow} 0, \nonumber 
\end{align}
where $\hat{\epsilon}_n = f - f_0$. We can prove the result from  \cref{unit-2} to \cref{unit-3} as following:
Denote $k=\text{\texttt{rank}}(H_j)$, then the eigendecomposition of $H_j$ is $H_j = U_j \Lambda U_j^\top$, with $U_j=[\bu_1, \dots, \bu_k]$ a $n\times k$ orthogonal matrix and $\Lambda$ a $k\times k$ diagonal matrix with elements $\{\lambda_i\}_{i=1}^k$ being the eigenvalues of $H_j$, then define 
\[\bv=U_j^\top \hat{\epsilon}_n=\begin{bmatrix}
\bu_1^\top \hat{\epsilon}_n\\ 
\vdots\\ 
\bu_k^\top \hat{\epsilon}_n
\end{bmatrix}. \]
Therefore,
\begin{align*}
\hat{\epsilon}_n^\top H_j \hat{\epsilon}_n &= \bv^\top \Lambda \bv = \sum_{i=1}^k \lambda_i v_i^2\\
&\leq \lambda_{max}(H_j) \sum_{i=1}^k v_i^2\\
&=  \lambda_{max}(H_j) \sum_{i=1}^k \bu_i^\top \hat{\Sigma}_n \bu_i\\
&\leq \lambda_{max}(H_j) \sum_{i=1}^k \lambda_{max}(\hat{\Sigma}_n)\\
&= k \cdot \lambda_{max}(H_j) \cdot \lambda_{max}(\hat{\Sigma}_n)\\
&= o_p(\sqrt{n}) \cdot O_p(1) \cdot O_p(1)\\
&= o_p(\sqrt{n}),
\end{align*}
where $E(\hat{\epsilon}_n) = \mathbf{0}$, $\cov(\hat{\epsilon}_n) = \hat{\Sigma}_n$, and $\lambda_{max}(\hat{\Sigma}_n)$ is the largest eigenvalue of $\hat{\Sigma}_n$.

On the other hand, $2\hat{\epsilon}_n^\top H_jf_0=o_p(\sqrt{n})$ because $f$ is a consistent estimator of $f_0$. Moreover, since $y_i - f_{0, i}=O_p(1)$, we know $\frac{2}{n}(y - f_0)^\top H_j f_0=o_p(\sqrt{n})$. So,
\begin{align}
\hat{\epsilon}_n^\top H_j \hat{\epsilon}_n + 2\hat{\epsilon}_n^\top H_jf_0 - \frac{2}{n}(y - f_0)^\top H_j f_0=o_p(\sqrt{n}) \label{op}
\end{align}

Therefore, by \cref{lem:3}, we have 
\[\sqrt{n}(\psi_j(f) - \psi_j(f_0)) \overset{d}{\rightarrow} \mathcal{N}(0, 4\sigma^2\Vert H_j f_0 \Vert_n^2). \]
\end{proof}

\begin{theorem}[Asymptotic Distribution of Variable Importance (multivariate)]
Suppose $y_i = f_0(\bx_i) + e_i, \; e_i \overset{i.i.d.}{\sim} \mathcal{N}(0, \sigma^2), \; i=1, \dots, n$. Denote $\bpsi=[\psi_1, \dots, \psi_d]$ for $\psi_j$ as defined in \cref{eq:def_psi}. 
If the following conditions are satisfied:
\begin{enumerate}[i)]
\item $\text{\texttt{rank}}(H_j) = o_p(\sqrt{n}), j=1, \dots, d$;
\item $f_0$ is square-integrable over the support of $X$ and $\Vert f_0\Vert_2=1$;
\item $f$ is a consistent estimator of $f_0$;
\item $ D_j$ is bounded: $\Vert D_j \Vert_{op}^2 = \inf \{C\geq 0: \Vert D_j f\Vert_2^2 \leq C \Vert f \Vert_2^2, \text{ for all } f \in \Hsc_{\phi}\}$.
\end{enumerate}

Then $\bpsi(f)$ asymptotically converges toward a multivariate normal distribution surrounding $\bpsi(f_0)$, i.e.,
\[\sqrt{n}(\bpsi(f) - \bpsi(f_0)) \overset{d}{\rightarrow} \mathcal{MVN}(\mathbf{0}, \bV_0), \]
where $\bV_0$ is a $d\times d$ matrix such that $[\bV_0]_{j1, j2}=4\sigma^2 \langle H_{j1} f_0, H_{j2} f_0 \rangle_n$.
\end{theorem}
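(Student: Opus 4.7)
The plan is to reduce the multivariate claim to the scalar Theorem~\ref{thm:2} via the Cram\'{e}r-Wold device. For an arbitrary fixed $\mathbf{c} = (c_1, \ldots, c_d)^\top \in \mathbb{R}^d$, I note that
$$
\mathbf{c}^\top \bpsi(f) \;=\; \sum_{j=1}^d c_j \psi_j(f) \;=\; \frac{1}{n} f^\top H_{\mathbf{c}} f, \qquad H_{\mathbf{c}} := \sum_{j=1}^d c_j H_j,
$$
which is exactly the quadratic-in-$f$ functional treated in Theorem~\ref{thm:2}, with $H_{\mathbf{c}}$ in place of $H_j$. The strategy is then to (i) check that $H_{\mathbf{c}}$ inherits the scalar hypotheses, (ii) re-run the proof of Theorem~\ref{thm:2} with $H_{\mathbf{c}}$, (iii) expand the resulting scalar variance to identify $\bV_0$, and (iv) conclude by Cram\'{e}r-Wold.

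For step~(i), the operator-norm bound carries over by the triangle inequality, $\|H_{\mathbf{c}}\|_{op} \leq \sum_j |c_j|\, \|H_j\|_{op} = O_p(1)$, since $d$ is constant; likewise $\text{rank}(H_{\mathbf{c}}) \leq \sum_j \text{rank}(H_j) = d \cdot o_p(\sqrt{n}) = o_p(\sqrt{n})$. The hypotheses on $f$, $f_0$, and the noise do not involve $j$ and are unchanged. Inspecting the scalar proof, the key bound $\hat{\epsilon}_n^\top H_j \hat{\epsilon}_n = o_p(\sqrt{n})$ relies only on these two properties (with the minor cosmetic change that $H_{\mathbf{c}}$ need not be PSD, so one uses $|\hat{\epsilon}_n^\top H_{\mathbf{c}} \hat{\epsilon}_n| \leq \text{rank}(H_{\mathbf{c}}) \cdot \|H_{\mathbf{c}}\|_{op} \cdot \|\hat{\Sigma}_n\|_{op}$). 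Step~(ii) then delivers
$$
\sqrt{n}\big(\mathbf{c}^\top \bpsi(f) - \mathbf{c}^\top \bpsi(f_0)\big) \;\overset{d}{\to}\; \mathcal{N}\!\big(0,\; 4\sigma^2 \|H_{\mathbf{c}} f_0\|_n^2\big).
$$
For step~(iii), expanding the norm in $\mathbf{c}$ gives
$$
4\sigma^2 \|H_{\mathbf{c}} f_0\|_n^2 \;=\; 4\sigma^2 \sum_{j_1, j_2} c_{j_1} c_{j_2} \langle H_{j_1} f_0, H_{j_2} f_0 \rangle_n \;=\; \mathbf{c}^\top \bV_0 \mathbf{c},
$$
matching the stated $[\bV_0]_{j_1, j_2} = 4\sigma^2 \langle H_{j_1} f_0, H_{j_2} f_0 \rangle_n$. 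Since the scalar CLT holds for every $\mathbf{c} \in \mathbb{R}^d$, Cram\'{e}r-Wold yields $\sqrt{n}(\bpsi(f) - \bpsi(f_0)) \overset{d}{\to} \mathcal{MVN}(\mathbf{0}, \bV_0)$.

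A parallel direct route would apply Lemma~\ref{lem:4} to $\bphi(F) = \bpsi(m(F))$: the componentwise influence function $[IF_{\bphi, F}(y)]_j = \frac{2}{n}(y - f_0)^\top H_j f_0$ is inherited verbatim from the scalar calculation, the off-diagonal entries of $\bV_0$ arise from the Gaussian identity $\mathbb{E}[(y - f_0)^\top H_{j_1} f_0 \cdot (y - f_0)^\top H_{j_2} f_0] = \sigma^2 (H_{j_1} f_0)^\top (H_{j_2} f_0)$, and the remainder $\sqrt{n}\bR_n \overset{P}{\to} 0$ splits componentwise, absorbed by a union bound over $j \in \{1, \ldots, d\}$. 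The main obstacle in either route is essentially bookkeeping -- no genuinely new analytic ingredient beyond Theorem~\ref{thm:2} is required. The one step that truly deserves attention is the rank transfer $\text{rank}(H_{\mathbf{c}}) = o_p(\sqrt{n})$: it depends crucially on $d$ being a fixed constant, and would demand a refined joint analysis if one wished to permit $d = d_n \to \infty$.
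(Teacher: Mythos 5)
Your proposal is correct, but your primary route differs from the paper's. The paper proves the multivariate statement directly via the multivariate functional delta method (\cref{lem:4}): it recomputes the componentwise influence function $[IF_{\bphi,F}(y)]_j = \frac{2}{n}(y-f_0)^\top H_j f_0$, reads off the covariance $[\bV_0]_{j_1,j_2}=4\sigma^2\langle H_{j_1}f_0, H_{j_2}f_0\rangle_n$ from the Gaussian cross-moment identity, and disposes of the remainder $\sqrt{n}\bR_n$ componentwise by citing the bound already established in the scalar proof of \cref{thm:2} --- exactly the ``parallel direct route'' you sketch in your last paragraph. Your main route instead reduces to the scalar case by Cram\'{e}r--Wold, applying the argument of \cref{thm:2} to $H_{\mathbf{c}}=\sum_j c_j H_j$. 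This is a valid alternative, and you correctly flag the two points where it is not a pure black-box citation: $H_{\mathbf{c}}$ is generally not positive semi-definite (so the eigenvalue bound on $\hat{\epsilon}_n^\top H_{\mathbf{c}}\hat{\epsilon}_n$ must use $\max_i|\lambda_i|$, or more simply linearity, $\hat{\epsilon}_n^\top H_{\mathbf{c}}\hat{\epsilon}_n=\sum_j c_j\,\hat{\epsilon}_n^\top H_j\hat{\epsilon}_n=o_p(\sqrt{n})$ termwise for fixed $d$), and the rank and operator-norm hypotheses transfer by subadditivity only because $d$ is constant. What each approach buys: Cram\'{e}r--Wold minimizes new computation and makes the covariance identification a one-line polarization of $\|H_{\mathbf{c}}f_0\|_n^2$, at the cost of having to re-verify the scalar hypotheses for a new, possibly indefinite operator; the paper's delta-method route avoids introducing $H_{\mathbf{c}}$ altogether and keeps every estimate on the original PSD matrices $H_j$, at the cost of restating the influence-function calculus in vector form. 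Both are bookkeeping over \cref{thm:2}, and your closing caveat --- that the reduction to the scalar case leans on $d$ being fixed and would need a joint analysis if $d=d_n\to\infty$ --- is a fair observation that applies equally to the paper's own argument.
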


\begin{proof}
Define a mean function $m: F \to E(F)$, where $F$ is the distribution. Then in our case, $f_0 = E(F) = m(F)$. According to \cref{lem:4}, we have 
\[[\bpsi (f_0)]_j = \psi_j(E(F)) = \psi_j(m(F)) = [\bphi(F)]_j,\]
i.e., $\bphi(\cdot) = \bpsi(m(\cdot))$ and $[\bphi(\cdot)]_j = \psi_j(m(\cdot))$, where $\bphi: \Rsc \to \Rsc^P$.
Therefore,
\begin{align*}
[\bphi_F '(\delta_y - F)]_j  &= \psi_j'(m(\delta_y - F)) \\
&= \frac{\mathrm{d} }{\mathrm{d} t}\mid_{t = 0} \psi_j(m((1 - t)F + t\delta_y)) \\
&= \frac{\mathrm{d} }{\mathrm{d} t}\mid_{t = 0} \psi_j((1 - t)f_0 + ty) \\
&= \frac{\mathrm{d} }{\mathrm{d} t}\mid_{t = 0} \frac{1}{n}[(1 - t)f_0 + ty]^\top H_j [(1 - t)f_0 + ty] \\
&= \frac{2}{n}(y - f_0)^\top H_j f_0 \\
&= [IF_{\bphi, F}(y)]_j.
\end{align*}
On the other hand, 
\begin{align*}
[\bV_0]_{j1, j2} &= \int \langle [IF_{\bphi, F}(y)]_{j1}, [IF_{\bphi, F}(y)]_{j2} \rangle dF\\
&= 4\int \frac{1}{n} \cdot f_0^\top H_{j1} (y - f_0) (y - f_0)^\top H_{j2} f_0 \cdot \frac{1}{n} dF \\
&= 4\sigma^2 \langle H_{j1} f_0, H_{j2} f_0 \rangle_n.
\end{align*}
Moreover, we have 
\[[\int \bphi_F '(\delta_y - F) dF]_j = \frac{2}{n}\int (y - f_0)^\top H_j f_0 dF = 0, \]
and 
\begin{align}
[\sqrt{n}\bR_n]_j &= \sqrt{n}[[\bphi(\Fsc_n)]_j - [\bphi(F)]_j - \frac{1}{n}\sum_{i} [\bphi_F'(\delta_{y_i} - F)]_j]\nonumber \\
&= \frac{1}{\sqrt{n}}[f^\top H_j f - f_0^\top H_jf_0 - \frac{2}{n}(y - f_0)^\top H_j f_0]\nonumber \\
&= \frac{1}{\sqrt{n}}[(f - f_0)^\top H_j (f - f_0) + 2(f - f_0)^\top H_jf_0 - \frac{2}{n}(y - f_0)^\top H_j f_0]\nonumber \\
&= \frac{1}{\sqrt{n}}[\hat{\epsilon}_n^\top H_j \hat{\epsilon}_n  + 2\hat{\epsilon}_n^\top H_jf_0 - \frac{2}{n}(y - f_0)^\top H_j f_0] \label{mult-4} \\
&= \frac{1}{\sqrt{n}}o_p(\sqrt{n}) \label{mult-5} \\
&= o_p(1) \overset{P}{\rightarrow} 0, \nonumber 
\end{align}
where $\hat{\epsilon}_n = f - f_0$ and the reason from \cref{mult-4} to \cref{mult-5} is because of \cref{op}. Therefore, by \cref{lem:4}, we have 
\[\sqrt{n}(\bpsi(f) - \bpsi(f_0)) \overset{d}{\rightarrow} \mathcal{MVN}(\mathbf{0}, \bV_0), \]
where $\bV_0$ is a $d\times d$ matrix such that $[\bV_0]_{j1, j2}=4\sigma^2\langle H_{j1} f_0, H_{j2} f_0 \rangle_n$.
\end{proof}

\clearpage
\section{Additional Theoretical Discussions}
\label{sec:thm_discuss}

\subsection{Lipschitz condition of ML models}
\label{sec:thm_discuss_lipschitz}


The condition of the differentiation operator $D_j: f \to \frac{\partial}{\partial \bx^j} f$  being bounded is guaranteed if the functional $f$ is differentiable and Lipschitz, so that $\frac{|f(\bx_1) - f(\bx_2)|}{||\bx_1 - \bx_2||_2} \leq C$ where $||\bx_1 - \bx_2||_2$ is the $L_2$ metric in $\Xsc$. Fortunately, a wide range of machine learning models (under proper regularity condition) satisfy the Lipschitz condition. Below we consider a few important examples:

\textbf{Generalized Additive Models (GAM)}. The generalized additive models is often written as the sum of smooth functions, 
$$f(\bx)=\beta_0 + \sum_{j=1}^d \bbeta_j  h_j(\bx^j).$$ As a result,  $f$ is Lipschitz if every individual smooth function $h_j$ is Lipschitz. To this end, we notice that in the GAM algorithm, the $h_j$'s are commonly estimated under a smoothness constraint in terms of its second derivatives \citep{wood2006generalized} $\psi_{2,j}=\int_\Xsc |\frac{\partial^2}{\partial (\bx^j)^{2}} f(\bx^j)|^2 d\bx$, which essentially imposes an upper bound on the first-order partial derivatives $\deriv{\bx^j}f(\bx^j)$ (assuming bounded support). As a result, the Lipschitz of GAM function is guanranteed by the virtue of its smoothing constraints.

\textbf{Decision Trees}. Interestingly, we can understand the Lipschitz condition of a tree-type model by investigating its model structure from a neural network lens. Specifically, 
for a depth-$L$ tree model with $D$ leaf nodes,
\cite{karthikeyan_learning_2021} shows that a it can be written in the form of a neural network layer:
\begin{align*}
f(\bx) &= \sum_{k=1}^{D} q_k(\bx) \bbeta_k, 
\quad \mbox{where} \quad
q_k(\bx) = \sigma_{\mbox{\texttt{\tiny{step}}}} \big(\sum_{l=1}^{L} \sigma_{\mbox{\texttt{\tiny{step}}}}\big((\bx^\top \bw_{k, I(l, k)}+b_{k, I(l, k)})S(l, k) \big)-h\big).
\end{align*}
Here $q_k(\bx)$ is a re-parametrization for the indicator function of whether $\bx$ belongs to the $k^{th}$ leaf node, i.e., $\prod_{l=1}^L \sigma_{\mbox{\texttt{\tiny{step}}}} \Big[ (\bx^\top \bw_{k, I(l, k)}+b_{k, I(l, k)})S(l, k) \Big]$. (See \cref{sec:nn_dt} or Section 3 of \cite{karthikeyan_learning_2021} for full detail.)
Briefly, $\sigma_{\mbox{\texttt{\tiny{step}}}}(x)=I(x>0)$ is the step function, $I(l, k)$ indicates the index for the ancestor node for the $k^{th}$ leaf at depth $l$, and $S(l,k) \in \{-1, 1\}$ is a sign function for whether $k^{th}$ leaf is the right subtree of node $I(l, k)$. As a result, $q_k(\bx)$ measures whether $\bx$ satisfies every ancestry decision rules $I\big[S(l,k)(\bx^\top \bw_{k, I(l, k)} - b_{k, I(l, k)})>0\big]$ at every level $l \in \{1, \dots, L\}$, where $\bw_{k, I(l, k)}$ is a $d \times 1$ one-hot vector indicating the index of feature being selected by that node. 

As a result, the tree model can be viewed as a wide 1-hidden layer neural network model with bounded activation function $\sigma_{\mbox{\texttt{\tiny{step}}}}$ and hidden weights bounded within $[-1, 1]$, which leads to a Lipschitz function. Furthermore, the function $f(\bx)$ remains Lipschitz if we replace the non-differentiable $\sigma_{\mbox{\texttt{\tiny{step}}}}$ with a differentiable activation function that is Lipschitz (e.g., \cref{sec:non_diff}).

\textbf{Random Feature Models}. The random feature methods are also structured the same way as $f(\bx) = \sigma(\bW^\top \bx + \bb)$, where $\bW$ are frozen weights that are independently sampled from distribution with finite second moments (e.g., Gaussian distribution), and $\sigma$ is a trignomitric function ($sin$ and $cos$), or common activation functions that are used in the neural networks \citep{pmlr-v84-choromanski18a, liu2021random}. As a result, $f(\bx)$ is also Lipschitz with high probability. In practice, the Lipschitz condition can be guaranteed in absolute terms by truncating the individual terms in $\bW$ to be within a range $[-C, C]$ (e.g., $C=4.$ for $W \stackrel{iid}{\sim} N(0, 1)$), which often leads to almost identical performance.

\textbf{(Deep) Neural Networks}. Both deep neural networks and random-feature models can be written as a composition of functions: 
$$f(\bx)=\bbeta^\top g_L \cdot g_{L-1} \dots \cdot g_1(\bx),
\quad \mbox{where} \quad
g_l(\bx)=\sigma(\bW_l^\top \bx + \bb_l).$$
As a result, due to chain rule, $f$ is Lipschitz if each of its individual layer $g_l$ is Lipschitz \citep{virmaux2018lipschitz}. Similarly, since the layer function $g_l$ is a composition of the linear function $\bW_l^\top \bx + \bb_l$ and a non-linear activation $\sigma$, $g_l$ is guanranteed to be Lipschitz if both the linear function is bounded with high probability, and the activation function $\sigma$ is also Lipschitz. In the context of neural network learning, this is often satisfied by the common practice of imposing $L_1$ or $L_2$ regularization to neural network weights, and by using standard choices of activation functions such as ReLU, leaky ReLU, tanh, etc \citep{virmaux2018lipschitz, liu_gaussian_2019}. 



\subsection{Discussion on \glsfirst{BvM} phenomenon}
\label{sec:thm_discuss_bvm}

\paragraph{Dimensionality of the Derivative Function Space.}

Denote $\Hsc$ the space of model functions spanned by the basis functions $\{b_k(\bx)\}_{k=1}^D$, such that $f(\bx)=\sum_{k=1}^D \alpha_k b_k(\bx)$. Then, the space of partial derivative function is $\Hsc_j = \{\deriv{\bx^j} f | f \in \Hsc \}$. Furthermore, for every element in $\Hsc_j$, we have:
\begin{align*}
    \deriv{\bx^j} f = \sum_{k=1}^D \alpha_k \cdot [\deriv{\bx^j} b_k(\bx)].
\end{align*}
That is, the derivative function space $\Hsc_j$ can be spanned by $\{\deriv{\bx^j}  b_k(\bx)\}_{k=1}^D$, the partial derivatives of the basis functions for the original model space $\Hsc$. Furthermore, since differentiation is a linear operator, the set of linearly independent functions in $\{\deriv{\bx^j}  b_k(\bx)\}_{k=1}^D$ should be equivalent to that in $\{b_k(\bx)\}_{k=1}^D$. As a result, the effective dimensionality of the derivative function space $\Hsc_j$ can be controlled by the effective dimensionality of the model space $\Hsc$. As an aside, for a model space $\Hsc_\phi$ induced by the feature representation $\phi: \Xsc \rightarrow \real^D$, its effective dimensionality can be measured by the rank of the feature matrix $\text{\texttt{rank}}(\Phi)$ for $\Phi=[\phi(\bx_1)^\top, \dots, \phi(\bx_n)^\top]^\top$. Alternatively, in the nonparametric literature, the effective dimensionality can also be measured by model-specific notions of "parameter count", such as the number of leaf partitions of a tree model, or the number of non-zero hidden weights of a deep neural network \citep{schmidt-hieber_nonparametric_2020}.

\paragraph{Effective Dimensionality of Statistical ML Models.}

The \gls{BvM} result (\cref{thm:2}) contains a key condition (4) $H_j = o_p(\sqrt{n})$. As stated in the main text, this condition can be satisfied if the effective dimensionality of model space $\Hsc_{\phi}$ does not grow faster than $o_p(\sqrt{n})$ with respect to the data.

Combined with the posterior convergence condition (i.e., (1)-(2) from \cref{thm:1}), (4) provides a more precise characterization of the convergence behavior of the model $f \in \Hsc_\phi$ for the \gls{BvM} phenomenon to occur.  Loosely, (1)-(2) states that the model $f$ should balance its bias-variance tradeoff well enough so that the overall error rate is controlled at the rate $\epsilon_n$. Then, (4) goes one step further and states that within this bias-variance tradeoff, the variance term must be well managed, which is guaranteed by bounding the model complexity at the rate of $o_p(\sqrt{n})$.

As a matter of fact, for a wide class of ML models, a $o_p(\sqrt{n})$ bound on model complexity is not a stringent requirement, as it only prescribes a growth rate of model complexity with respect to data size. For example, the effective data size can be $C*\sqrt{n}$ for an bounded but very large $C$). 
Interestingly, condition (4) is in fact equivalent or looser than some of the previous \gls{BvM} results obtained for specific ML models.
For example, the decision tree models (e.g., BART) obtains a optimal rate when its number of partitions grow at a rate of $O((n / log n)^{d/2\gamma + d})$ for learning the space of $\gamma$-H\"{o}lder continuous functions with $\gamma > d/2$ \citep{rockova2020semi}, which leads to a $p(\sqrt{n/logn}) < o(\sqrt{n})$ bound on complexity. A similar result also holds for deep learning models, where the number of non-zero model weights is controlled at $O(n^{d/(2\gamma + d)})$ for $\gamma > \frac{d}{2}$ (\citep{wang2020uncertainty}, Theorem 3.2), which also leads to a rate of $o(\sqrt{n})$. 


\clearpage
\section{Incorporating Non-differentiability}
\label{sec:non_diff}

\subsection{Incorporating Non-differentiable Model: \glsfirst{FDT}}
\label{sec:fdt}
Several techniques have been proposed to learn a (soft) tree-structured model using gradient-optimization methods. However, either their accuracies do not match the
state-of-the-art tree learning methods \cite{yang_deep_2018}  or result in models that do not obey the tree structure \cite{irsoy_soft_2012, frosst_distilling_2017, biau_neural_2019, tanno_adaptive_2019}. We propose to translate a learned tree into its exact feature representation, and leverage this representation to unlock a rigorous uncertainty-aware variable selection method that was previously not available for this class of models.

\paragraph{Feature-based Representation of a Decision Tree}
For a certain decision tree $m$ in a learned random forest, consider the following feature map $\phi: \mathbb{R}^d \to \mathbb{R}^{D}$:
\begin{enumerate}
\item The decision tree partitions the whole feature space into $D$ cells $\Xsc=\cup_{k=1}^D \Xsc_k$. Label the cells of the generated partition by $1, 2, \ldots, D$ in arbitrary order.
\item To encode a data point $\bx \in \mathbb{R}^d$, look up the label $y$ of the cell that $\bx$ falls into and set $\phi(\bx)$ to be the (column) indicator vector of whether $\bx \in \Xsc_k$, i.e., $\phi(\bx)=\{\mathbbm{1}(\bx \in \Xsc_k)\}_{k=1}^D$.
\end{enumerate}

The dimensionality $D$ of $\phi$ equals the number of leaf nodes, and each feature mapping $\phi(\bx)$ takes the one-hot form. This feature map $\phi$ induces a kernel 
\begin{align}
k_{dt}(\bx, \bx') := \phi(\bx)^\top \phi(\bx')\nonumber = \begin{cases}
1 & \text{ if } \bx, \bx' \text{ in the same partition cell}\\
0 & \text{ otherwise } 
\end{cases}
\end{align}

As a result, the feature mapping $\phi(\bx)$ defines a featurized decision tree.

\begin{figure}[ht]
\begin{center}
\centerline{\includegraphics[width=0.6\columnwidth]{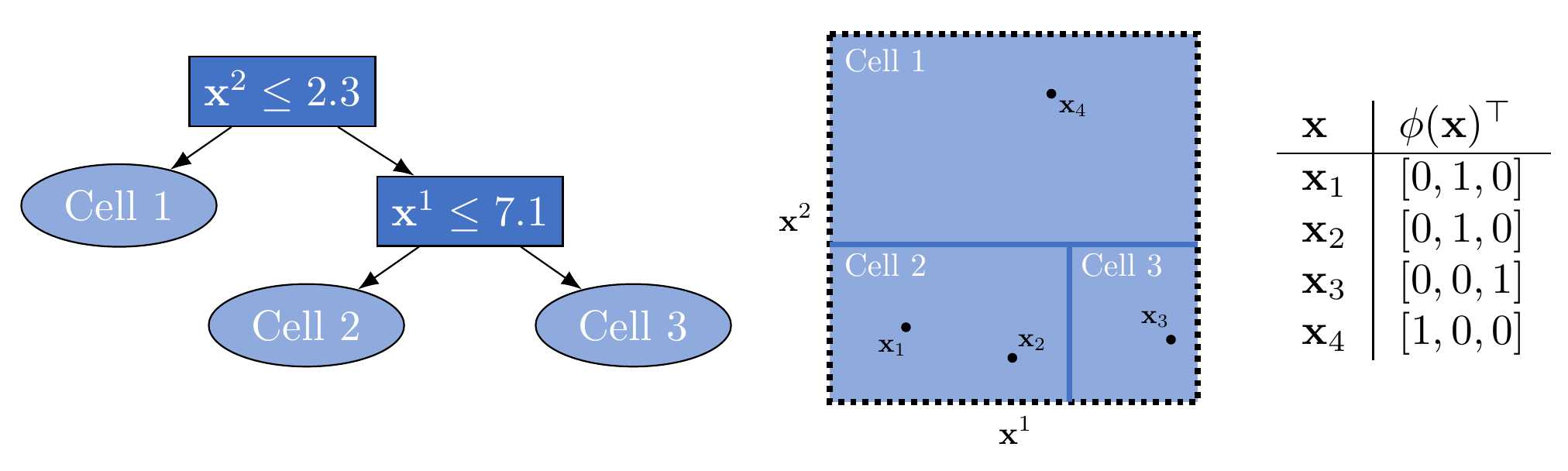}}
\caption{Feature expansion of a decision tree evaluated on $4$ data points in $\mathbb{R}^2$. The middle panel shows the partition of $\mathbb{R}^2$ defined by the decision tree on the left. On the right is the associated feature map.}
\label{feature-exp}
\end{center}
\vskip -0.2in
\end{figure}

As introduced in \cref{sec:varimp}, the solution for $\bbeta$ is $(\Phi^\top \Phi + \sigma^2 \bI_{D})^{-1}\Phi^\top \by$. Note that under the decision tree kernel, $\Phi^\top \Phi=diag(n_1, \dots, n_D)$ is a diagonal matrix of the number of training samples in each leaf cell. Therefore, the time complexity to invert the matrix $(\Phi^\top \Phi + \sigma^2 \bI_{D})$ is $O(D)$. 

\paragraph{Differentiable Approximation}
\label{diff_approx}
The random features generated by \cref{feature-exp} can be written as 
\begin{align*}
\phi(\bx)
&= (\mathbbm{1}(\bx^2 \leq 2.3), \mathbbm{1}(\bx^2 > 2.3, \bx^1 \leq 7.1), \mathbbm{1}(\bx^2 > 2.3, \bx^1 > 7.1)) \\
&= (\mathbbm{1}(\bx^2 \leq 2.3), \mathbbm{1}(\bx^2 > 2.3)\cdot \mathbbm{1}(\bx^1 \leq 7.1),  \mathbbm{1}(\bx^2 > 2.3)\cdot \mathbbm{1}(\bx^1 > 7.1)).
\end{align*}

To calculate variable importance, the indicator function needs to be approximated by a smooth function, so that we can take the derivative with respect to each feature. In this work, we consider approximating the indicator function using the sigmoid function \cite{irsoy_soft_2012}:
\begin{align*}
\mathbbm{1}(x > a) \approx i_c(x > a) = \frac{1}{1+\exp(-c\cdot(x-a))},
\end{align*}
and analogously, $\mathbbm{1}(x \leq a) = 1 - i_c(x > a)$. Here $c$ is a hyperparameter that controls the smoothness of the approximation. A larger $c$ leads to a better approximation to the random forest algorithm, but may result in a non-smooth prediction function which may be undesirable for approximating an continuous regression function $f_0$. 

\subsection{Incorporating Discrete Features}
Compared to the empirical derivative norm, a more principled way to measure the variable importance of a discrete feature is \emph{contrast}, which is the square of the difference in predictions when fixing the feature to a certain value versus fixing it to the other value, while keeping the other features the same. Specifically, we can consider defining a discrete version of the derivative:
\begin{align}
    D_j f = f(\bx^j=1, \bx^{-j}) - f(\bx^j=0, \bx^{-j}),
    \label{eq:disc_diff}
\end{align}
where $\bx^{-j}$ denotes all features with $\bx^{j}$ removed. 

Then, in the case where the feature takes two values, we can set one of them as the reference group with value $0$ and the other group with value $1$,

\vspace{-1em}
\begin{align*}
\Psi_{j}(f)&=\Vert D_j f \Vert_{2}^{2}
= \int_{\bx \in \Xsc}| D_j f|^{2}dP(\bx)
\\ 
&= \int_{\bx \in \Xsc}|f(\bx^j=1, \bx^{-j}) - f(\bx^j=0, \bx^{-j})|^{2}dP(\bx),
\end{align*}

Since $P(\bx)$ is not known from the training observations, $\Psi_{j}(f)$ can be approximated by its empirical counterpart:

\begin{align*}
\psi_{j}(f) &=\Vert D_j f \Vert_{n}^{2}
=\frac{1}{n} \sum_{i=1}^{n}|D_j f|^{2}. 
\\
&=\frac{1}{n} \sum_{i=1}^{n}|f(\bx_i^j=1, \bx_i^{-j}) - f(\bx_i^j=0, \bx_i^{-j})|^{2}. 
\end{align*}

In the case where the feature takes multiple groups, we can calculate the pairwise contrasts and take the $L_2$ norm. Empirically, using contrast for discrete feature improves the performance of variable selection. As contrast is a linear function of the original prediction function $f$, the posterior convergence of $\psi_j$ with respect to this operator is guaranteed by the convergence of the prediction function $f$. Similarly, the \gls{BvM} phenomenon is guaranteed when $D_j f$ is bounded and $H_j = D_j^\top D_j$ has rank $o_p(\sqrt{n})$ (i.e., the similar set of conditions in \cref{thm:2} but with the original $D_j$ replaced by its discrete counterpart \cref{eq:disc_diff}).


\clearpage
\section{Further Experiment Detail}
\label{sec:exp_detail}

\subsection{Methods}
We consider three main classes of models (\cref{tb:methods}).

\begin{enumerate}[I]
    \item \textbf{\glsfirst{RF}}
    \begin{itemize}
        \item \gls{FDT}: Given a trained forest, we quantify variable importance using $\psi_j$ by translating it to an ensemble of \gls{FDT} (\cref{sec:fdt}). We use a variant of random forest here, extra trees \cite{geurts_extremely_2006} since it performs better. We use $50$ trees to build the forest and maximum number of leaf nodes for each tree is $\sqrt{n} \, log(n)$. Throughout our experiment, we fix $c=1$ for continuous features calculated using integrated partial derivatives and fix $c=0.1$ for discrete features calculated using contrasts. 
        We use \texttt{scikit-learn} package in Python to train the random forest.
        \item \gls{impurity} \citep{breiman_classification_1984}: It measures variable importance with their impurity based on the average reduction of the loss function were the variable to be removed. We also use extra trees here. We use $50$ trees to build the forest and maximum number of leaf nodes for each tree is $\sqrt{n} \, log(n)$. We use \texttt{scikit-learn} package in Python to train the random forest.
        \item \gls{knockoff} \citep{candes_panning_2017}: It uses random forest statistics to assess variable importance in our case. We use \texttt{knockoff} package in R to calculate the statistic.
        \item Bayesian additive regression trees (\gls{BART}) \cite{chipman_bart_2010}: It produces a measure of variable importance by tracking variable inclusion proportions, enabling variable selection with a user-defined threshold. We use \texttt{bartMachine} package in R to train the model.
    \end{itemize}
    \item \textbf{(Approximate) Kernel Methods}
    \begin{itemize}
        \item Random Feature Neural Networks (\gls{RFNN}): We apply $\psi_j$ to a random-feature model that approximates a Gaussian process with a RBF kernel \cite{rahimi_random_2007}, and set the number of features to $\sqrt{n} \, log(n)$ to ensure proper approximation of the exact RBF-GP \cite{rudi_generalization_2018}. We choose the lengthscale parameter of RBF-GP from a list of lengthscale candidates $\{5, 10, 16, 23\}$ based on the prediction performance on testing data.
        \item Bayesian kernel machine regression (\gls{BKMR}) \cite{bobb_bayesian_2015}: It is based on a \gls{GP} with exact RBF kernel and spike-and-slab prior, using posterior inclusion probabilities to perform variable selection. We use \texttt{bkmr} package in R to train the model and the number of iterations of the MCMC sampler is set to be $4000$.
        \item Bayesian Approximate Kernel Regression (\gls{BAKR}) \citep{crawford_bayesian_2018}: It is based on random-feature model with a projection-based feature importance measure and an adaptive shrinkage prior, using squared estimates of the parameter coefficients to perform variable selection. We use \texttt{BAKR} repository from the author's GitHub to train the model and the number of iterations of the MCMC sampler is set to be $2000$.
    \end{itemize}
    \item  \textbf{Linear Models}
    \begin{itemize}
        \item \gls{GAM}:  We apply $\psi_j$ to a featurized \gls{GP} representation of the \gls{GAM}, with the prior center $\bmu$ set at the frequentist estimate of the original GAM model obtained from a sophiscated REML procedure \citep{wood2006generalized}. We use \texttt{mgcv} package in R to train the model.
        \item Bayesian Ridge Regression (\gls{BRR}) \cite{hoerl_ridge_1970}: It applies a fixed prior for each feature, using squared estimates of the parameter coefficients to perform variable selection. We use \texttt{BGLR} package in R to train the model and the number of iterations of the MCMC sampler is set to be $2000$.
        \item Bayesian Lasso (\gls{BL}) \cite{park_bayesian_2008}: It developed a Bayesian way to access the Lasso estimate which allows tractable full conditional distributions, using squared estimates of the parameter coefficients to perform variable selection. We use \texttt{BGLR} package in R to train the model and the number of iterations of the MCMC sampler is set to be $2000$.
    \end{itemize}
\end{enumerate}

The results in this paper were obtained using R 4.1.0 or Python 3.7. All experiments were run on a Linux-based high performance computing cluster using SLURM-managed CPU resources.

\clearpage
\subsection{Data}
\label{sec:exp_data}

\paragraph{Outcome-generating function} As discussed earlier, we generate data under the homoscedastic Gaussian noise model $y \sim \mathcal{N}(f_0(\bx), 0.01)$ for different sparse functions $f_0$ and features $\bx$. Given $n\in\{100, 200, 500, 1000\}$ observations in $d\in\{25,50,100 \}$ dimensions, the goal is to model $f_0$ while identifying the $d^*=5$ features on which $f_0$ depends. To this end we report mean squared error (MSE) to quantify prediction performance and \gls{AUROC} scores to quantify variable selection performance. 

We consider four settings of the data-generation function $f_0$:
\begin{enumerate}[1)]
\item \texttt{linear}: a simple linear function $f_0(\bx) = \bx^1 - \bx^2 + \bx^3 + 0.5 \bx^4 + 2\bx^5$;
\item \texttt{rbf}: a Gaussian RBF kernel with length-scale $1$. This kernel represents the space of functions that are smooth (i.e., infinitely differentiable) and have reasonable complexity (i.e., does not have fast-varying fluctuations that are difficult to model);
\item \texttt{matern32}: a mat\'{e}rn $\frac{3}{2}$ kernel with length-scale $1$. Compared to RBF, it has the same degree of complexity but is less smooth, in the sense that it represents the space of once-differentiable functions, but is not necessarily infinitely differentiable;
\item \texttt{complex}: a complicated and non-smooth multivariate function that is outside the \gls{RKHS} $\Hsc$: $f_0(\bx) = \frac{\sin(\max(\bx^1, \bx^2)) + \arctan(\bx^2)}{1+\bx^1+\bx^5}+\sin(0.5\bx^3)(1+\exp(\bx^4-0.5\bx^3))+{\bx^3}^2+2\sin(\bx^4)+4\bx^5$, which is non-continuous in terms of $\bx^1, \bx^2$ but infinitely differentiable in terms of $\bx^3, \bx^4, \bx^5$. 
\end{enumerate}

\paragraph{Synthetic Benchmarks} We create synthetic benchmark datasets of varying number of observations $n$ and number of features $d$. 
The \textbf{synthetic-continuous} dataset uses only continuous features, and the \textbf{synthetic-mixture} dataset uses a mixture of continuous and discrete features. The synthetic features are drawn either from $Bern(0.5)$ (if discrete) or $Unif(-2,2)$ (if continuous).
Additionally, each feature is either causal (i.e., used by $f_0$) or non-causal. For each simulation setting, there are always $d^*=5$ causal features. Specifically, in the \textbf{synthetic-continuous} dataset, all features are continuous, while in the \textbf{synthetic-mixture} dataset, there are 2 discrete and 3 continuous causal features, while there are 2 discrete non-causal features (all the rest of non-causal features are continous). 


For each sample size - data dimension scenario, we use the same set of generated features across the repeated simulation runs. 



\paragraph{Socio-economic and Healthcare Data} 
\begin{itemize}
\item \textbf{adult}: 1994 U.S. census data of 48842 adults with 8 categorical and 6 continuous features \cite{kohavi_scaling_nodate}. The data is publicly available\footnote{\url{https://archive.ics.uci.edu/ml/machine-learning-databases/adult/}} and does not contain personally identifiable information or offensive content. We concatenated the training data (\texttt{adult.data}) and testing data (\texttt{adult.test}), and remove all observations with missing features.  Additionally, we removed the redundant feature "\texttt{education}", and performed suitable re-categorization for discrete features: For "\texttt{race}", we encoded "White" as $0$ and the rest as $1$; for "\texttt{sex}", we encoded "Female" as $1$ and "Male" as $0$; for "\texttt{relationship}", we encoded "Husband" as $0$, "Not-in-family" as $1$ and the rest as $2$; for "\texttt{workclass}", we encoded "Private" as $0$, "Self-emp-not-inc" as $1$ and the rest as $2$; for "marital\_status", we encoded "Married-civ-spouse" as $0$, "Never-married" as $1$ and the rest as $2$; for "\texttt{occupation}", we encoded "Prof-specialty" as $0$, "Craft-repair" as $1$ and the rest as $2$; for "\texttt{native\_country}", we encoded "United-States" as $0$, "Mexico" as $1$ and the rest as $2$. The final features in the dataset are: (\texttt{"race", "sex", "education\_num", "hours\_per\_week", "age", "relationship", "workclass", "fnlwgt", "capital\_gain", "capital\_loss", "marital\_status", "occupation", "native\_country"}). If the data dimension is higher than 13, additional features will be generated from $Unif(-2, 2)$. 

\item \textbf{heart}: a coronary artery disease dataset of 303 patients from Cleveland clinic database with 7 categorical and 6 continuous features \cite{detrano_international_1989}. The data is publicly available\footnote{\url{https://archive.ics.uci.edu/ml/machine-learning-databases/heart-disease/processed.cleveland.data}} and does not contain personally identifiable information or offensive content. All observations with missing features are removed before analysis. 

The list of features used in the final datasets are  (\texttt{"sex", "exang", "thal", "oldpeak", "age", "ca", "cp", "chol", "trestbps", "thalach", "fbs", "restecg", "slope"}). If the data dimension is higher than 13, additional features will be generated from $Unif(-2, 2)$.

\item \textbf{mi}: disease records of myocardial infarction (MI) of 1700 patients from Krasnoyarsk interdistrict clinical hospital during 1992-1995, with 113 categorical and 11 continuous features \cite{golovenkin_trajectories_2020}. The data is publicly available\footnote{ \url{https://archive.ics.uci.edu/ml/machine-learning-databases/00579/}} and does not contain personally identifiable information or offensive content. 
We imputed missing values using the \texttt{IterativeImputer} method from \texttt{scikit-learn} package and with a \texttt{BayesianRidge} regressor. Specifically, it imputes each feature with missing values as a function of other features in a round-robin fashion: At each step, a feature column is designated as output $y$ and the other feature columns are treated as inputs $X$. A regressor is fit on $(X, y)$ for known $y$. Then, the regressor is used to predict the missing values of $y$. This is done for each feature in an iterative fashion, and then is repeated for 10 imputation rounds. The results of the final imputation round are returned.

The listed of features used in the analysis are as below: 
\texttt{("sex", "ritm\_ecg\_p\_01", "age", "s\_ad\_orit", "d\_ad\_orit", "ant\_im", "ibs\_post", "k\_blood", "na\_blood", "l\_blood", "inf\_anam", "stenok\_an", "fk\_stenok", "ibs\_nasl", "gb", "sim\_gipert", "dlit\_ag", "zsn\_a", "nr11", "nr01", "nr02", "nr03", "nr04", "nr07", "nr08", "np01", "np04", "np05", "np07", "np08", "np09", "np10", "endocr\_01", "endocr\_02", "endocr\_03", "zab\_leg\_01", "zab\_leg\_02", "zab\_leg\_03", "zab\_leg\_04", "zab\_leg\_06", "s\_ad\_kbrig", "d\_ad\_kbrig", "o\_l\_post", "k\_sh\_post", "mp\_tp\_post", "svt\_post", "gt\_post", "fib\_g\_post", "lat\_im", "inf\_im", "post\_im", "im\_pg\_p", "ritm\_ecg\_p\_02", "ritm\_ecg\_p\_04", "ritm\_ecg\_p\_06", "ritm\_ecg\_p\_07", "ritm\_ecg\_p\_08", "n\_r\_ecg\_p\_01", "n\_r\_ecg\_p\_02", "n\_r\_ecg\_p\_03", "n\_r\_ecg\_p\_04", "n\_r\_ecg\_p\_05", "n\_r\_ecg\_p\_06",  "n\_r\_ecg\_p\_08", "n\_r\_ecg\_p\_09", "n\_r\_ecg\_p\_10", "n\_p\_ecg\_p\_01", "n\_p\_ecg\_p\_03", "n\_p\_ecg\_p\_04", "n\_p\_ecg\_p\_05", "n\_p\_ecg\_p\_06", "n\_p\_ecg\_p\_07", "n\_p\_ecg\_p\_08", "n\_p\_ecg\_p\_09", "n\_p\_ecg\_p\_10", "n\_p\_ecg\_p\_11", "n\_p\_ecg\_p\_12", "fibr\_ter\_01", "fibr\_ter\_02", "fibr\_ter\_03", "fibr\_ter\_05", "fibr\_ter\_06", "fibr\_ter\_07", "fibr\_ter\_08", "gipo\_k", "giper\_na", "alt\_blood", "ast\_blood", "kfk\_blood", "roe", "time\_b\_s", "r\_ab\_1\_n", "r\_ab\_2\_n", "r\_ab\_3\_n", "na\_kb", "not\_na\_kb", "lid\_kb", "nitr\_s", "na\_r\_1\_n", "na\_r\_2\_n", "na\_r\_3\_n", "not\_na\_1\_n", "not\_na\_2\_n", "not\_na\_3\_n", "lid\_s\_n", "b\_block\_s\_n", "ant\_ca\_s\_n", "gepar\_s\_n", "asp\_s\_n", "tikl\_s\_n", "trent\_s\_n")}.
\end{itemize}
We standardize (by subtracting from mean and dividing by standard deviation) all features except for 2 discrete causal features and 2 discrete non-causal features.

\newpage
\begin{figure*}[ht]
\vskip 0.2in
\begin{center}
\centerline{\includegraphics[width=1.3\textwidth]{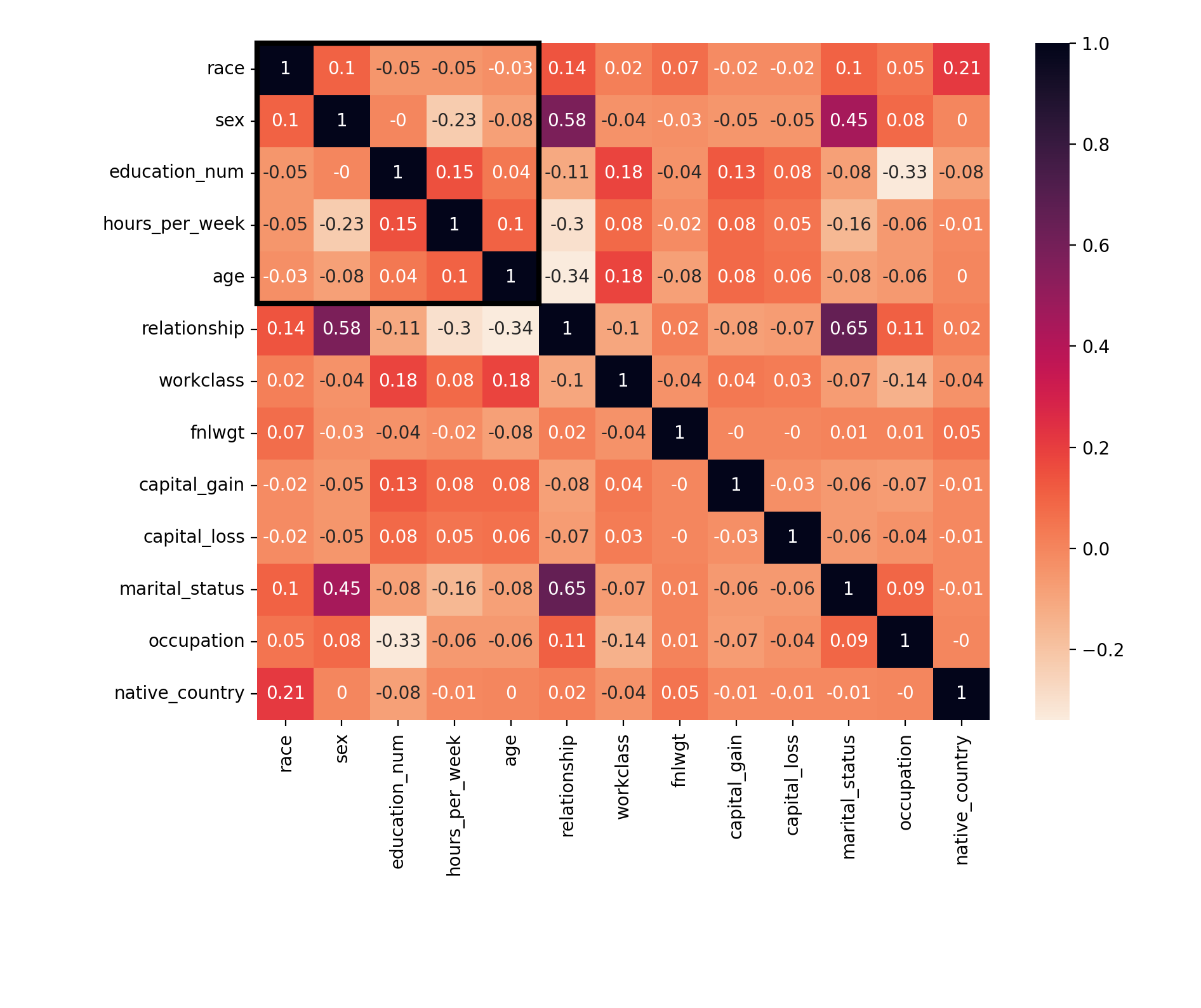}}
\caption{Correlation matrix for \textbf{adult} dataset, where the upper left black box indicates the five causal features.}
\label{fig:corr_adult}
\end{center}
\vskip -0.2in
\end{figure*}

\begin{figure*}[ht]
\vskip 0.2in
\begin{center}
\centerline{\includegraphics[width=1.2\textwidth]{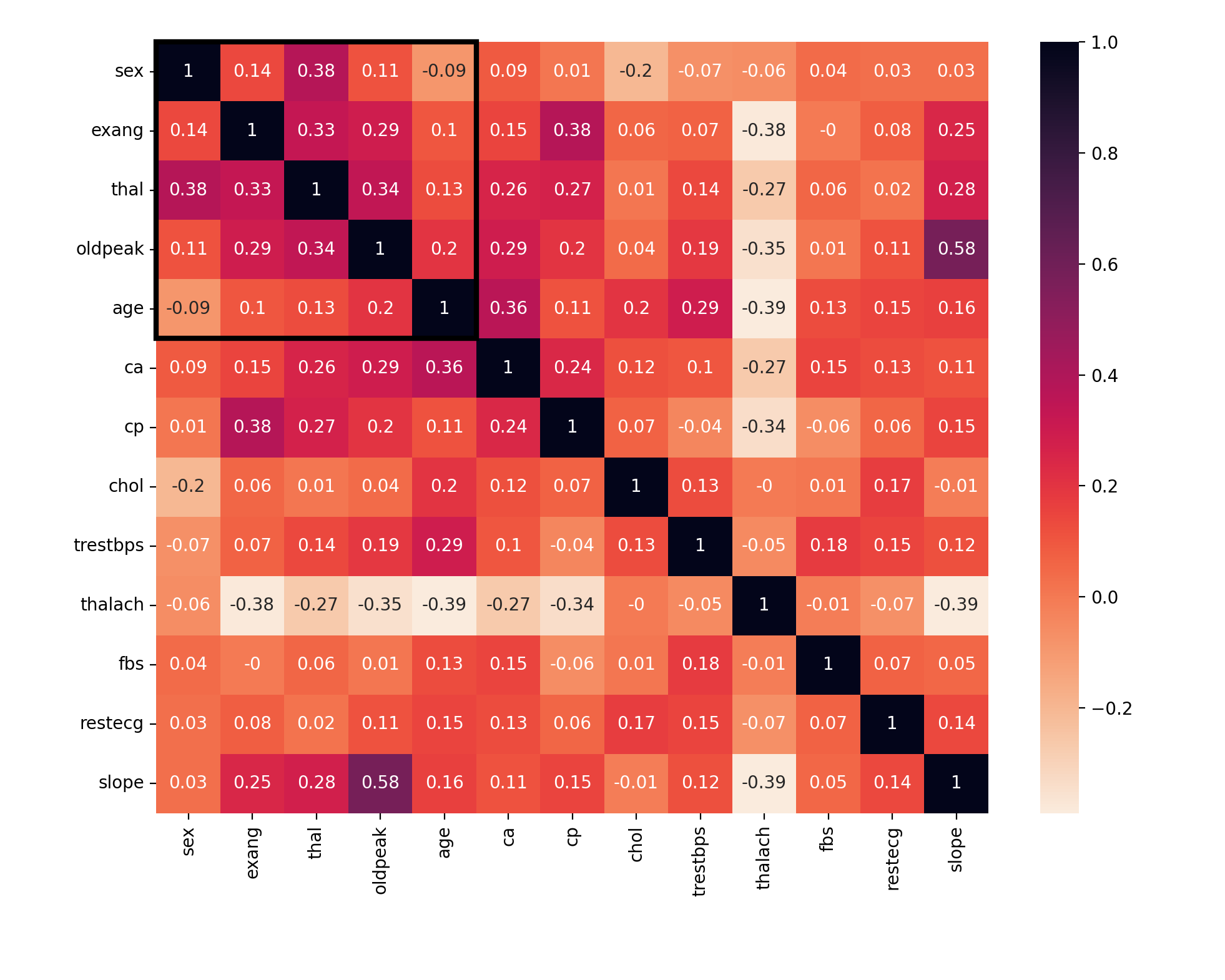}}
\caption{Correlation matrix for \textbf{heart} dataset, where the upper left black box indicates the five causal features.}
\label{fig:corr_heart}
\end{center}
\vskip -0.2in
\end{figure*}

\begin{figure*}[ht]
\vskip 0.2in
\begin{center}
\centerline{\includegraphics[width=1.5\textwidth]{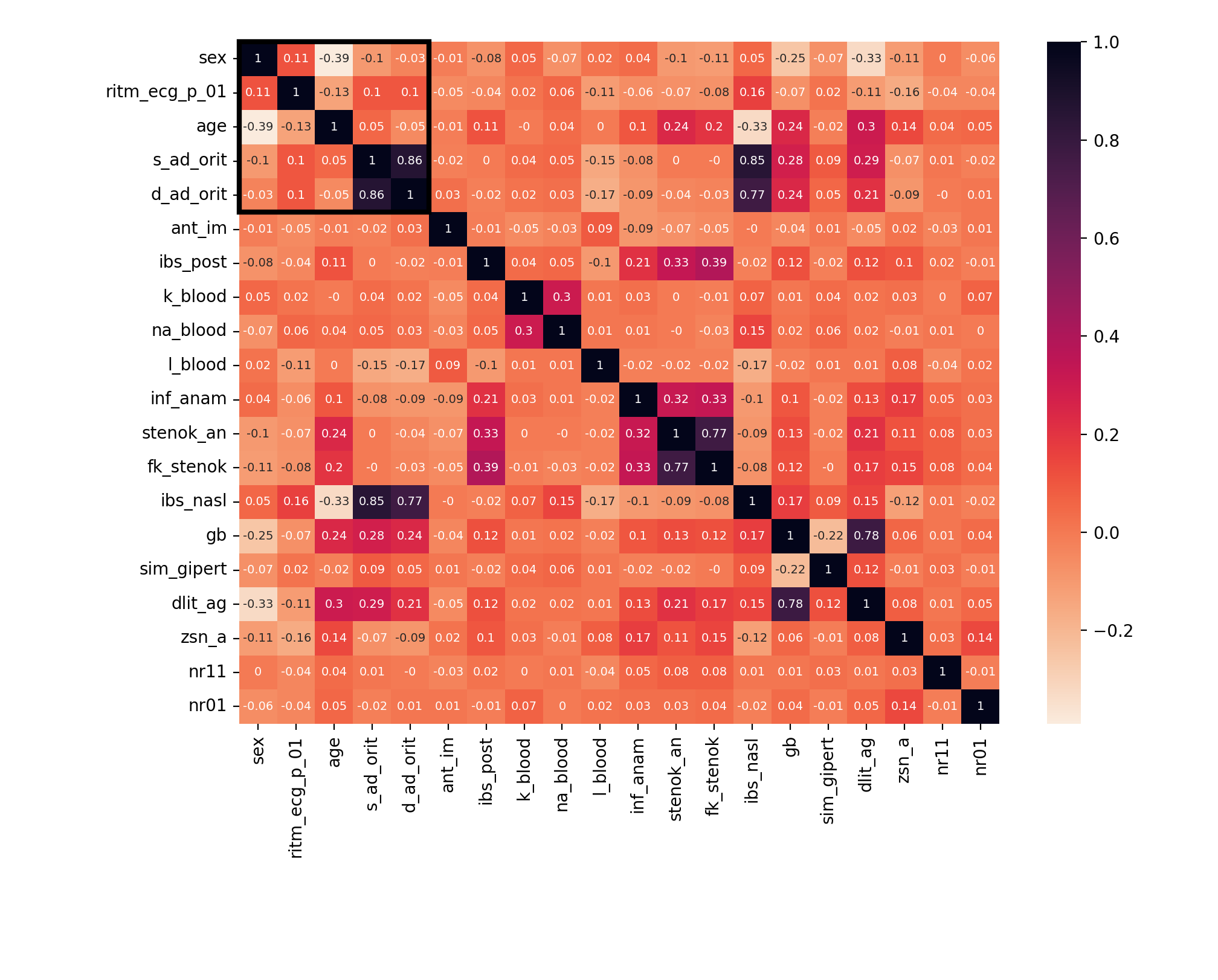}}
\caption{Correlation matrix for the first 20 features in \textbf{mi} dataset, where the upper left black box indicates the five causal features.}
\label{fig:corr_mi}
\end{center}
\vskip -0.2in
\end{figure*}

\clearpage
\subsection{Error Bars}
\label{sec:err_bar}
Tables \ref{error:mix}-\ref{error:mi} shows the AUROC scores or Testing MSE's for the result presented in the main text. For the Testing MSE tables, a method will not be shown if they share the model fit with another method (\gls{impurity} and \gls{knockoff}), or if the method does not produce valid result due to small sample size (\gls{GAM}).
\begin{table}[ht]
\caption{AUROC scores and their standard deviations for \textbf{synthetic-mixture} dataset.}
\vskip 0.1in
\hskip -2.0cm
\scriptsize
\begin{tabular}{l|llllllllll}
\textbf{n}    & \textbf{RF-FDT (Ours)} & \textbf{RFNN (Ours)} & \textbf{RF-Impurity} & \textbf{BKMR}       & \textbf{BART}       & \textbf{BAKR}       & \textbf{RF-KnockOff} & \textbf{GAM (Ours)} & \textbf{BRR}        & \textbf{BL}         \\ \hline
100  & 0.8(0.09)     & 0.59(0.11)  & 0.72(0.13)  & 0.55(0.07) & 0.72(0.14) & 0.59(0.09) & 0.58(0.06)  & NA(NA)     & 0.63(0.13) & 0.67(0.09) \\
200  & 0.93(0.1)     & 0.64(0.18)  & 0.86(0.2)   & 0.59(0.11) & 0.77(0.12) & 0.73(0.12) & 0.59(0.05)  & 0.73(0.17) & 0.71(0.15) & 0.72(0.14) \\
500  & 0.99(0.03)    & 0.7(0.18)   & 1(0)        & 0.57(0.11) & 0.93(0.08) & 0.69(0.14) & 0.67(0.11)  & 0.89(0.08) & 0.82(0.1)  & 0.82(0.1)  \\
1000 & 0.99(0.03)    & 0.69(0.24)  & 1(0)        & 0.59(0.14) & 0.99(0.03) & 0.76(0.18) & 0.7(0.11)   & 0.9(0.1)   & 0.87(0.1)  & 0.87(0.11)
\end{tabular}
\label{error:mix}
\end{table}

\begin{table}[ht]
\caption{AUROC scores and their standard deviations for \textbf{synthetic-continuous} dataset.}
\vskip 0.1in
\hskip -2.0cm
\scriptsize
\begin{tabular}{l|llllllllll}
\textbf{n}    & \textbf{RF-FDT (Ours)} & \textbf{RFNN (Ours)} & \textbf{RF-Impurity} & \textbf{BKMR}       & \textbf{BART}       & \textbf{BAKR}       & \textbf{RF-KnockOff} & \textbf{GAM (Ours)} & \textbf{BRR}        & \textbf{BL}         \\ \hline
100  & 0.61(0.13)    & 0.56(0.12)  & 0.68(0.13)  & 0.59(0.07) & 0.7(0.11)  & 0.59(0.09) & 0.6(0.11)   & NA(NA)     & 0.69(0.13) & 0.67(0.13) \\
200  & 0.84(0.13)    & 0.65(0.11)  & 0.85(0.13)  & 0.56(0.1)  & 0.8(0.14)  & 0.69(0.12) & 0.68(0.14)  & 0.74(0.12) & 0.76(0.13) & 0.76(0.14) \\
500  & 0.99(0.02)    & 0.71(0.17)  & 0.99(0.02)  & 0.68(0.15) & 0.97(0.03) & 0.76(0.13) & 0.88(0.11)  & 0.81(0.14) & 0.82(0.13) & 0.82(0.13) \\
1000 & 1(0)          & 0.63(0.24)  & 1(0)        & 0.61(0.12) & 1(0.01)    & 0.78(0.12) & 0.97(0.08)  & 0.86(0.11) & 0.86(0.14) & 0.86(0.14)
\end{tabular}
\end{table}

\begin{table}[ht]
\caption{AUROC scores and their standard deviations for \textbf{adult} dataset.}
\vskip 0.1in
\hskip -2.0cm
\scriptsize
\begin{tabular}{l|llllllllll}
\textbf{n}    & \textbf{RF-FDT (Ours)} & \textbf{RFNN (Ours)} & \textbf{RF-Impurity} & \textbf{BKMR}       & \textbf{BART}       & \textbf{BAKR}       & \textbf{RF-KnockOff} & \textbf{GAM (Ours)} & \textbf{BRR}        & \textbf{BL}         \\ \hline
100  & 0.76(0.1)     & 0.62(0.13)  & 0.61(0.16)  & 0.57(0.1)  & 0.66(0.12) & 0.66(0.09) & 0.59(0.13)  & NA(NA)     & 0.57(0.1)  & 0.58(0.11) \\
200  & 0.8(0.09)     & 0.6(0.14)   & 0.64(0.11)  & 0.59(0.06) & 0.69(0.14) & 0.7(0.1)   & 0.58(0.1)   & 0.72(0.14) & 0.6(0.09)  & 0.61(0.07) \\
500  & 0.84(0.07)    & 0.57(0.18)  & 0.64(0.09)  & 0.64(0.11) & 0.68(0.09) & 0.63(0.09) & 0.58(0.13)  & 0.78(0.12) & 0.61(0.1)  & 0.59(0.12) \\
1000 & 0.81(0.1)     & 0.64(0.18)  & 0.61(0.08)  & 0.57(0.11) & 0.67(0.14) & 0.66(0.11) & 0.57(0.09)  & 0.86(0.11) & 0.69(0.11) & 0.69(0.09)
\end{tabular}
\end{table}

\begin{table}[ht]
\caption{AUROC scores and their standard deviations for \textbf{heart} dataset.}
\vskip 0.1in
\hskip -2.0cm
\scriptsize
\begin{tabular}{l|llllllllll}
\textbf{n}    & \textbf{RF-FDT (Ours)} & \textbf{RFNN (Ours)} & \textbf{RF-Impurity} & \textbf{BKMR}       & \textbf{BART}       & \textbf{BAKR}       & \textbf{RF-KnockOff} & \textbf{GAM (Ours)} & \textbf{BRR}        & \textbf{BL}         \\ \hline
50  & 0.71(0.06)    & 0.56(0.14)  & 0.49(0.15)  & 0.61(0.09) & 0.59(0.1)  & 0.58(0.08) & 0.6(0.07)   & NA(NA)     & 0.6(0.06)  & 0.57(0.07) \\
100 & 0.72(0.06)    & 0.58(0.12)  & 0.44(0.11)  & 0.6(0.07)  & 0.59(0.08) & 0.64(0.13) & 0.57(0.08)  & NA(NA)     & 0.61(0.07) & 0.55(0.08) \\
150 & 0.75(0.08)    & 0.59(0.13)  & 0.41(0.12)  & 0.58(0.08) & 0.61(0.1)  & 0.69(0.13) & 0.61(0.11)  & 0.62(0.12) & 0.6(0.05)  & 0.56(0.05) \\
257 & 0.74(0.08)    & 0.52(0.17)  & 0.44(0.12)  & 0.6(0.08)  & 0.6(0.09)  & 0.65(0.07) & 0.66(0.12)  & 0.63(0.13) & 0.59(0.07) & 0.56(0.07) \\
\end{tabular}
\end{table}

\begin{table}[ht]
\caption{AUROC scores and their standard deviations for \textbf{mi} dataset.}
\vskip 0.1in
\hskip -2.0cm
\scriptsize
\begin{tabular}{l|llllllllll}
\textbf{n}    & \textbf{RF-FDT (Ours)} & \textbf{RFNN (Ours)} & \textbf{RF-Impurity} & \textbf{BKMR}       & \textbf{BART}       & \textbf{BAKR}       & \textbf{RF-KnockOff} & \textbf{GAM (Ours)} & \textbf{BRR}        & \textbf{BL}         \\ \hline
100  & 0.86(0.05)    & 0.59(0.13)  & 0.77(0.08)  & 0.61(0.1)  & 0.65(0.12) & 0.87(0.05) & 0.67(0.12)  & NA(NA)     & 0.63(0.11) & 0.62(0.12) \\
200  & 0.85(0.04)    & 0.62(0.08)  & 0.79(0.07)  & 0.56(0.07) & 0.63(0.11) & 0.82(0.08) & 0.62(0.11)  & 0.88(0.05) & 0.63(0.08) & 0.62(0.11) \\
500  & 0.85(0.05)    & 0.43(0.15)  & 0.77(0.06)  & 0.58(0.09) & 0.59(0.08) & 0.86(0.06) & 0.63(0.08)  & 0.87(0.07) & 0.61(0.1)  & 0.59(0.11) \\
1000 & 0.83(0.04)    & 0.56(0.17)  & 0.73(0.07)  & 0.56(0.08) & 0.67(0.09) & 0.86(0.06) & 0.64(0.12)  & 0.9(0.06)  & 0.64(0.07) & 0.65(0.11)
\end{tabular}
\end{table}

\begin{table}[ht]
\caption{Testing MSE's and their standard deviations for \textbf{synthetic-mixture} dataset. A method will not be shown if they share the model fit with another method (\gls{impurity} and \gls{knockoff}), or if the method does not produce valid result due to small sample size (\gls{GAM})}
\vskip 0.1in
\hskip -0.4cm
\scriptsize
\begin{tabular}{l|llllllllll}
\textbf{n}    & \textbf{RF-FDT (Ours)} & \textbf{RFNN (Ours)}  & \textbf{BKMR}       & \textbf{BART}       & \textbf{BAKR}       & \textbf{GAM (Ours)} & \textbf{BRR}        & \textbf{BL}         \\ \hline
100  & 1.01(0.25)    & 1.76(0.28)  & 1.52(0.16) & 1.03(0.28) & 2.7(0.79)  & NA(NA)     & 0.98(0.24) & 0.95(0.24) \\
200  & 0.87(0.16)    & 1.51(0.23)  & 1.59(0.09) & 0.91(0.14) & 1.01(0.11) & 1.56(0.32) & 1.01(0.13) & 0.95(0.12) \\
500  & 0.76(0.13)    & 1.42(0.15)  & 1.57(0.07) & 0.8(0.14)  & 0.96(0.1)  & 1.05(0.2)  & 0.94(0.14) & 0.92(0.12) \\
1000 & 0.66(0.12)    & 1.34(0.17)  & 1.64(0.07) & 0.73(0.17) & 1.02(0.12) & 0.96(0.18) & 0.95(0.16) & 0.93(0.17)
\end{tabular}
\end{table}

\begin{table}[ht]
\caption{Testing MSE's and their standard deviations for \textbf{synthetic-continuous} dataset. A method will not be shown if they share the model fit with another method (\gls{impurity} and \gls{knockoff}), or if the method does not produce valid result due to small sample size (\gls{GAM})}
\vskip 0.1in
\hskip -0.4cm
\scriptsize
\begin{tabular}{l|llllllllll}
\textbf{n}    & \textbf{RF-FDT (Ours)} & \textbf{RFNN (Ours)}  & \textbf{BKMR}       & \textbf{BART}       & \textbf{BAKR}       & \textbf{GAM (Ours)} & \textbf{BRR}        & \textbf{BL}         \\ \hline
100  & 1.01(0.2)     & 1.73(0.26)  & 1.56(0.12) & 1(0.16)   & 2.62(0.6)  & NA(NA)     & 1.05(0.14) & 1.01(0.17) \\
200  & 0.87(0.15)    & 1.48(0.23)  & 1.5(0.12)  & 0.91(0.2) & 0.95(0.18) & 1.4(0.33)  & 0.91(0.17) & 0.89(0.15) \\
500  & 0.85(0.19)    & 1.43(0.12)  & 1.58(0.09) & 0.91(0.2) & 0.98(0.19) & 0.98(0.23) & 0.93(0.19) & 0.91(0.2)  \\
1000 & 0.72(0.15)    & 1.4(0.19)   & 1.59(0.1)  & 0.8(0.18) & 0.95(0.19) & 0.94(0.2)  & 0.91(0.18) & 0.9(0.18) 
\end{tabular}
\end{table}

\begin{table}[ht]
\caption{Testing MSE's and their standard deviations for \textbf{adult} dataset. A method will not be shown if they share the model fit with another method (\gls{impurity} and \gls{knockoff}), or if the method does not produce valid result due to small sample size (\gls{GAM})}
\vskip 0.1in
\hskip -0.4cm
\scriptsize
\begin{tabular}{l|llllllllll}
\textbf{n}    & \textbf{RF-FDT (Ours)} & \textbf{RFNN (Ours)}  & \textbf{BKMR}       & \textbf{BART}       & \textbf{BAKR}       & \textbf{GAM (Ours)} & \textbf{BRR}        & \textbf{BL}         \\ \hline
100  & 0.95(0.4)     & 1.69(0.44)  & 0.99(0.12) & 0.48(0.17) & 2.66(0.87) & NA(NA)     & 0.31(0.11) & 0.22(0.06) \\
200  & 0.91(0.24)    & 1.63(0.31)  & 1.02(0.09) & 0.41(0.14) & 0.28(0.08) & 1.26(0.33) & 0.28(0.08) & 0.22(0.06) \\
500  & 0.96(0.18)    & 1.31(0.14)  & 1.02(0.09) & 0.31(0.09) & 0.23(0.07) & 0.48(0.1)  & 0.27(0.07) & 0.23(0.07) \\
1000 & 0.96(0.18)    & 1.28(0.17)  & 1.05(0.08) & 0.22(0.06) & 0.21(0.04) & 0.31(0.06) & 0.23(0.05) & 0.21(0.04)
\end{tabular}
\end{table}

\begin{table}[ht]
\caption{Testing MSE's and their standard deviations for \textbf{heart} dataset. A method will not be shown if they share the model fit with another method (\gls{impurity} and \gls{knockoff}), or if the method does not produce valid result due to small sample size (\gls{GAM})}
\vskip 0.1in
\hskip -0.4cm
\scriptsize
\begin{tabular}{l|llllllllll}
\textbf{n}    & \textbf{RF-FDT (Ours)} & \textbf{RFNN (Ours)}  & \textbf{BKMR}       & \textbf{BART}       & \textbf{BAKR}       & \textbf{GAM (Ours)} & \textbf{BRR}        & \textbf{BL}         \\ \hline
50  & 0.92(0.21)    & 1.93(0.5)   & 0.98(0.16) & 0.38(0.11) & 0.4(0.1)   & NA(NA)     & 0.34(0.12) & 0.25(0.08) \\
100 & 0.95(0.27)    & 1.88(0.29)  & 1.03(0.1)  & 0.4(0.1)   & 0.49(1.58) & NA(NA)     & 0.31(0.08) & 0.23(0.06) \\
150 & 0.98(0.22)    & 1.65(0.32)  & 0.95(0.13) & 0.39(0.14) & 0.3(0.09)  & 1.76(0.37) & 0.27(0.08) & 0.21(0.08) \\
257 & 0.92(0.25)    & 1.51(0.2)   & 1.02(0.08) & 0.33(0.09) & 0.26(0.06) & 0.79(0.2)  & 0.27(0.06) & 0.23(0.06) \\
\end{tabular}
\end{table}

\begin{table}[ht]
\caption{Testing MSE's and their standard deviations for \textbf{mi} dataset. A method will not be shown if they share the model fit with another method (\gls{impurity} and \gls{knockoff}), or if the method does not produce valid result due to small sample size (\gls{GAM})}
\vskip 0.1in
\hskip -0.4cm
\scriptsize
\begin{tabular}{l|llllllllll}
\textbf{n}    & \textbf{RF-FDT (Ours)} & \textbf{RFNN (Ours)}  & \textbf{BKMR}       & \textbf{BART}       & \textbf{BAKR}       & \textbf{GAM (Ours)} & \textbf{BRR}        & \textbf{BL}         \\ \hline
100  & 1.55(0.94)    & 2.02(0.44)  & 0.88(0.24) & 0.31(0.08) & 0.79(0.43) & NA(NA)     & 0.39(0.1)  & 0.33(0.07) \\
200  & 1.76(2.86)    & 1.82(0.48)  & 0.81(0.21) & 0.27(0.11) & 0.35(0.15) & 0.61(0.27) & 0.31(0.12) & 0.3(0.11)  \\
500  & 1.16(0.37)    & 1.57(0.27)  & 0.69(0.3)  & 0.23(0.06) & 0.24(0.07) & 0.43(0.21) & 0.29(0.09) & 0.25(0.07) \\
1000 & 1.2(1.05)     & 1.43(0.3)   & 0.56(0.3)  & 0.21(0.05) & 0.23(0.05) & 0.32(0.09) & 0.25(0.06) & 0.23(0.06)
\end{tabular}
\label{error:mi}
\end{table}

\clearpage
\section{Experiment Results and Additional Figures}
\label{sec:exp_result_app}

Figures \ref{fig:cat_all}-\ref{fig:mi} and \ref{fig:tst_cat}-\ref{fig:tst_mi} show the \gls{AUROC} scores and MSE results, respectively, across all of the datasets. Here we also summarize additional observations that are not included in the main text. The figure captions contain further descriptions of the results. 





\textbf{Synthetic Benchmarks}. In the synthetic datasets, where all features are independent, FDT, RF, BART, BNN, GAM, BRR and BL perform better and more stable than they do in the real datasets where there's feature correlation. The better performance of FDT compared to \gls{impurity} and \gls{knockoff} illustrates the advantage of the proposed integrated partial derivative metric for variable selection. 
For the \textbf{synthetic-continuous} and \textbf{synthetic-mixture} cases, FDT has higher \gls{AUROC} scores across most scenarios, especially when data are generated having high complexity with quickly-varying local fluctuations (rbf, matern32). Moreover, all 11 methods perform only moderately well in complex data settings. The two tree-based methods, RF and BART also have high \gls{AUROC} scores across scenarios, since the tree-based methods naturally rank by how well the features improve the purity of the node. Note that under low dimension case ($d=25$), BKMR is comparable to FDT when $f_0 \in \Hsc$ (linear, rbf, matern32). However, when it comes to medium- or relatively high-dimension settings ($d=50, 100$), BKMR produces low \gls{AUROC} scores due to suffering from the issue of curse of dimensionality \cite{vaart_information_2011}. RFNN, also a kernel-based method, has similar trend as BKMR. Finally, BAKR performs consistently poorly and has lowest \gls{AUROC} scores in relatively low-dimension setting ($d=25, 50$). Linear models (GAM, BRR and BL) achieve comparable or superior performance under the linear data setting. However, for more complicated data generation functions, BRR and BL consistently perform poorly with low \gls{AUROC} scores. 

\textbf{Socio-economic and Healthcare Datasets} In the \textbf{adult}, \textbf{heart} and \textbf{mi} cases, where the features are correlated, the performances of all 11 methods are worse than in the \textbf{synthetic-mixture} and \textbf{synthetic-continuous} cases (where the features are independent). Their performance tends to saturate earlier and are less stable with respect to the sample size. In relatively low-dimension settings ($d=25, 50$), the standard methods such as BART and BNN have higher \gls{AUROC} scores than FDT. However, when the dimension is higher ($d=100$), FDT consistently performs better. 

\newpage
\begin{figure*}[ht]
\vskip 0.2in
\begin{center}
\centerline{\includegraphics[width=1.0\textwidth]{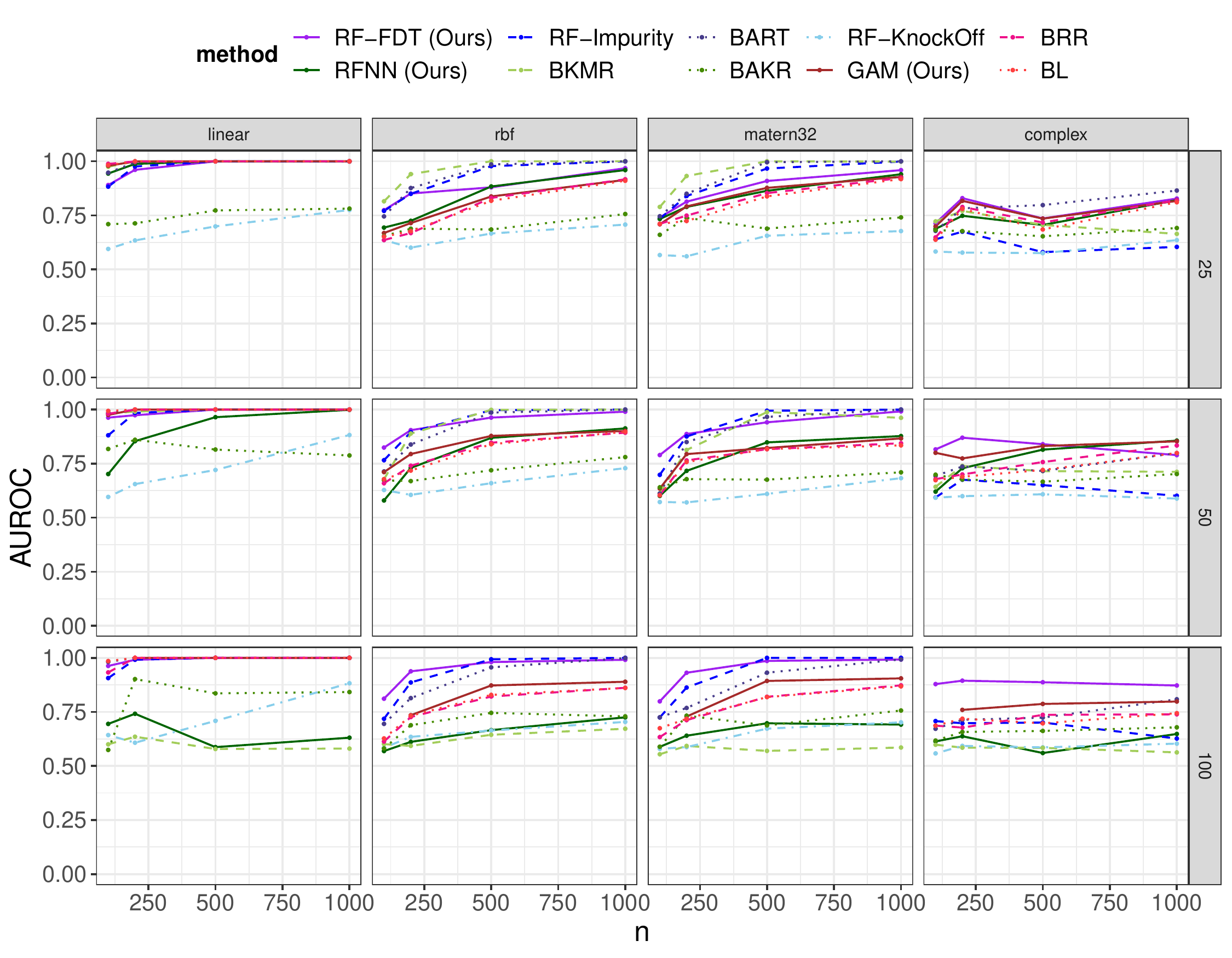}}
\caption{AUROC scores for \textbf{synthetic-mixture} data. FDT generally outperforms other methods in most of the data settings in relatively higher dimension ($d=50, 100$). Knockoff with random forest statistics produce lower \gls{AUROC} scores than in \textbf{synthetic-continuous}, even in linear data settings. Additive models BRR, BL and GAM have mediocre scores under the nonlinear settings. Some model (e.g., GAM) reports missing result in $n>p$ setting due to the restriction of their implementations.}
\label{fig:cat_all}
\end{center}
\vskip -0.2in
\end{figure*}

\begin{figure*}[ht]
\vskip 0.2in
\begin{center}
\centerline{\includegraphics[width=1.0\textwidth]{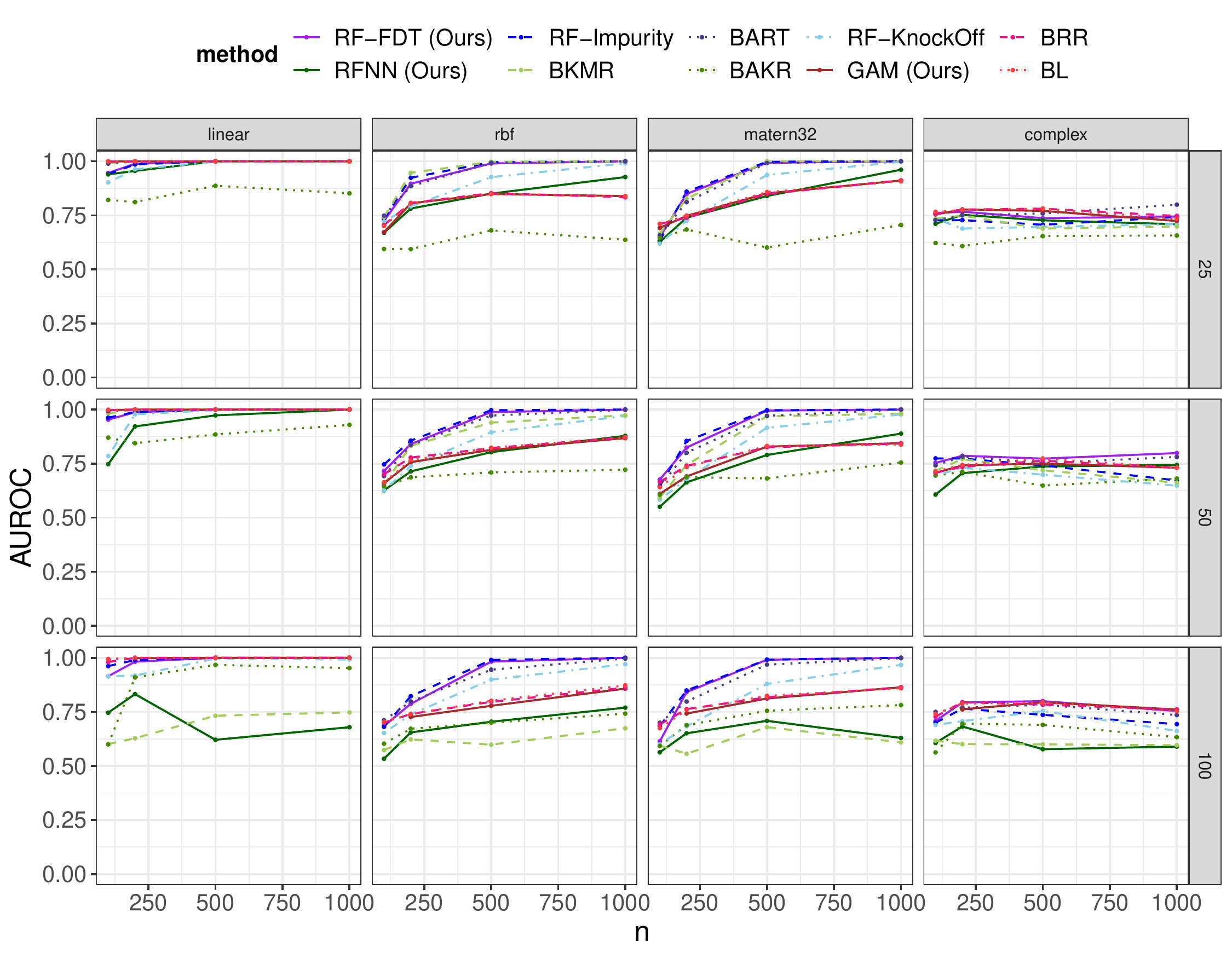}}
\caption{AUROC scores for \textbf{synthetic-continuous} data. FDT generally outperforms other methods in most of the data settings, with BKMR as the comparable one when $d=25$. However, BKMR performs poorly in higher dimension. Tree-based methods RF, BART and Knockoff with random forest statistics have high \gls{AUROC} scores. The performance of BNN is between tree-based methods and additive models while BAKR performs poorly consistently. Additive models BRR, BL and GAM have mediocre scores under the nonlinear settings. Some model (e.g., GAM) reports missing result in $n>p$ setting due to the restriction in their implementations.}
\label{fig:cont_all}
\end{center}
\vskip -0.2in
\end{figure*}

\begin{figure*}[ht]
\vskip 0.2in
\begin{center}
\centerline{\includegraphics[width=1.0\textwidth]{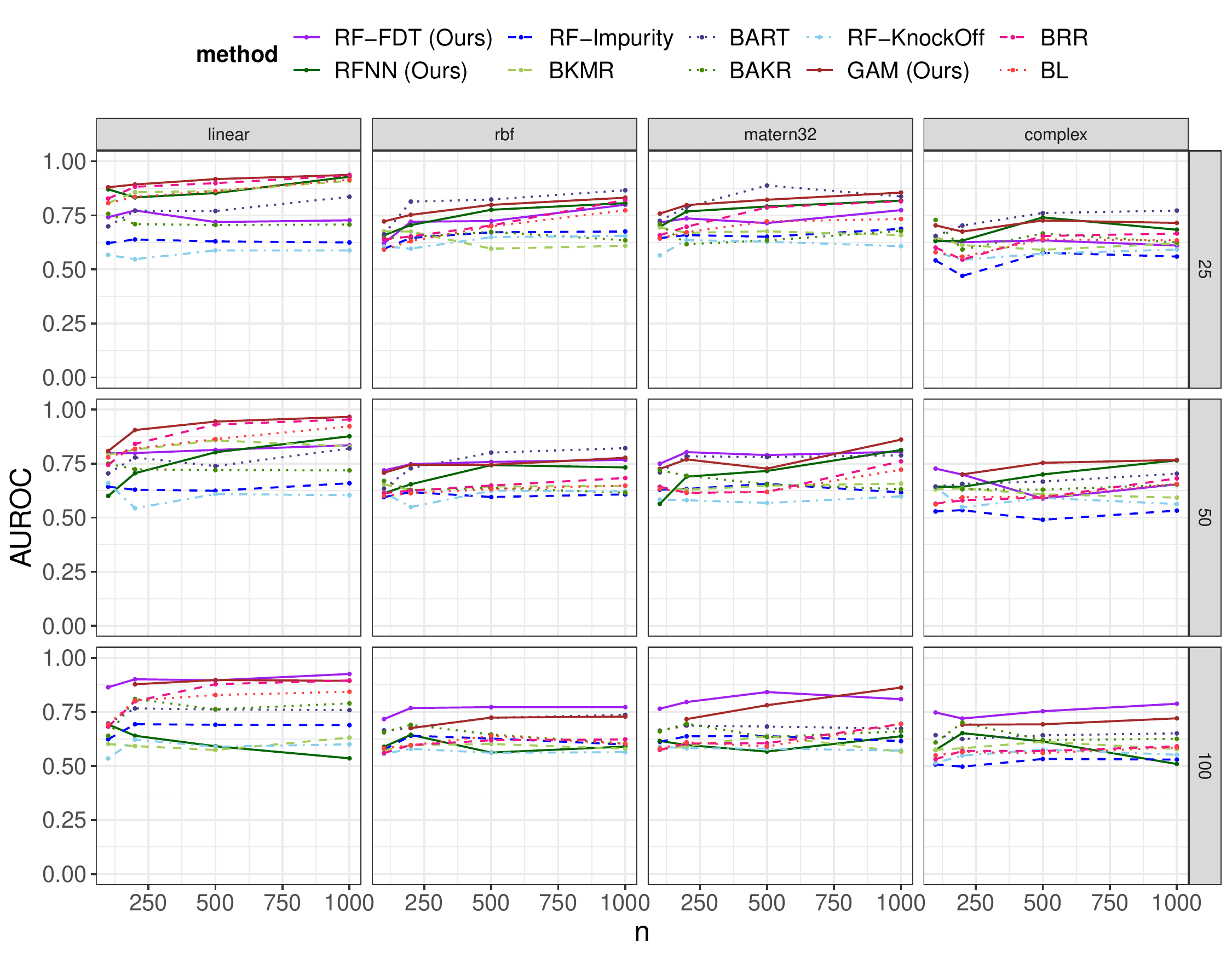}}
\caption{AUROC scores for \textbf{adult} data. In relatively low-dimension settings ($d=25, 50$), the standard methods such as BART and BNN have higher \gls{AUROC} scores than FDT. However, when the dimension is higher ($d=100$), FDT performs better consistently. Some model (e.g., GAM) reports missing result in $n>p$ setting due to the restriction in their implementations.}
\label{fig:adult}
\end{center}
\vskip -0.2in
\end{figure*}

\begin{figure*}[ht]
\vskip 0.2in
\begin{center}
\centerline{\includegraphics[width=1.0\textwidth]{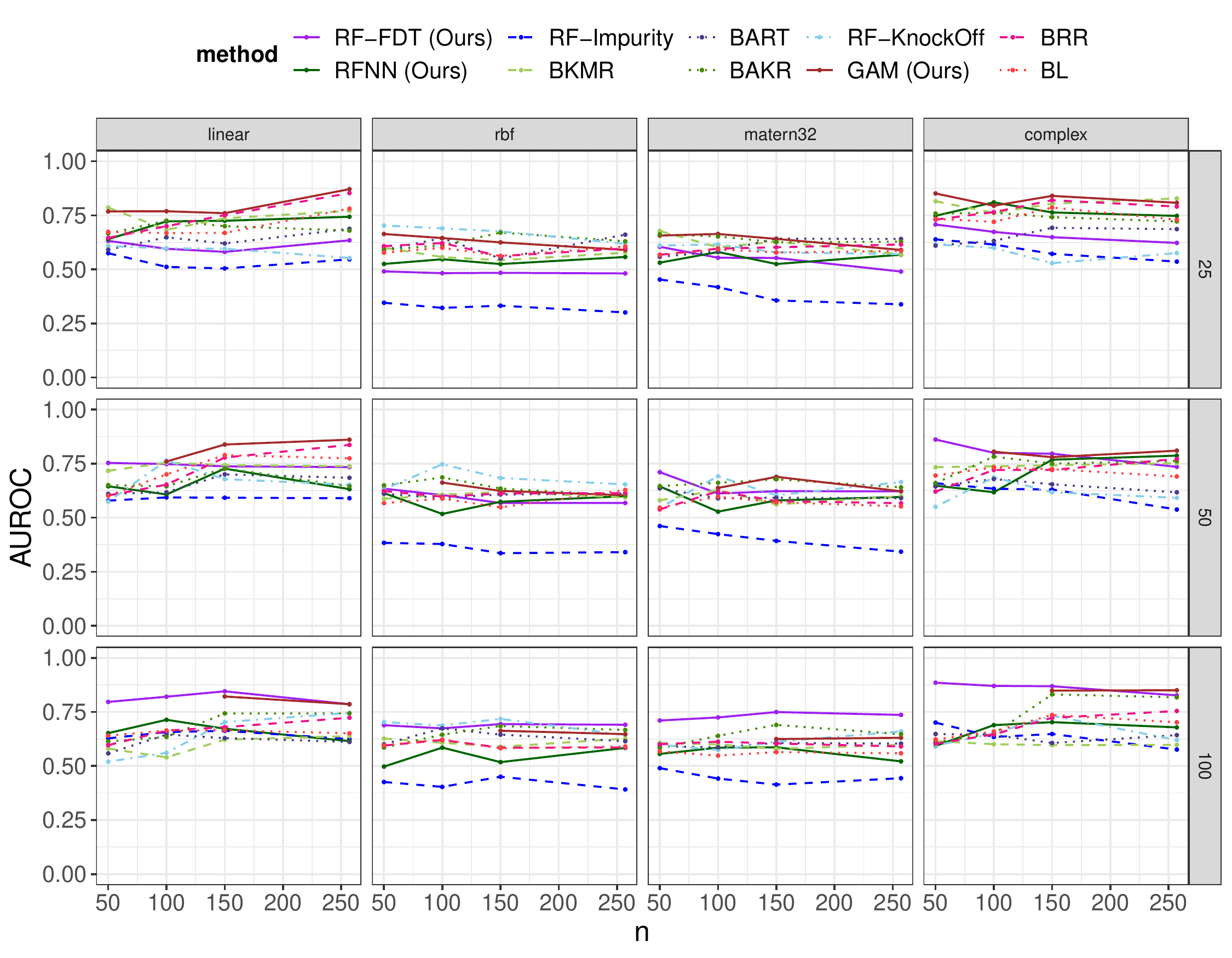}}
\caption{AUROC scores for \textbf{heart} data. In relatively low-dimension settings ($d=25, 50$), the standard methods such as BART and BNN have higher \gls{AUROC} scores than FDT. However, when the dimension is higher ($d=100$), FDT performs better consistently. Some model (e.g., GAM) reports missing result in $n>p$ setting due to the restriction in their implementations.}
\label{fig:heart}
\end{center}
\vskip -0.2in
\end{figure*}

\begin{figure*}[ht]
\vskip 0.2in
\begin{center}
\centerline{\includegraphics[width=1.0\textwidth]{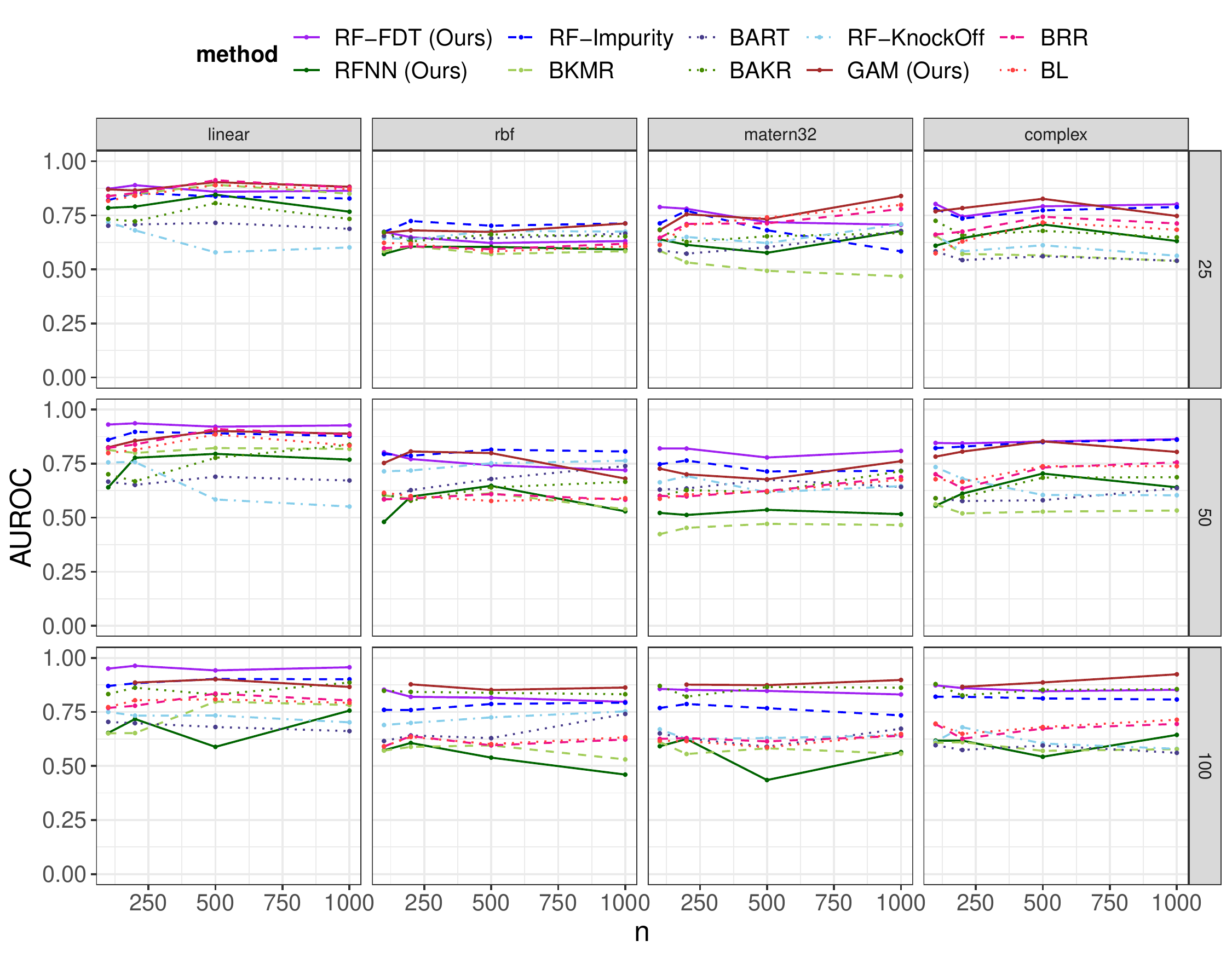}}
\caption{AUROC scores for \textbf{mi} data. In low-dimension setting ($d=25$), the standard methods such as BART and BNN have higher \gls{AUROC} scores than FDT. However, when the dimension is higher ($d=50, 100$), FDT and GAM perform better consistently. Some model (e.g., GAM) reports missing result in $n>p$ setting due to the restriction in their implementations.}
\label{fig:mi}
\end{center}
\vskip -0.2in
\end{figure*}

\begin{figure*}[ht]
\vskip 0.2in
\begin{center}
\centerline{\includegraphics[width=1.0\textwidth]{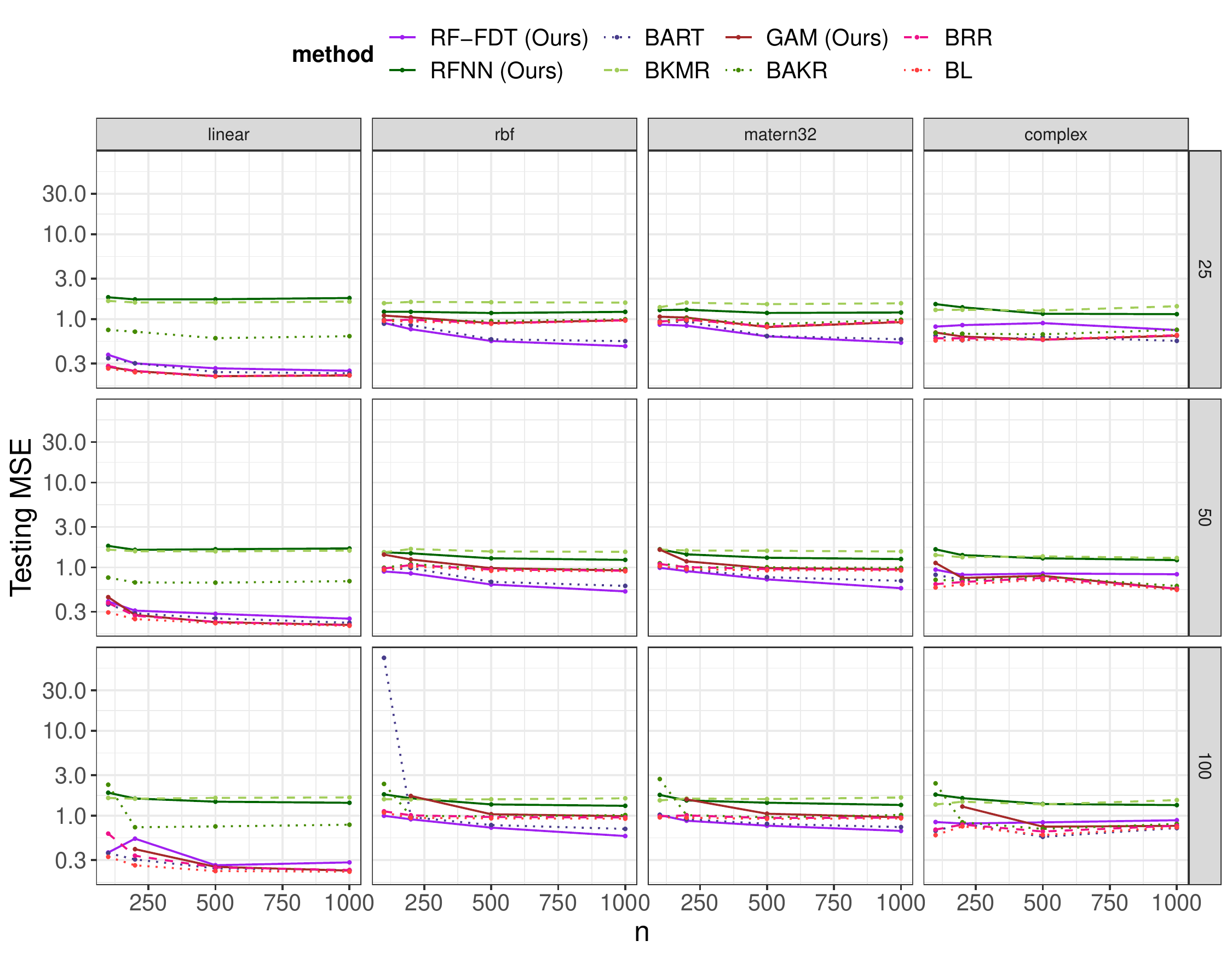}}
\caption{Testing MSE for \textbf{synthetic-mixture} data. FDT generally performs better or competitively with baselines, except in the \texttt{linear} case where \textbf{BL} unsurprisingly does best. \textbf{BKMR} consistently performs worse than other methods, except in the low data, high dimension setting when \textbf{BAKR} performs worst. Some model (e.g., GAM) reports missing result in $n>p$ setting due to the restriction in their implementations. Notice that this dataset contains a setting $n=p$, which can lead to the double descent phenonmenon for some random-feature-based models \citep{d2020triple}.}
\label{fig:tst_cat}
\end{center}
\vskip -0.2in
\end{figure*}

\begin{figure*}[ht]
\vskip 0.2in
\begin{center}
\centerline{\includegraphics[width=1.0\textwidth]{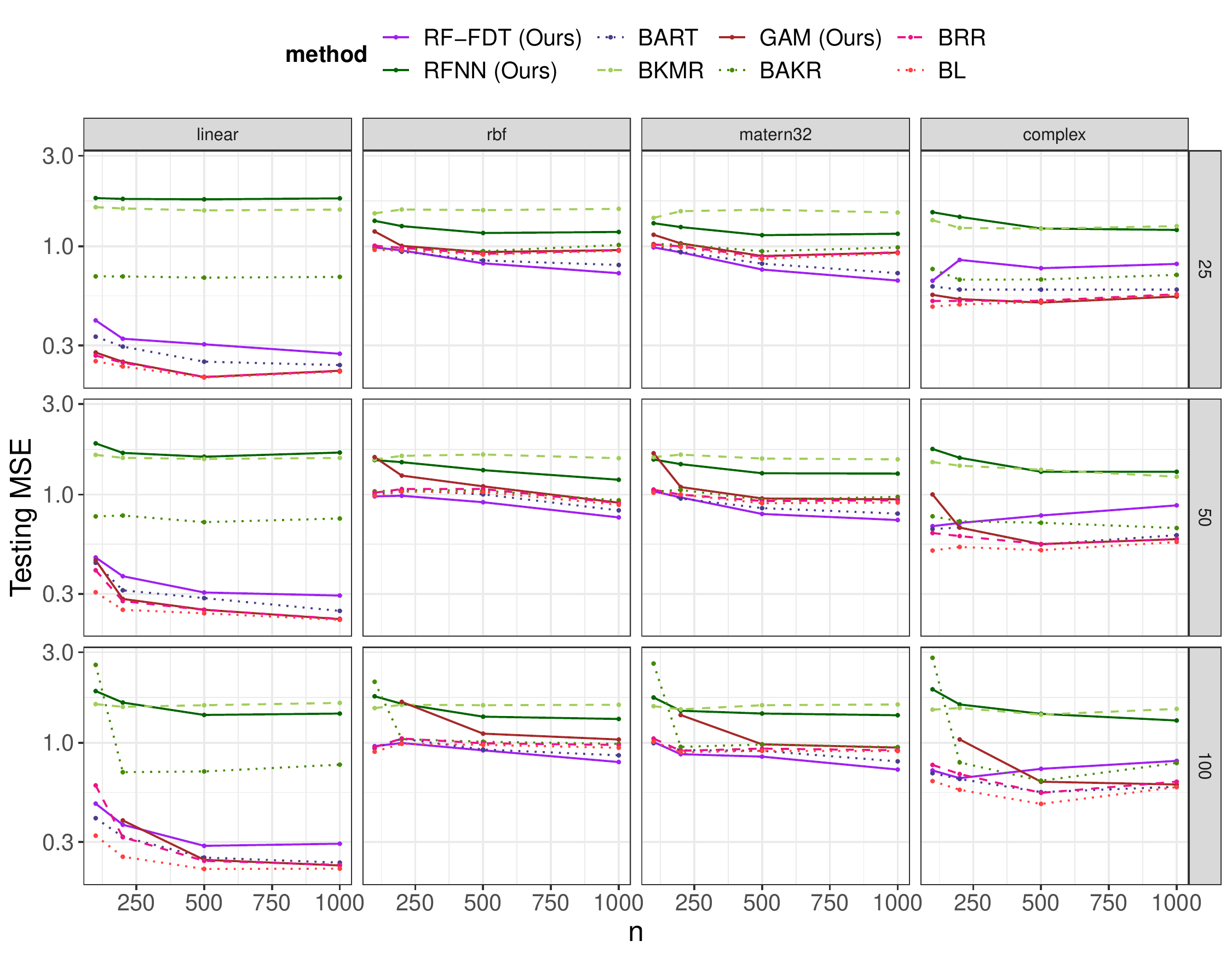}}
\caption{Testing MSE for \textbf{synthetic-continuous} data. A method will not be shown if they share the model fit with another method (\gls{impurity} and \gls{knockoff}), or if the method does not produce valid result due to small sample size (\gls{GAM}).}
\label{fig:tst_cont}
\end{center}
\vskip -0.2in
\end{figure*}

\begin{figure*}[ht]
\vskip 0.2in
\begin{center}
\centerline{\includegraphics[width=1.0\textwidth]{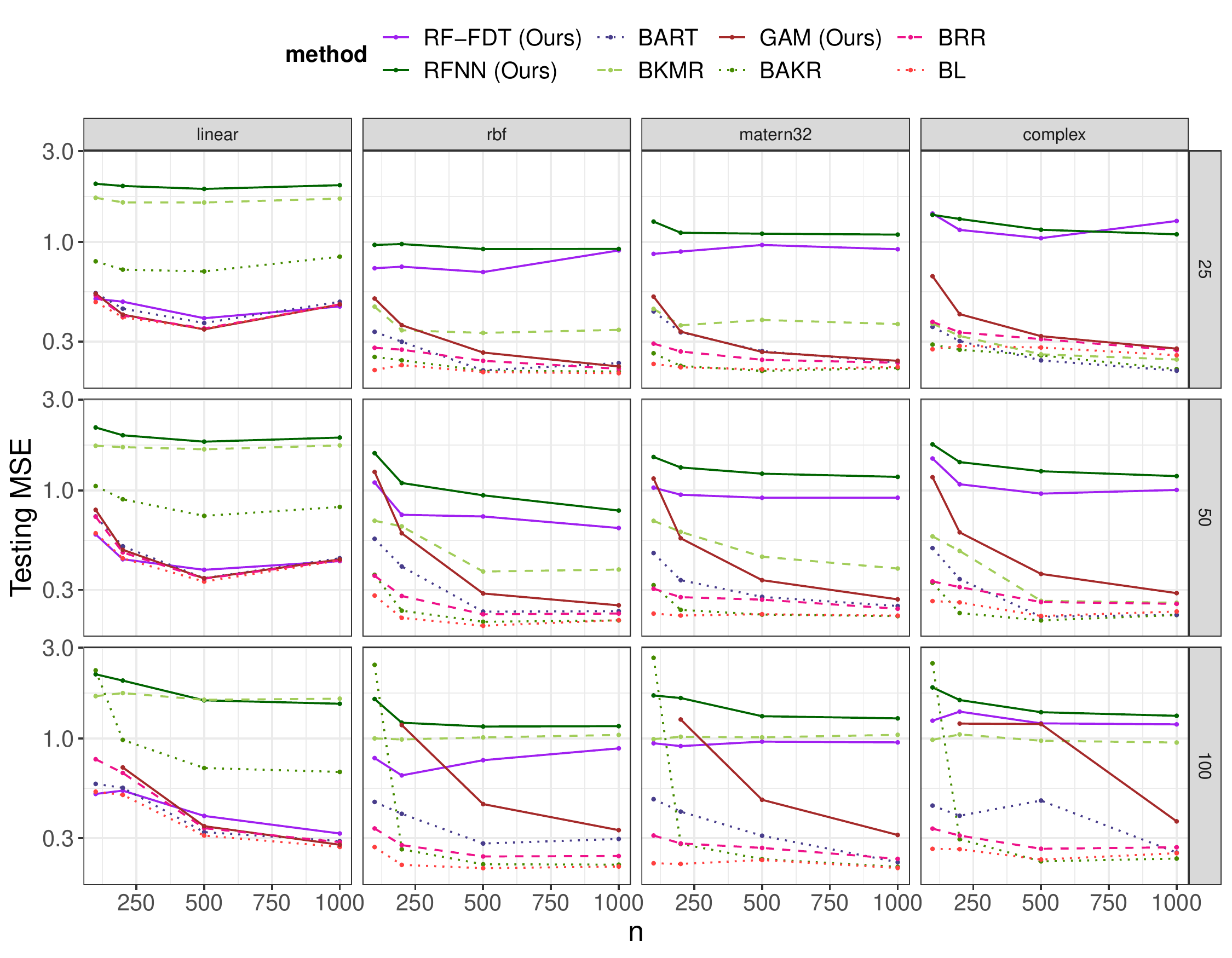}}
\caption{Testing MSE for \textbf{adult} data. A method will not be shown if they share the model fit with another method (\gls{impurity} and \gls{knockoff}), or if the method does not produce valid result due to small sample size (\gls{GAM}).}
\label{fig:tst_adult}
\end{center}
\vskip -0.2in
\end{figure*}

\begin{figure*}[ht]
\vskip 0.2in
\begin{center}
\centerline{\includegraphics[width=1.0\textwidth]{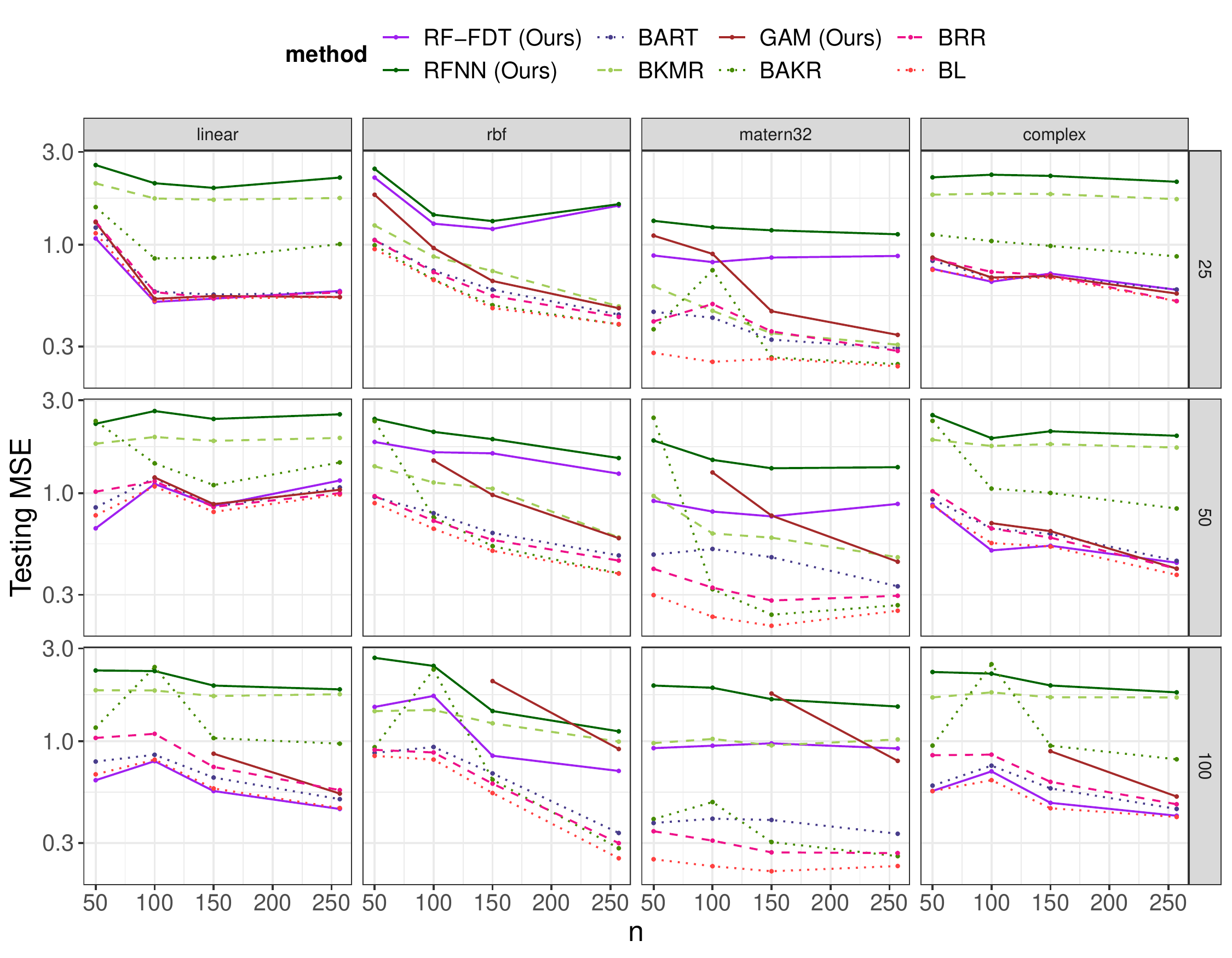}}
\caption{Testing MSE for \textbf{heart} data. A method will not be shown if they share the model fit with another method (\gls{impurity} and \gls{knockoff}), or if the method does not produce valid result due to small sample size (\gls{GAM}). Notice that this dataset contains a setting $n=p$, which can lead to the double descent phenonmenon for some random-feature-based models \citep{d2020triple}.}
\label{fig:tst_heart}
\end{center}
\vskip -0.2in
\end{figure*}

\begin{figure*}[ht]
\vskip 0.2in
\begin{center}
\centerline{\includegraphics[width=1.0\textwidth]{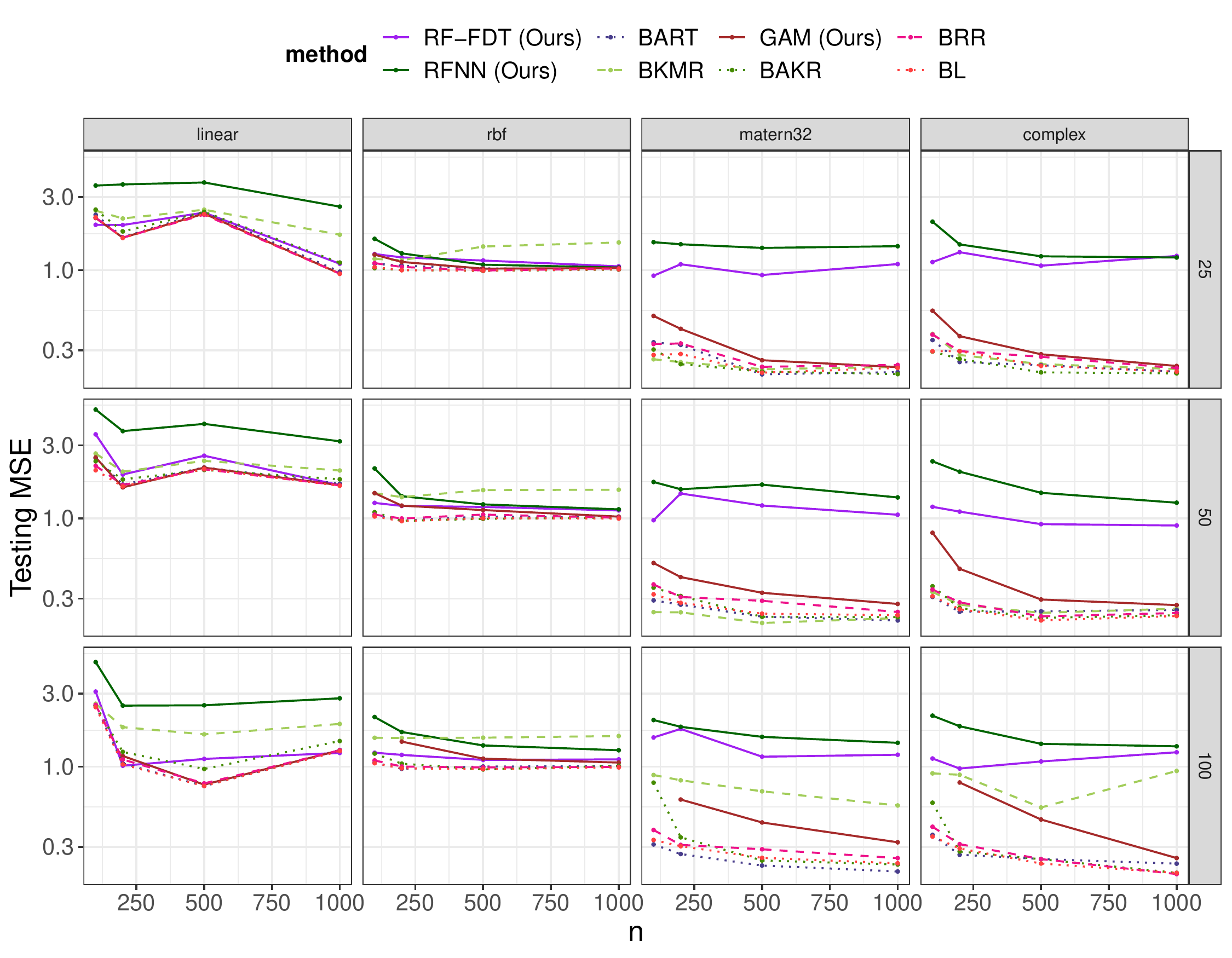}}
\caption{Testing MSE for \textbf{mi} data. A method will not be shown if they share the model fit with another method (\gls{impurity} and \gls{knockoff}), or if the method does not produce valid result due to small sample size (\gls{GAM}).}
\label{fig:tst_mi}
\end{center}
\vskip -0.2in
\end{figure*}

\clearpage
\section{Additional Experiments: Regularization Path for Bangladesh birth cohort study}
\label{sec:exp_bang}

We propose a way to visualize the selection path that incorporates the uncertainty of variable importance scores. Specifically, we consider the posterior survival function $S(s) = P(\psi_j > s), j=1, \ldots, d$ for increasing $s$ starting from $0$. Larger value of $S(s)$ indicates larger probability of that certain feature being relevant. This is analogous to the regularization path under the LASSO method. However, our approach incorporates posterior uncertainty, and does not require repeated model fitting at different levels of regularization strength \cite{mairal2012complexity}. 

We apply this to Bangladesh birth cohort study \citep{kile2014prospective} (a well-established dataset in the environmental health literature), where we fit models to learn the association between infant's neural development scores and key environmental factors such as hospital location (\texttt{clinic}), sex (\texttt{sex}), levels of macro nutrient intake (\texttt{prot}, \texttt{fat}, \texttt{carb}, \texttt{fib}, \texttt{ash}) and levels of  measured concentration of environmental toxins in body fluids (\texttt{as$\_$ln}, \texttt{mn$\_$ln}, \texttt{pb$\_$ln}), while controlling for other socio-economic and biological factors (family income, parent education levels, etc). In general, the level of macro-nutrient intake (in particular fiber and protein) indicates a  child's general nutrition status (i.e., whether he/she is eating well), and is known to be positively associated with neural development. On the other hand, the existing studies in the Bangladesh population have established a neurotoxic effect between arsenic exposure (i.e., \texttt{as$\_$ln}), through drink water) on the early-stage cognitive development \citep{hamadani2011critical}, as well as weak but significant effect of the joint mixture of other environmental toxins (manganese (\texttt{mn$\_$ln}) and lead (\texttt{pb$\_$ln}))) \citep{gleason2014contaminated, valeri2017joint}. Furthermore, due the fact that the model has already controlled for biological and socio-economic confounding factors, non-nutrient-related factors such as hospital location and sex should not have a significant effect on the  children's neural development status.

The variable selection result is shown in \cref{fig:reg_bang}, where we plot the posterior survival function $P(\psi_j > s)$ for $s \in (0, 1)$, and compare it to the survival function under \glsfirst{BAKR}, \glsfirst{BRR}, \glsfirst{BL}, and also the frequentist LASSO regularization path under the GAM model. We normalized all variable importance scores within the range $(0, 1)$. As a result, the variable selection performance is indicated by the relative magnitudes of the area under the curve for each variable (and not by the absolute magnitude due to the normalization).

As shown in \cref{fig:reg_bang}, the top variables selected by our method (FDT) correspond well with existing conclusions in the literature: it correctly picked up the larger impact of macro-nutrients (in particular, fibre, fat and protein) and smaller but still significant effects of environmental toxins (arsenic, manganese and lead), also notice that it ranked known non-causal factors such as hospital location and sex to be the lowest. In comparison, the linear methods (\textbf{GAM}, \gls{BRR} and \gls{BL}) all incorrectly reported high effect from hospital location on children's neural developement outcome (likely due to their restrict model form), while the nonlinear model (BAKR, based on RBF kernel) did not properly pick up the effect of environmental toxins.

\begin{figure*}[ht]
\begin{center}
\centerline{\includegraphics[width=1.0\textwidth]{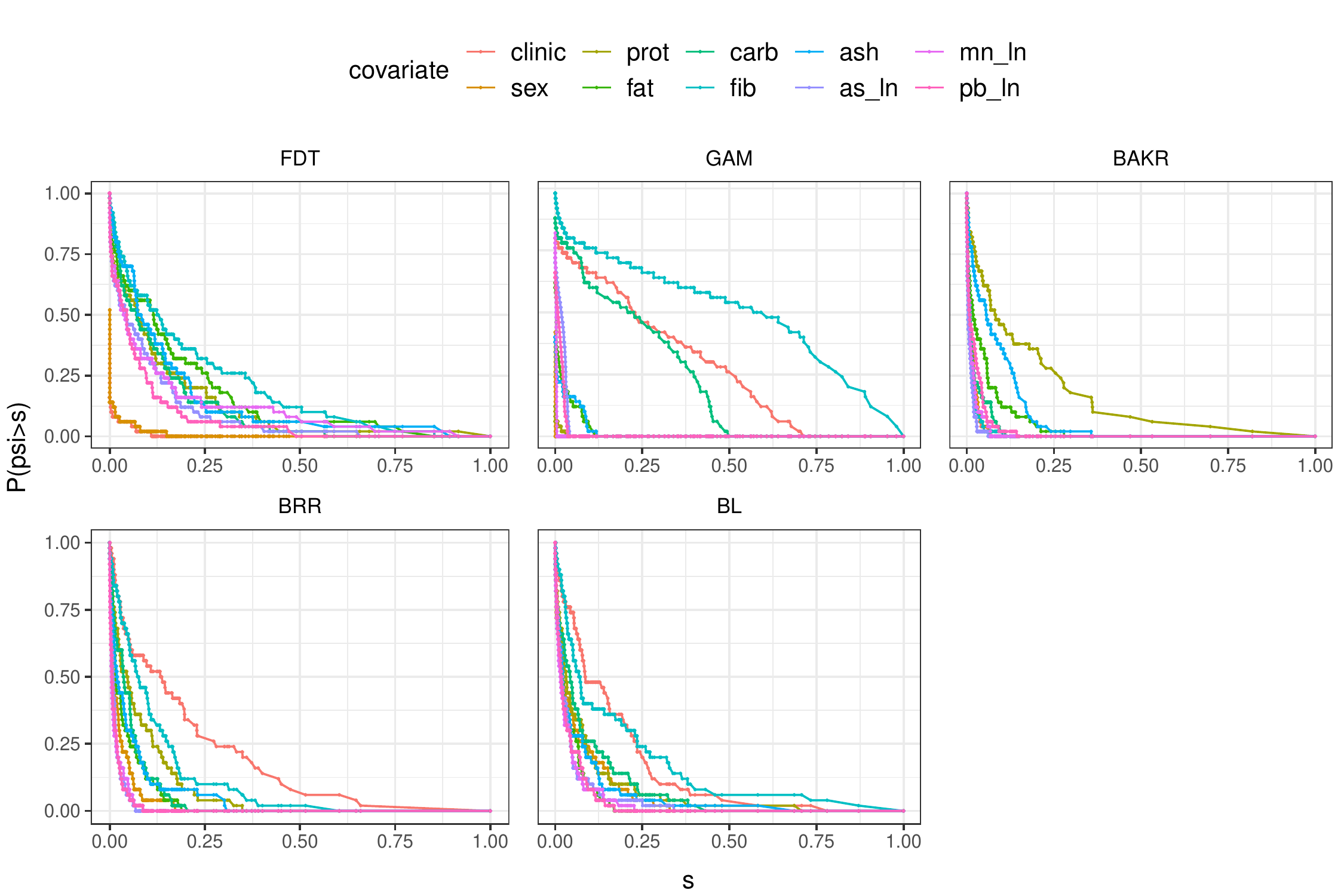}}
\caption{Regularization path for Bangladesh birth cohort study. The top variables selected by our method (FDT) correspond well with established toxicology pathways in the literature.}
\label{fig:reg_bang}
\end{center}
\vskip -0.2in
\end{figure*}


\end{document}